\def\eqref#1{(\ref{#1})}
\def\1{\bm{1}}
\DeclareMathAlphabet{\mathsfit}{\encodingdefault}{\sfdefault}{m}{sl}
\SetMathAlphabet{\mathsfit}{bold}{\encodingdefault}{\sfdefault}{bx}{n}
\newcommand{\E}{\mathbb{E}}
\newcommand{\R}{\mathbb{R}}
\def\E{\mathbb{E}}
\newtheorem{Lemma}{Lemma}
\newtheorem{Theorem}{Theorem}
\newtheorem{assumption}{Assumption}
\newtheorem*{Lemma*}{Lemma}
\newtheorem*{Theorem*}{Theorem}
\newtheorem*{Corollary*}{Corollary}
\newcommand{\beq}{\begin{equation}}
\newcommand{\eeq}{\end{equation}}
\newcommand{\eqdef}{\mathrel{\mathop:}=}
\def\EE{\mathbb{E}}
\newcommand{\norm}[1]{\left\Vert #1 \right\Vert}
\newcommand{\q}{q_{\omega}}
\newcommand{\squeeze}{\textstyle}
\newcommand{\squeezevspace}{\vspace{-0.5em}}
\newcommand{\method}{\textsc{MicroAdam}}
\renewcommand{\paragraph}[1]{\noindent \textbf{#1}}
\newcommand{\change}[1]{\textcolor{black}{#1}}
\title{\method{}: Accurate Adaptive Optimization with Low Space Overhead and Provable Convergence}
\def\FV@DefineFindStop{%
  \ifx\FancyVerbStopString\relax
    \ifnum\FancyVerbStopNum<\@ne
      \let\FV@FindStartStop\FV@@PreProcessLine
    \else
      \let\FV@FindStartStop\FV@FindStopNum
    \fi
  \else
    \let\FV@FindStartStop\FV@FindStopString
  \fi}
\author{%
    \textbf{Ionut-Vlad Modoranu}$^{1}$\thanks{Correspondence to \texttt{ionut-vlad.modoranu@ista.ac.at}} \qquad Mher Safaryan$^{1}$ \qquad Grigory Malinovsky$^2$ \qquad Eldar Kurtic$^1$ \qquad Thomas Robert$^{1}$ \qquad Peter Richtárik$^2$ \qquad Dan Alistarh$^1$ \\
    $^1$Institute of Science and Technology Austria (ISTA) \\
    $^2$King Abdullah University of Science and Technology (KAUST)
}
\begin{document}
\maketitle

\begin{abstract}
    We propose a new variant of the Adam optimizer~\citep{kingma2014adam} called \method{} that specifically minimizes memory overheads, while maintaining theoretical convergence guarantees. 
    We achieve this by compressing the gradient information before it is fed into the optimizer state, 
    thereby reducing its memory footprint significantly. 
    We control the resulting compression  error via a novel instance of the classical \emph{error feedback} mechanism from distributed optimization~\citep{2014-seide, 2018-alistarh, 2019-karimireddy} in which \emph{the error correction information is itself compressed} to allow for practical memory gains. 
    We prove that the resulting approach maintains theoretical convergence guarantees competitive to those of AMSGrad, while providing good practical performance. 
    Specifically, we show that \method{} can be implemented efficiently on GPUs: on both million-scale (BERT) and billion-scale (LLaMA) models, \method{} provides practical convergence competitive to that of the uncompressed Adam baseline, with  lower memory usage and similar running time. \change{Our code is available at \href{https://github.com/IST-DASLab/MicroAdam}{https://github.com/IST-DASLab/MicroAdam}.}

\end{abstract}

\squeezevspace
\section{Introduction}\label{section:introduction}
\squeezevspace

The Adam~\citep{kingma2014adam} adaptive optimizer and its variants~\citep{reddi2019convergence, ADAMW} has  emerged as a dominant choice for training deep neural networks (DNNs), especially in the case of large language models (LLMs) with billions of parameters. 
Yet, its versatility comes with the drawback of substantial memory overheads: relative to naive SGD-based optimization, the Adam optimizer states doubles the memory overhead, as it requires storing two additional parameters for each variable. 
For large-scale models, these memory demands pose a significant challenge. In turn, this has spurred  research into memory-efficient adaptive optimizers, such as AdaFactor~\citep{shazeer2018adafactor},  8-bit Adam~\citep{dettmers8bit}, or the very recent  GaLore~\citep{zhao2024galore} low-rank projection approach. 
Despite their popularity and practical utility, the above methods lack rigorous convergence guarantees, and often trade off memory reductions with decreased convergence in practice. This raises the question of whether it is possible to design adaptive optimizers that are not only memory-efficient, but also maintain strong theoretical and practical performance metrics. 

\paragraph{Contributions.} In this paper, we address this gap by introducing \method{}, an adaptive optimizer which guarantees low memory usage but also ensures provable convergence. 
\change{\textit{We develop our approach to improve the performance of  finetuning LLMs} and mainly focus on the research question ``are all gradient entries important for optimization?'' To answer this question,} we start from the idea that we can allow the (lossy) sparse projection of gradient information before it enters the optimizer states; crucially, different from prior work, we ensure convergence by correcting for the inherent error due to compression by employing a novel variant of \emph{error correction}, a mechanism introduced for distributed optimization~\citep{2014-seide}. 
However, simply using error feedback would not lead to memory savings, since the size of the error correction buffer is comparable to the that of the original optimizer state. Instead, our main algorithmic innovation is in showing that the error feedback \emph{can itself be compressed} in the context of adaptive optimization. This renders the memory overhead of error feedback negligible, while preserving convergence guarantees. 

Specifically, on the theoretical side, we provide a new analysis showing that, under reasonable assumptions on the loss function being optimized and on the degree of compression, \method{} provably guarantees convergence, at asymptotically the same rate as AMSGrad~\citep{zhou2024on}, i.e. a version of Adam with general convergence guarantees, \change{that fixes a fundamental technical issue in the Adam optimizer's proof ~\citep{reddi2019convergence}.}
The key finding is that our approach allows for the  overhead of compression to be shifted to the higher-order terms, where it should  not impact practical convergence in common cases. 
This claim holds both for general smooth non-convex functions, and for non-convex functions under the Polyak-Lojasiewicz (PL) assumption, highlighting a trade-off between the degree of compression of the gradients, and that of the error feedback. 

We complement our algorithmic and analytic results with an efficient GPU implementation of \method{}, which we validate for fine-tuning language models from the BERT~\citep{devlin2018bert}, OPT~\citep{zhang2022opt} and LLaMA~\citep{touvron2023llama} families, with  hundreds of millions to billions of parameters. 
We show that, in practice, gradients can be projected to very high sparsity (99\%), while the error correction can be stored at 4 bits per component, without loss of convergence. 
Concretely, our method can significantly improve upon the memory footprint of the extremely popular 8bit Adam~\citep{dettmers8bit} when fine-tuning models such as LLaMA2-7B\change{/13B}~\citep{touvron2023llama}, at similar or better accuracy. 
At the same time, \method{} provides better accuracy relative to high-compression heuristics such as GaLore~\citep{zhao2024galore}. 

In summary, we provide a new theoretically-grounded approach to memory-efficient adaptive optimization, which has the advantage of providing both  theoretical guarantees and good practical convergence, while being scalable to billion-parameter models. \method{} could therefore serve as a useful new tool for accurate and memory-efficient optimization of large models.

\squeezevspace
\section{Related Work}\label{section:related-work}
\squeezevspace

We mainly focus on related work reducing the cost of optimizer states. \cite{dettmers8bit} considers this problem, specifically by performing fine-grained quantization of the optimizer states. Their work does not alter the Adam algorithm; instead, it deals with the challenge of accurately compressing the dynamically-changing meta-data sequence. As the name suggests, the space savings correspond to roughly halving the memory required by the optimizer states, relative to FP16. 
In the same vein, AdaFactor~\citep{shazeer2018adafactor} and CAME~\citep{luo2023came} reduce memory cost by factorizing the second-order statistics, while the recent GaLore~\citep{zhao2024galore} factorizes the gradients themselves before they enter the optimizer state (but does not use error correction). 
Importantly, these methods are \emph{heuristics}: they do not provide  theoretical guarantees under standard assumptions,\footnote{GaLore~\citep{zhao2024galore} does state convergence guarantees for a variant of the algorithm with fixed projections, but this is under a strong ``stable rank'' assumption, which may not hold in practice.} and in practice require careful tuning to preserve convergence~\citep{luo2023came}. 
By contrast, our method is theoretically justified, and provides good practical convergence. 
Earlier work by~\cite{anil2019memory} provides convergence guarantees for a compressed variant of Adagrad~\citep{Duchi2010AdaptiveSM} called SM3, improving upon earlier work by~\cite{spring2019compressing}. 
However, it is not clear how to extend their approach to the popular Adam optimizer, and heuristic methods appear to provide superior performance~\citep{luo2023came}. 

Conceptually, our work is related to error feedback mechanisms studied in distributed optimization, e.g.~\citep{2014-seide, 2018-alistarh, 2019-karimireddy, richtarik2021ef21}. 
Specifically,~\cite{li2022distributed} proved convergence of AdaGrad-like algorithms in conjunction with error feedback, in a distributed environment.  
Our focus is different: minimizing memory costs in the single-node setting: for this, \textit{we show that the error correction buffer can itself be compressed.} We provide an analysis for the resulting new algorithm, and efficient CUDA implementations. 

More broadly, scaling adaptive or second-order optimizers to large models is a very active area. 
Works such as GGT~\citep{agarwal2019efficient}, Shampoo~\citep{gupta2018shampoo} and M-FAC~\citep{2021-MFAC} provided quadratic-space algorithms that are still feasible to execute for moderate-sized DNNs, but will not scale for billion-parameter models. 
Follow-up work such as AdaHessian~\citep{yao2020adahessian}, Sophia~\citep{liu2023sophia}, Sketchy~\citep{Sketchy} and EFCP~\citep{modoranu2023error}, scaled these approaches via additional approximations. 
Of these, the closest work to ours is EFCP, which uses sparsification plus standard error feedback to compress the gradient window employed in the Fisher approximation of the Hessian. 
However, EFCP does not compress the error accumulator, assumes a different optimization algorithm (Natural Gradient~\citep{amari2016information}), lacks convergence guarantees, and does not scale to billion-parameter models. 

\squeezevspace
\section{The \method{} Algorithm}\label{section:microadam}
\squeezevspace

\paragraph{Notation.} We consider a standard Adam-type algorithm, which we will augment for memory savings. We will use $f$ for the loss function, $d$ for the model size, $k$ for the gradient density (sparsity $d-k$), $\theta_t$ and $g_t$ for the model parameters and gradient at step $t$ respectively, $\eta_t$ for the learning rate, $\lambda$ for the weight decay parameter, $m_t$ and $v_t$ for the first and second moment of the gradient, $\epsilon$ for the numerical stability constant, $\beta_1$ and $\beta_2$ for the momentum coefficients for $m_t$ and $v_t$ respectively. Furthermore, we use $e_t$ for the error feedback (EF) vector, $b$ the number of bits for EF quantization, $m$ for the sliding window size, $\mathcal{G}=(\mathcal{I}, \mathcal{V})$ for the sliding window of size $m \times k$ that stores indices $\mathcal{I}$ and values $\mathcal{V}$ selected by the Top-K operator.





\paragraph{Algorithm Description.} We provide pseudocode in Algorithm~\ref{algorithm:micro-adam} and highlight the parts related to error feedback quantization in blue. The main idea  is to compress the gradients via TopK sparsification before they enter the optimizer state, and to correct for the inherent error by applying error feedback $e_t \in \mathbb{R}^d$. \textit{Instead of storing the optimizer state directly, we maintain a ``sliding window'' of highly-sparse past gradients and dynamically re-compute the Adam statistics at each step based on this window.}
Yet, this alone does not improve space, as the  error buffer partially negates the benefits of gradient compression. Instead, we prove that the error feedback accumulator can itself be compressed via quantization.

\change{In detail, at step $t=1$, the error feedback $e_1$ is completely zero, as initialized in line 2, and thus, at line 5 the accumulator $a_1$ will only contain the stochastic gradient $g_1$. At line 6, we perform the Top-K compression and only keep the top-$1\%$ of values $\mathcal{V}_1$ and their corresponding indices $\mathcal{I}_1$. The compression is equivalent to choosing the top-$1\%$ values in the left and right tails (outliers) due to the absolute value we apply on top of accumulator $a$. At line 7, we remove the outliers from the accumulator because they will be transferred to the buffer matrix $\mathcal{G}$. This step is equivalent to $e \gets a - T_k(a)$ found in theoretical results. After line 7, what is left in $a$ is called the error feedback (e.g. the weights which were not chosen by Top-k). At line 8, we compute the statistics $\delta$ and $\Delta$ needed for quantization, and at line 9, we effectively quantize the accumulator (e.g. error feedback after line 7). At line 10 we update the buffer $\mathcal{G}$, in lines 11, 12 and 13 we compute the statistics $\hat{m}, \hat{v}$ (computed by squaring the entries of $\mathcal{G}$ element-wise) and update the model parameters.}

\change{For steps $t \geq 2$, the only change compared to $t=1$ is that error feedback $e$ is not zero anymore. Since the error is compressed, we need to decompress it and add it to the gradient. This process happens at line 5 and it is the point where we feed back the error: the accumulator will store the gradient whose direction is corrected by the error (e.g. the cumulative history of weights not chosen by Top-k at the previous steps).}




\change{\paragraph{Properties and Limitations.}} We would like to point out that the resulting update $u_t = m_t / (\epsilon + \sqrt{v_t})$ will always be highly sparse when the window size $m$ is small. For illustration, if we use density $k=d/100$ (e.g. $1\%$ density equivalent to $99\%$ sparsity) with $m=10$ and suppose that all rows in the indices matrix $\mathcal{I}$ are disjoint, then the overall density in the update $u_t$ will be $90\%$. The sparsity of $u_t$ increases if rows in $\mathcal{I}$ have common values. \change{\method{} yields good results for LLM finetuning and pre-training computer vision models, as the experimental section shows. However, we noticed the update $u_t$ of \method{} is too sparse to be able to provide good enough updates for LLM pre-training. We believe this happens because the attention layers must receive dense updates to be able to learn the correlations between words.}

\paragraph{Dynamic Statistics.} \change{In \textsc{AdamStats} procedure in Algorithm \ref{algorithm:procedures} we implement the unrolled recursion of momentum $z_t \gets \beta z_{t-1} + (1-\beta) g_t$ for the last $m$ sparse gradients as $z_t \gets (1-\beta)\sum_{i=t-m}^t \beta^{t-i} g_i$ and we also perform the bias correction in the end. Because we compute $\hat{m}_t$ and $\hat{v}_t$ using the last $m$ sparse gradients in the window, in line 4 we dynamically determine the exponent $r$ for the decay factor $\beta^r$ based on the current optimization step $t$, $i^{th}$ row of the circular buffer $\mathcal{G}$ and the window size $m$. The last gradient added to $\mathcal{G}$ will have $r=0$, while the oldest gradient in $\mathcal{G}$ will have $r=m-1$. In the end, we will add the values $\beta^r \mathcal{V}_i$ to the buffer $z$ at the corresponding indices $\mathcal{I}_i$, which is a fast operation because we only manipulate $1\%$ of values at a time. We discuss the efficient CUDA implementation in the Appendix.}

\paragraph{Algorithm Intuition.}
\change{To gain intuition, we illustrate the impact of compressing gradients via TopK before they are incorporated into the optimizer state for Adam, both with and without error feedback (EF). Figure~\ref{fig:2d-rosenbrock} shows how EF fixes AdamW with Top-K compression. The plot on the left shows the optimization trajectory of the original Adam optimizer. The center plot illustrates the convergence of Top-K Adam when we only choose the largest coordinate from the accumulator (equivalent to $50\%$ sparsity since the problem is 2D). In the end, on the right side we show that adding EF to Top-K Adam recovers the same optimization trajectory as the original Adam optimizer. Extrapolating to higher dimensional problems, our \method{} approach helps recover the trajectory of the original Adam optimizer, while using less memory. The results clearly show that EF is essential for fast convergence. Besides, TopK with EF, which is a surrogate of \method{}, allows for competitive convergence relative to the uncompressed baseline. In Appendix~\ref{appendix:EF-GaLore}, we discuss the implications of Error Feedback applied to GaLore.}
\squeezevspace\squeezevspace

\begin{minipage}[t]{0.45\textwidth}
    \begin{algorithm}[H]
        \caption{\label{algorithm:micro-adam}Pseudocode for \method{} with quantized EF}
        \begin{algorithmic}[1]
            \State Input: $\beta_1, \beta_2, \epsilon, \mathcal{G}, T, d, k$
            \State $m_0, v_0 \gets 0_d,0_d$
            \Statex $\delta_1,\Delta_1 \gets 0,0$
            \Statex $e_1 \gets 0_d^{4b}$
            \For{$t=\{1, 2, ..., T\}$}
                \State $g_t \gets \widetilde\nabla_\theta f(\theta_t)$
                \State $a_t \gets g_t + \textcolor{blue}{Q^{-1}(e_t, \delta_t, \Delta_t})$
                \State $\mathcal{I}_t, \mathcal{V}_t \gets T_k(|a_t|)$
                \State \textcolor{blue}{$a_t[\mathcal{I}_t] \gets 0$}
                \State \textcolor{blue}{$\delta_{t+1},\Delta_{t+1} \gets min(a_t), max(a_t)$}
                \State \textcolor{blue}{$e_{t+1} \gets Q(a_t, \delta_{t+1}, \Delta_{t+1})$}
                \State $\mathcal{G}_{i,:} \gets (\mathcal{I}_t, \mathcal{V}_t)$
                \State $\change{\hat{m}_t} \gets \textsc{AdamStats}(\beta_1, \mathcal{G})$
                \State $\change{\hat{v}_t} \gets \textsc{AdamStats}(\beta_2, \mathcal{G}^2)$
                \State $\theta_{t+1} \gets \theta_{t} - \eta_t \frac{\change{\hat{m}_t}}{\epsilon + \sqrt{\change{\hat{v}_t}}}$
                \State $i \gets (i+1) \% m$
            \EndFor
        \end{algorithmic}
    \end{algorithm}
\end{minipage}
\hfill
\begin{minipage}[t]{0.5\textwidth}
    \begin{algorithm}[H]
        \caption{Adam Statistics, Quantization and Inverse Quantization}\label{algorithm:procedures}
        \begin{algorithmic}[1]
            \Procedure{AdamStats}{$\beta, \mathcal{G}, t, m, d$}
                \State $z \gets 0_d$
                \For{$i \in \{1, 2, ..., \min(t, m)\}$}
                    \State $r \gets (t-i-1) \% m$
                    \State $z[\mathcal{I}_i] \gets z[\mathcal{I}_i] + \beta^r \mathcal{V}_i$
                \EndFor
                \State \textbf{return } $\frac{\change{(1-\beta)z}}{1-\beta^t}$
            \EndProcedure
        \end{algorithmic}
        \begin{algorithmic}[1]
            \Procedure{$Q$}{$x, \delta, \Delta, b=4$} 
                \State $u \gets \frac{\Delta-\delta}{2^b - 1}$
                \State $x_Q \gets \lfloor \frac{x -\delta}{u} + \frac{1}{2} \rfloor$
                \State \textbf{return } $x_Q$
            \EndProcedure
        \end{algorithmic}
        \begin{algorithmic}[1]
            \Procedure{$Q^{-1}$}{$x_Q, \delta, \Delta, b$} 
                \State $u \gets \frac{\Delta-\delta}{2^b - 1}$
                \State $x \gets x_Q \cdot u + \delta$
                \State \textbf{return } $x$
            \EndProcedure
        \end{algorithmic}
    \end{algorithm}
\end{minipage}

\squeezevspace
\begin{figure}[!h]
  \centering
  \caption{\label{fig:2d-rosenbrock} Optimization trajectories of Adam, TopK-Adam and TopK-Adam with EF applied on the Rosenbrock function $f(x,y)=(1-x)^2+100(y-x^2)^2$ starting from $(x_0,y_0)=(-\frac{1}{2},1)$. Notice the extremely ``jagged'' profile of TopK-Adam without EF, and the recovered convergence when EF is added.}
  \begin{tabular}{ccc}
    \includegraphics[trim={1.7cm 0 0 0},clip,width=0.31\textwidth]{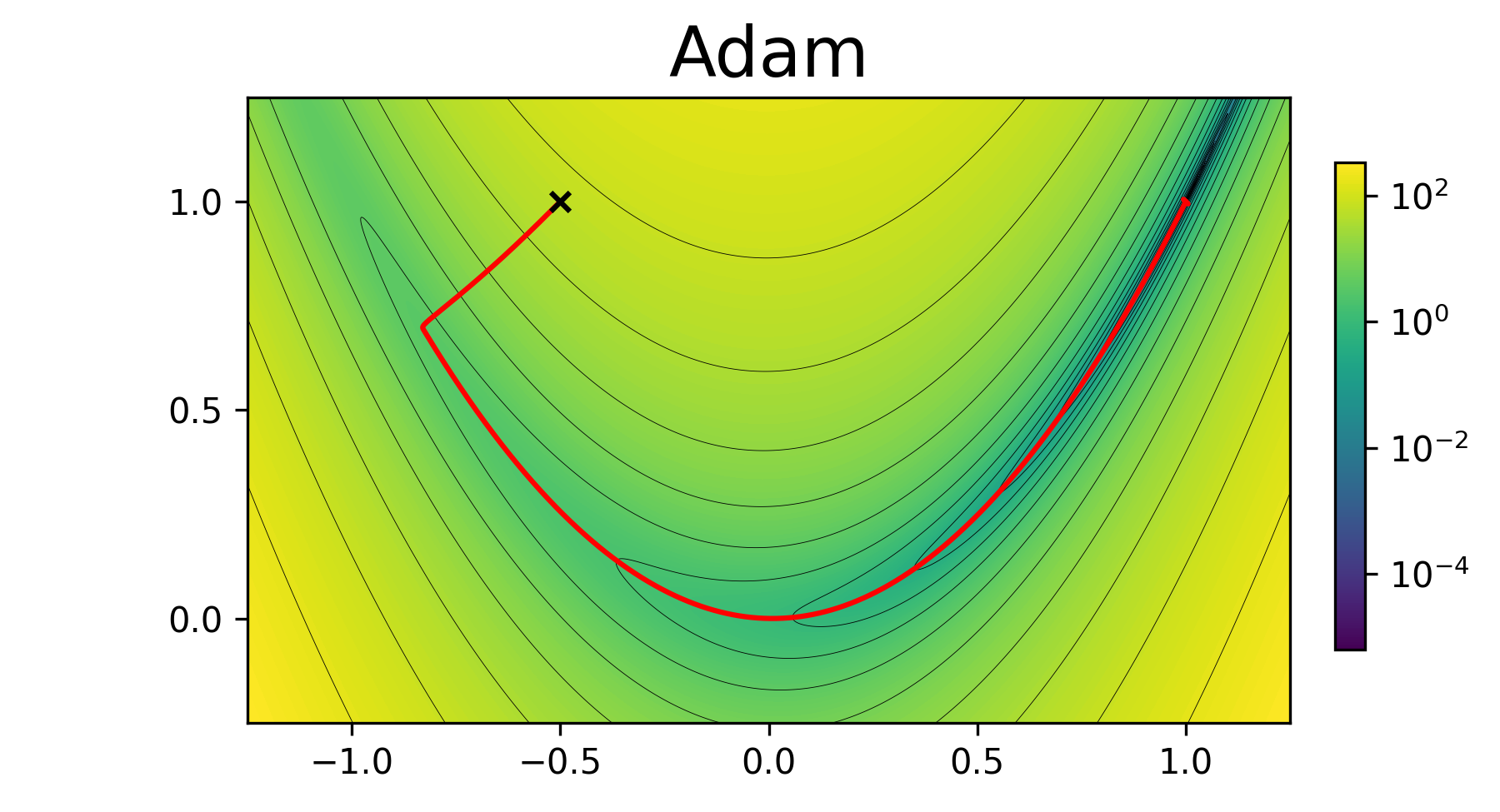} &
    \includegraphics[trim={1.7cm 0 0 0},clip,width=0.31\textwidth]{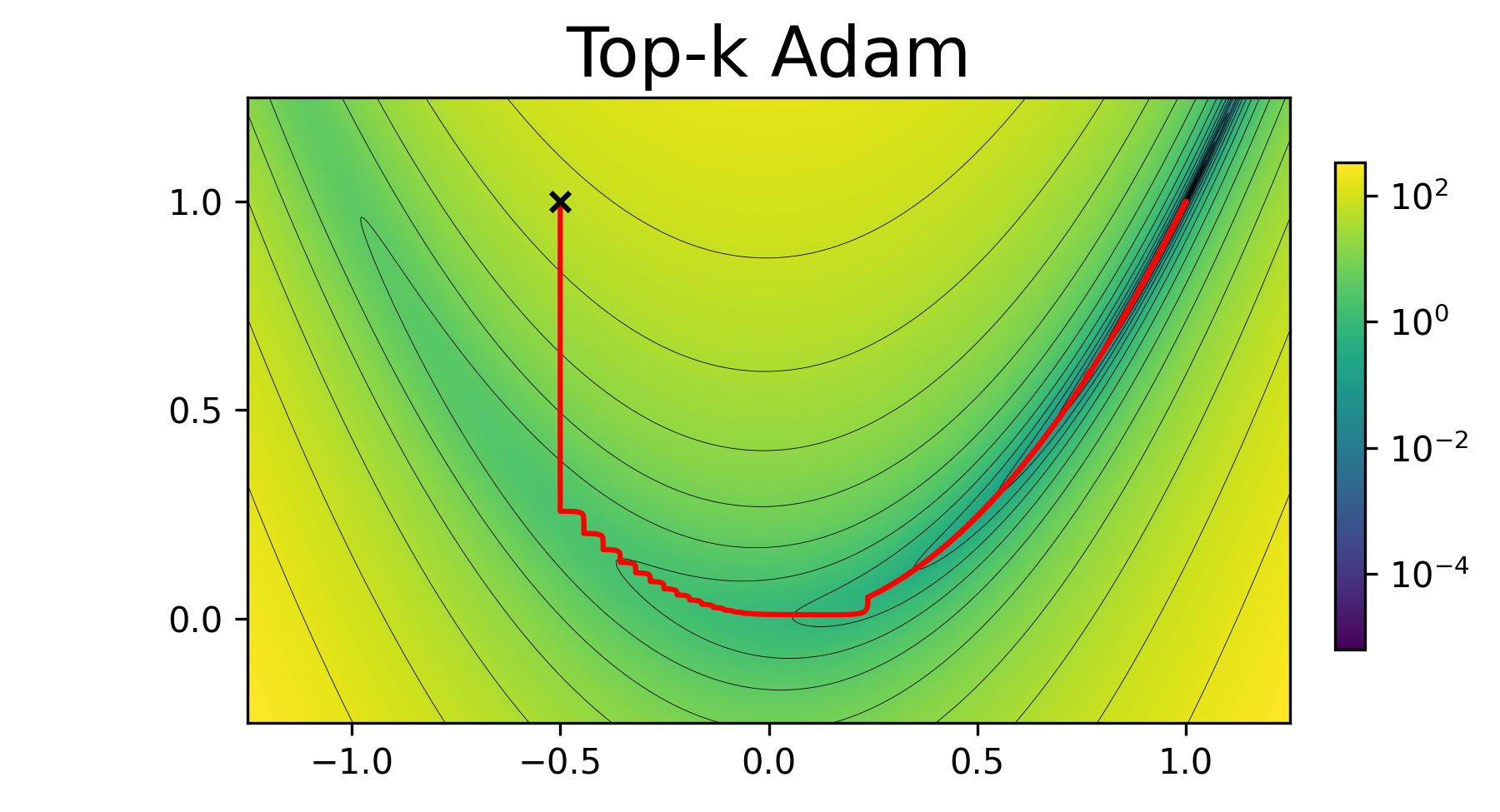} &
    \includegraphics[trim={1.7cm 0 0 0},clip,width=0.31\textwidth]{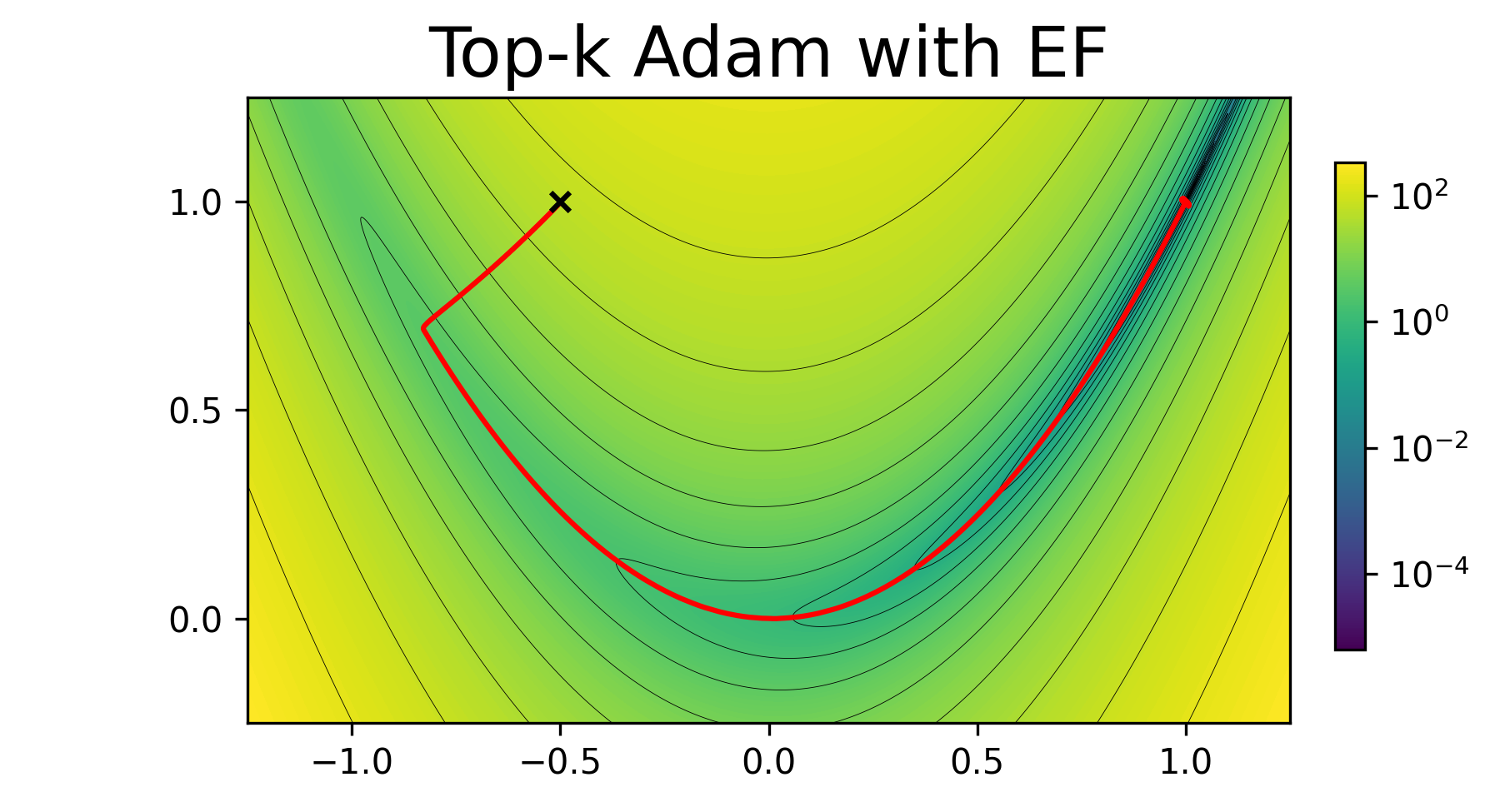}
  \end{tabular}
\end{figure}
\squeezevspace

\squeezevspace
\subsection{Efficient Implementation}
\squeezevspace

A direct implementation (e.g., in Pytorch) of the previous  algorithm would not bring significant  benefits, and would in fact might slow down optimization in terms of wall-clock time. 
To realize the theoretical gains, we detail a  GPU-aware implementation below. 

\paragraph{Accumulator $a_t$.} First, we do not use an additional accumulator tensor $a_t$; instead, we use a CUDA kernel to dequantize the error buffer, and store the result in the \textit{grad} attribute of the model parameters. This allows us to accumulate the error feedback into the gradients, \change{without allocating a full or half precision $d$-dimensional array. Each component of the EF has 4 bits and the entire EF is stored in an array of size $d/2$ of \textit{uint8} values.}

\paragraph{Top-K.} Since we run on LLMs with billions of parameters, naive storage of the sparse indices would require using an \textit{int64} type for the indices matrix $\mathcal{I}$, assuming that the Top-K operator is applied globally to all the parameters in $a_t$. To avoid this cost, we apply Top-K in blocks of fixed size \change{$B_d < 2^{15}-1=32767$ and store block-relative indices in \textit{int16} format (during the development of \method{}, PyTorch did not have support for \textit{uint16}).} Applying Top-K per row to $a_t$ reshaped to 2D is not only faster, but provides the block-relative indices directly.

Computing Top-K in blocks also allows us to allocate and efficiently use CUDA shared memory blocks to \change{dynamically} compute the statistics $\hat{m}_t$ and $\hat{v}_t$ for Adam, as described in the \textsc{AdamStats} procedure in Algorithm~\ref{algorithm:procedures}. \change{We allocate the maximum possible shared memory for each thread block and store $\hat{m}_t$ (first half) and $\hat{v}_t$ (second half) at consecutive locations in the shared memory. Once these statistics are computed, it is easy to update the model parameters.} Note that the block-relative indices returned by Top-K will be directly used as indices in the shared memory array of CUDA thread blocks to retrieve values from $\mathcal{I}$ and $\mathcal{V}$.

\paragraph{Quantization metadata.} Our approach also stores two additional vectors $\delta$ and $\Delta$ used for quantization. Since the quantization block size $B_q$ is set to a very large value (e.g. $100K$), the space required for these two arrays becomes negligible in comparison to the buffer $\mathcal{G}$ and error feedback $e$.

\paragraph{Practical memory usage.} We note  that we apply \method{} per layer, and that the size of quantization statistics $\delta$ and $\Delta$ are allocated based on the layer size. 
Having many such small tensors may result in slightly sub-optimal memory allocation from Pytorch. This is why our reported memory usage can be higher than the theoretical usage for small models, in the 100M parameter range; these effects disappear for billion-parameter models, where the savings are significant. 




\squeezevspace
\subsection{Memory footprint analysis \change{for the optimizer states} and comparison with other methods}
\squeezevspace

We now compare the theoretical memory footprint of \method{} with AdamW~\citep{ADAMW}, AdamW-8 bits~\citep{dettmers8bit}, and GaLore~\citep{zhao2024galore}, \textit{focusing on memory usage of the optimizer states} $m_t$ and $v_t$, each of size $d$, expressed in bytes (B). For concreteness, we report the practical memory usage for the optimizer state for a Llama-2 7B model for each optimizer. 

\begin{itemize}[leftmargin=*]

\item \textbf{AdamW} stores states in \textit{float32} format (4 B per component), resulting in a total memory footprint of $4d+4d=8d$ (B), while using \textit{bfloat16} would result in $4d$ (B) memory. We will refer to these memory footprints as $M_{AW32}=8d\text{ (B)}=50.21\text{ (GB)}$ and $M_{AW16}=4d\text{ (B)}=25.10\text{ (GB)}$.

\item \textbf{AdamW-8 bit} stores states in 8-bit format (1 B per component), both with $d$ components, with memory footprint of $M_{AW8} = d+d=2d\text{ (B)}=12.55\text{ (GB)}$. 

\item \textbf{\method{}} stores the error feedback $e$ in 4-bit format (0.5 B per component) and the sliding window $\mathcal{G}$ that stores the indices $\mathcal{I}$ in \textit{int16} and $\mathcal{V}$ in \textit{bfloat16} format.
Both have $m \times k$ components, each requiring 2 B per component. In the end, for $m=10$ and $k=d/100$, the memory footprint is $M_{\mu A}(m) = 0.5d + 4mk\text{ (B)}=0.9d \text{ (B)}=5.65\text{ (GB)}$, that is, half of AdamW-8bit. 

\item \textbf{GaLore.} Given a neural network with $L$ layers, where each layer has weight matrix $W_i \in \mathbb{R}^{A_i \times B_i}$ (shaped as a 2D matrix), the model size $d=\sum_{i=1}^L A_i B_i$. GaLore uses a rank-$r$ compression via the SVD composition as $W_i = USV^T$, where $U \in \mathbb{R}^{A_i \times A_i}$ and \change{we choose the first $r$ columns of $U$ as} $R_i \in \mathbb{R}^{A_i \times r}$ to project the gradient of $W_i$ to the $r$-dimensional space. As a consequence, the dimension $d$ shrinks to $d_r = \sum_{i=1}^L A_i r = r\sum_{i=1}^L A_i$, which represents the total number of components to be stored in the GPU memory only for the projection matrices $R_i$. If we suppose they are stored in \textit{bfloat16} (2 B per component), then the entire memory usage for low-rank projection would be $2d_r$. Note that some layers in the model are rank-$1$ and they do not need compression, but will still have associated states in Adam, which means they must be tan into consideration when computing the theoretical memory (we will use $\epsilon_1$ for memory of rank-$1$ layers). In addition, we have to store the states $m_t$ and $v_t$ for AdamW in \textit{bfloat16} format, which adds another $4d_r$ bytes. In sum, the total memory footprint of GaLore is $M_{GLAW16}(r) = 6d_r+2\epsilon_1\text{ (B)}$, while for the 8-bit version we get $M_{GLAW8}(r) = 4d_r+2\epsilon_1\text{ (B)}$.  In the end, the practical memory usage for Llama-2 7B is $M_{GLAW8}(256)=1.36\text{ (GB)}$, $M_{GLAW8}(1024)=5.43\text{ (GB)}$, $M_{GLAW16}(256)=2.04\text{ (GB)}$ and $M_{GLAW8}(1024)=8.15\text{ (GB)}$.
\end{itemize} 

\paragraph{Discussion.} \change{Assume our goal is to obtain a lower memory footprint compared to AdamW-8 bit. We fix the gradient density to $k=d/100$ and we have to determine the number of gradients (window size) $m$ for \method{} in order to be competitive with AdamW-8 bit.}

For this, we have to solve the equation $M_{\mu A}(m) = M_{AW8}$ for $m$, resulting in $m_{\max}=37.5$. Specifically, if we use a gradient history of $m < m_{\max}$ gradients in $\mathcal{G}$, \method{} will have theoretical memory savings. We will see that, in practice, this history size $m=10$ is more than enough for good practical results. As entries in the window past this range are dampened extremely significantly, their influence is negligible.
In Appendix~\ref{appendix:memory-python}, we provide Python code to compute the theoretical memory usage for the three optimizers for Llama-2 7B.




\squeezevspace
\section{Convergence Guarantees for \method{}}\label{section:convergence-guarantees}
\squeezevspace

In this section, we present our theoretical framework. We begin by introducing and discussing the analytical assumptions used in our theory, providing an analytical view of the proposed \method{} algorithm, along with two theoretical convergence results.

\squeezevspace
\subsection{Gradient and Error Compression}
\squeezevspace

We now define two classes of compression operators widely used in the literature.

\begin{assumption}\label{ass:quant} The gradient compressor $\mathcal C:\R^d\to\R^d$ is $q$-contractive with $0\leq q < 1$, i.e.,
\begin{equation*}
\squeeze
    \norm{\mathcal C(x)-x} \leq q \norm{x}, \quad \text{for any } x\in\R^d.
\end{equation*}
\end{assumption}

The compression we use in Algorithm \ref{algorithm:micro-adam} is the TopK compressor $T_k$ selecting top $k$ coordinates in absolute value. This is known to be contractive with $q = \sqrt{1 - \nicefrac{k}{d}}$. Another popular contractive compressor is the optimal low-rank projection of gradient shaped as a $d\times d$ matrix, in which case $q=\sqrt{1-R/d}$ where $R$ is the projection rank. 

The second class of compressors, which we use for the error feedback, requires unbiasedness and relaxes the constant in the uniform bound. 

\begin{assumption}\label{ass:quant-error} The error compressor $\mathcal Q:\R^d\to\R^d$ is unbiased and $\omega$-bounded with $\omega\ge0$, namely,
\begin{equation*}
\squeeze
    \E[\mathcal Q(x)] = x, \quad \norm{\mathcal Q(x)-x} \leq \omega \norm{x}, \quad \text{for any } x\in\R^d.
\end{equation*}
\end{assumption}

One example of $\omega$-bounded compressor, a version of which is used in Algorithm \ref{algorithm:procedures}, is the randomized rounding quantizer we employ, whose properties we provide below.

\begin{restatable}{Lemma}{LemmaQuant}\label{lem:quant}
    Consider Algorithm \ref{algorithm:procedures} with randomized rounding, i.e., for a vector $x\in\R^d$ with $\delta = \min_i x_i$ and $\Delta = \max_i x_i$, let $\hat{x}_i \eqdef \lfloor \frac{x_i-\delta}{u} + \xi \rfloor u + \delta$ be the $i$-th coordinate of the quantized vector $\hat{x}$, where $\xi\sim\textrm{U}[0,1]$ is the uniform random variable and $u=\frac{\Delta-\delta}{2^b-1}$ is the quantization level. Then
    \begin{equation*}
    \squeeze
        \E[\hat x] = x, \quad \|\hat x - x\| \le \frac{\sqrt{d-2}}{2^b-1}\frac{\Delta-\delta}{\sqrt{\Delta^2+\delta^2}}\|x\|, \quad \text{for all } x\in\R^d.
    \end{equation*}
\end{restatable}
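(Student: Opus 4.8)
The plan is to reduce both claims to a coordinate-wise analysis of the randomized rounding map $t\mapsto\lfloor t+\xi\rfloor$ and then recombine. Write $u=\frac{\Delta-\delta}{2^b-1}$ and, for each coordinate $i$, set $y_i\eqdef\frac{x_i-\delta}{u}\in[0,2^b-1]$, so that $\hat x_i=(\lfloor y_i+\xi\rfloor)u+\delta$ and hence $\hat x_i-x_i=\bigl(\lfloor y_i+\xi\rfloor-y_i\bigr)u$. If $\Delta=\delta$ the vector $x$ is constant, $u=0$, $\hat x=x$, and both claims are trivial, so I would assume $\Delta>\delta$ from now on (which also forces $d\ge 2$ for the statement to be meaningful).

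For unbiasedness, first I would record the standard fact that for any real $y$, writing $y=n+f$ with $n=\lfloor y\rfloor$ and $f\in[0,1)$, one has $\lfloor y+\xi\rfloor=n+1$ exactly when $\xi\ge 1-f$ (probability $f$) and $\lfloor y+\xi\rfloor=n$ otherwise, so $\E[\lfloor y+\xi\rfloor]=n+f=y$. Applying this with $y=y_i$ gives $\E[\hat x_i]=y_iu+\delta=x_i$, hence $\E[\hat x]=x$. This works whether $\xi$ is shared across coordinates or drawn independently.

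For the norm bound I would argue pathwise. Since $\lfloor y_i+\xi\rfloor$ always lies in $\{\lfloor y_i\rfloor,\lceil y_i\rceil\}$, we get $|\lfloor y_i+\xi\rfloor-y_i|\le 1$ for every $i$, and moreover this quantity is exactly $0$ whenever $y_i$ is an integer. The observation that makes the bound sharp is that two coordinates are always integer-valued on the $y$-scale: the index realizing $x_i=\delta$ has $y_i=0$, and the index realizing $x_i=\Delta$ has $y_i=2^b-1$; these are distinct coordinates because $\Delta>\delta$, and both contribute zero error. Therefore $\norm{\hat x-x}^2=\sum_i(\lfloor y_i+\xi\rfloor-y_i)^2u^2\le(d-2)u^2$, i.e. $\norm{\hat x-x}\le\sqrt{d-2}\,u=\frac{\sqrt{d-2}}{2^b-1}(\Delta-\delta)$.

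Finally I would turn this absolute bound into the claimed relative one: because the argmax and argmin of $x$ are distinct coordinates, $\norm{x}^2=\sum_i x_i^2\ge\delta^2+\Delta^2$, hence $\sqrt{\Delta^2+\delta^2}\le\norm{x}$, and since $\Delta-\delta>0$ we may write $\Delta-\delta=\frac{\Delta-\delta}{\sqrt{\Delta^2+\delta^2}}\sqrt{\Delta^2+\delta^2}\le\frac{\Delta-\delta}{\sqrt{\Delta^2+\delta^2}}\norm{x}$; substituting into the previous display yields exactly $\norm{\hat x-x}\le\frac{\sqrt{d-2}}{2^b-1}\frac{\Delta-\delta}{\sqrt{\Delta^2+\delta^2}}\norm{x}$. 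There is no real obstacle in this argument; the only point one must notice — and the sole reason the constant carries $\sqrt{d-2}$ rather than a crude $\sqrt d$, and $\norm{x}$ rather than $\sqrt d\,u$ — is that the two extremal coordinates are quantized with zero error, which simultaneously deletes two terms from the error sum and supplies the lower bound on $\norm{x}$.
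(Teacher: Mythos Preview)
Your proof is correct and follows essentially the same approach as the paper: both arguments hinge on the observation that the two extremal coordinates (those attaining $\delta$ and $\Delta$) are quantized exactly, leaving at most $d-2$ coordinates each with error bounded by $u$, and then convert the absolute bound to a relative one via $\|x\|^2\ge\Delta^2+\delta^2$. Your write-up is more detailed on the unbiasedness step (the paper simply asserts it can be verified directly), but the structure and key ideas are identical.
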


Next, we provide an ``analytical'' view of our method in Algorithm \ref{alg:microAdam}. 
Essentially, we use the contractive compressor $\mathcal{C}$ for compressing the error corrected gradient information $g_t+e_t$, and the unbiased compressor $\mathcal{Q}$ to compress the remaining compression error $g_t+e_t - \mathcal{C}(g_t+e_t)$.

\begin{algorithm}[H]
    \caption{\label{alg:microAdam}\method{}: Analytical View}
    \begin{algorithmic}[1]
        \State Input: parameters $\beta_1,\,\beta_2\in(0,1)$, $\epsilon>0$, step-size $\eta>0$, $\theta_1\in\R^d$, $e_{1}=m_0=v_0=\hat v_0=0_d$
        \For{$t=\{1, 2, ..., T\}$}
            \State{$g_{t} = \widetilde\nabla_{\theta} f(\theta_t)$ \hfill$\diamond$ Compute unbiased stochastic gradient}
            \State{$\tilde g_{t}=\mathcal C(g_{t}+e_{t})$ \hfill$\diamond$ Add accumulated error $e_t$ and compress\label{line:topk1} }
            \State{$e_{t+1}=\mathcal{Q} (e_{t}+g_{t}-\tilde g_{t})$ \hfill$\diamond$ Update and compress the error}
            \State{$m_t=\beta_1 m_{t-1}+(1-\beta_1)\tilde g_t$ \hfill$\diamond$ Update first-order gradient moment}
            \State{$v_t=\beta_2 v_{t-1}+(1-\beta_2)\tilde g_t^2$ \hfill$\diamond$ Update second-order gradient moment}
            \State{$\hat v_t=\max(v_t,\hat v_{t-1})$ \hfill$\diamond$ Apply AMSGrad normalization\label{line:v}}
            \State{$\theta_{t+1}=\theta_{t}-\eta\frac{\change{m_t}}{\sqrt{\hat v_t+\epsilon}}$ \hfill$\diamond$ Update the model parameters}
        \EndFor
    \end{algorithmic}
\end{algorithm}

It is clear from this description that our objective with these two compressors, $\mathcal{C}$ and $\mathcal{Q}$, is to approximate the dense gradient information $g_t+e_t$ using two compressed vectors: $\tilde{g}_t = \mathcal{C}(g_t+e_t)$ and $\mathcal{Q}(g_t+e_t - \tilde{g}_t)$. However, in doing so, we inevitably lose some information about $g_t+e_t$ depending on the degree of compression applied to each term. Thus, the  condition $(1+\omega)q<1$ required by our analysis can be seen as preventing excessive loss of information due to compression.

\squeezevspace
\subsection{Convergence Guarantees for General Smooth Non-convex Functions}
\squeezevspace

Next, we state our algorithm's convergence guarantees under standard assumptions, stated below: 

\begin{assumption}[Lower bound  and smoothness]\label{ass:smooth}
The loss function $f\colon\R^d\to\R$ is lower bounded by some $f^* \in \mathbb{R}$ and $L$-smooth, i.e., 
$\norm{\nabla f(\theta) - \nabla f (\theta')} \leq L \norm{\theta-\theta'}$, for any $\theta,\theta'\in\R^d.$
\end{assumption}

\begin{assumption}[Unbiased and bounded stochastic gradient]\label{ass:boundgrad}
For all iterates $t \ge 1$, the stochastic gradient $g_t$ is unbiased and uniformly bounded by a constant $G\ge 0$, i.e., 
$\EE[g_{t}] = \nabla f(\theta_t), \; \norm{g_{t}} \leq G.$
\end{assumption}

\begin{assumption}[Bounded variance]\label{ass:var}
For all iterates $t\ge1$, the variance of the stochastic gradient $g_t$ is uniformly bounded by some constant $\sigma^2\ge0$, i.e., 
$\EE[\|g_{t} - \nabla f(\theta_t)\|^2] \le \sigma^2.$
\end{assumption}

\textbf{Main Result.} The above assumptions are standard in the literature, e.g.~\citep{defossez2022a,li2022distributed,Xie2023Adan,zhou2024on}. 
Under these conditions, if the two compressors satisfy the basic condition $(1 + \omega) q < 1$, we show:

\begin{restatable}{Theorem}{TheoremNonconvexShort}{\bf(Non-convex convergence rate)}\label{TheoremNonconvexShort}
    Let Assumptions \ref{ass:quant}, \ref{ass:quant-error}, \ref{ass:smooth}, \ref{ass:boundgrad}, \ref{ass:var} hold and $\q \eqdef (1+\omega)q < 1$. Then, choosing $\change{\eta = \min\{\frac{\epsilon}{4LC_0}, \frac{1}{\sqrt{T}}\}}$, \method{} (Algorithm \ref{alg:microAdam}) satisfies
\begin{equation*}
    \squeeze
    \frac{1}{T}\sum_{t=1}^T \E[\|\nabla f(\theta_t)\|^2]
    \leq 2C_0\left(\frac{f(\theta_1)-f^*}{\sqrt{T}}
    +\frac{L(\sigma^2+C_2^2G^2)}{\epsilon\sqrt{T}}\right)
    + \mathcal{O}\left(\frac{G^3(G + d)}{T}\right)
\end{equation*}
with constants
$\squeeze C_0\eqdef \sqrt{\frac{4(1+\q^2)^3}{(1-\q^2)^2}G^2+\epsilon}$ and $C_2 \eqdef \omega q (1 + \frac{2\q}{1-\q^2})$.
\end{restatable}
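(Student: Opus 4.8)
The plan is to track a potential/Lyapunov function of the form $\Phi_t = f(\theta_t) + c_1 \E[\|e_t\|^2]$ (or perhaps a slightly more elaborate one that also absorbs the discrepancy between $m_t/\sqrt{\hat v_t+\epsilon}$ and a plain gradient step), and to show it decreases per iteration up to controlled error terms. First I would record the basic algebra of the error-feedback recursion: from line~\ref{line:topk} and the definition of $e_{t+1}$ in Algorithm~\ref{alg:microAdam}, one has $e_{t+1} = \mathcal{Q}(r_t)$ where $r_t \eqdef e_t + g_t - \tilde g_t = (e_t+g_t) - \mathcal C(e_t+g_t)$. Using Assumption~\ref{ass:quant} (contractivity of $\mathcal C$) gives $\|r_t\| \le q\|e_t+g_t\|$, and using Assumption~\ref{ass:quant-error} (unbiased, $\omega$-bounded) gives $\E[\|e_{t+1}\|^2 \mid r_t] = \|r_t\|^2 + \E[\|\mathcal Q(r_t)-r_t\|^2] \le (1+\omega^2)\|r_t\|^2 \le (1+\omega^2)q^2\|e_t+g_t\|^2$. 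Together with Young's inequality $\|e_t+g_t\|^2 \le (1+\gamma)\|e_t\|^2 + (1+1/\gamma)\|g_t\|^2$ for a suitably chosen $\gamma$ (tuned so that $(1+\omega^2)q^2(1+\gamma) < 1$, which is where $\q=(1+\omega)q<1$ is used), this yields a contraction $\E[\|e_{t+1}\|^2] \le \rho\, \E[\|e_t\|^2] + c\,G^2$ with $\rho<1$, hence the uniform bound $\E[\|e_t\|^2] \le C_e^2 G^2$ for an explicit constant; this is what produces the $C_2 = \omega q(1+\frac{2\q}{1-\q^2})$ term and the $\frac{1+\q^2}{1-\q^2}$-type factors in $C_0$.

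Second, I would control the ``virtual iterate'' $\tilde\theta_t \eqdef \theta_t - \eta\, e_t/(\text{normalizer})$ — or, more in the spirit of adaptive-method analyses, directly expand $f(\theta_{t+1})$ via $L$-smoothness (Assumption~\ref{ass:smooth}): $f(\theta_{t+1}) \le f(\theta_t) - \eta \langle \nabla f(\theta_t), \hat m_t/\sqrt{\hat v_t+\epsilon}\rangle + \frac{L\eta^2}{2}\|\hat m_t/\sqrt{\hat v_t+\epsilon}\|^2$. The cross term needs to be related to $\|\nabla f(\theta_t)\|^2$. Here I would use the boundedness $\|g_t\|\le G$ (Assumption~\ref{ass:boundgrad}), which via the error-feedback bound makes $\|\tilde g_t\|$ and hence $m_t, \hat v_t$ all bounded, so the preconditioner $1/\sqrt{\hat v_t+\epsilon}$ is sandwiched between explicit positive constants — this linearizes the analysis and is the standard device (cf.\ \citep{defossez2022a,zhou2024on}). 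The bias introduced by (i) momentum ($m_t$ vs.\ current gradient), (ii) the compressed surrogate $\tilde g_t$ vs.\ true $g_t$ (difference is $e_t - e_{t+1}$, which telescopes), and (iii) the gap between $\tilde g_t$ and $\nabla f(\theta_t)$ (mean is $\nabla f(\theta_{t-j})$ across the momentum window, handled by smoothness + bounded step) all get absorbed: terms (i) and (iii) contribute the $\sigma^2 + C_2^2 G^2$ and the $\mathcal O(G^3(G+d)/T)$ remainder, and term (ii) telescopes against the Lyapunov $\|e_t\|^2$ term.

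Third, I would sum the one-step inequality over $t=1,\dots,T$, telescope $f(\theta_1)-f(\theta_{T+1}) \le f(\theta_1)-f^*$ (using the lower bound in Assumption~\ref{ass:smooth}), telescope the AMSGrad term $\hat v_t \ge \hat v_{t-1}$ (line~\ref{line:v}) so that $\sum_t (1/\sqrt{\hat v_{t-1}+\epsilon} - 1/\sqrt{\hat v_t+\epsilon})$ collapses — this is exactly why the AMSGrad max-normalization is needed and why the rate matches \citep{zhou2024on} — and telescope the error buffer terms $\sum_t(\E\|e_t\|^2 - \E\|e_{t+1}\|^2)$. Dividing by $T$ and by the lower bound on the preconditioner, then plugging in $\eta = 1/\sqrt T$ (valid since the hypothesis $1/\sqrt T \le \epsilon/(4LC_0)$ ensures the $\frac{L\eta^2}{2}\|\cdot\|^2$ term is dominated), gives the stated bound after identifying constants. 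The variance term $\sigma^2$ (Assumption~\ref{ass:var}) enters through bounding $\E\|g_t\|^2$ or $\E\|\tilde g_t\|^2$ inside the second-order Taylor term.

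The main obstacle I expect is the interaction between the adaptive preconditioner and the error feedback: in vanilla EF-SGD the cross term $\langle \nabla f(\theta_t), g_t\rangle$ appears cleanly, but here it is $\langle \nabla f(\theta_t), \hat m_t/\sqrt{\hat v_t+\epsilon}\rangle$, and the coordinate-wise, time-varying, data-dependent preconditioner does not commute with the telescoping of the error buffer (the virtual-iterate trick $\tilde\theta_t = \theta_t - \eta e_t/\sqrt{\hat v_t+\epsilon}$ requires $\hat v_t$ to change slowly). Making this rigorous — either by freezing the preconditioner at an appropriate step and paying a lower-order price, or by a careful coordinatewise argument that bounds $\|1/\sqrt{\hat v_{t}+\epsilon} - 1/\sqrt{\hat v_{t-1}+\epsilon}\|$ — is the delicate part, and is presumably where the $\mathcal O(G^3(G+d)/T)$ higher-order remainder (with its $d$ dependence coming from the quantization error bound in Lemma~\ref{lem:quant}) originates. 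Everything else is bookkeeping with Young's inequality and the uniform bounds.
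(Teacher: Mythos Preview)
Your roadmap (error-buffer contraction, virtual iterate, smoothness descent at the virtual iterate, telescope, plug in $\eta=1/\sqrt{T}$) is the same as the paper's, and the error-feedback recursion you write down is correct. But the specific virtual iterate and Lyapunov you propose will not close. The simple choices $\Phi_t = f(\theta_t) + c\|e_t\|^2$ or $\tilde\theta_t = \theta_t - \eta\, e_t/\sqrt{\hat v_t+\epsilon}$ fail because the algorithm has $\beta_1$-momentum: the update direction is $m_t = (1-\beta_1)\sum_\tau \beta_1^{t-\tau}\tilde g_\tau$, so the compression bias in $m_t$ is a $\beta_1$-weighted mixture of \emph{all past} $e_\tau - e_{\tau+1} + \zeta_\tau$, which does not telescope against a single $\|e_t\|^2$ term, and the single-shift virtual iterate does not produce a clean recursion. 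The paper fixes this with a two-stage construction: first define the momentum-matched moving averages $\mathcal E_t = (1-\beta_1)\sum_\tau \beta_1^{t-\tau} e_\tau$ and $\mathcal Z_t = (1-\beta_1)\sum_\tau \beta_1^{t-\tau}\zeta_\tau$ and the virtual iterate $\theta'_t = \theta_t - \eta\,\mathcal E_t/\sqrt{\hat v_{t-1}+\epsilon}$, which converts $m_t$ into the \emph{uncompressed} momentum $m'_t = (1-\beta_1)\sum_\tau\beta_1^{t-\tau} g_\tau$ plus $\mathcal Z_t$; then define a second virtual iterate $x_t = \theta'_t - \eta\frac{\beta_1}{1-\beta_1}(m'_{t-1}+\mathcal Z_{t-1})/\sqrt{\hat v_{t-1}+\epsilon}$ to strip off the momentum, so that $x_{t+1}-x_t = -\eta(g_t+\zeta_t)/\sqrt{\hat v_t+\epsilon} + \eta D_t\cdot(\text{bounded})$ with $D_t \eqdef 1/\sqrt{\hat v_{t-1}+\epsilon} - 1/\sqrt{\hat v_t+\epsilon}$. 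Smoothness is then applied at $x_t$, not at $\theta_t$, and the descent inequality splits into four terms (the paper's $I$--$IV$) which are bounded using $\sum_t\|D_t\|_1 \le d/\sqrt{\epsilon}$ and $\sum_t\|D_t\|^2 \le d/\epsilon$.

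One factual correction: the $d$ in the $\mathcal{O}(G^3(G+d)/T)$ term does \emph{not} come from the quantization lemma (Lemma~\ref{lem:quant}); the theorem only assumes a generic $\omega$-bounded compressor and never invokes the $\sqrt{d-2}$ constant. The $d$-dependence arises purely from the coordinatewise telescoping of the AMSGrad preconditioner: $\sum_{t=1}^T\|D_t\|_1 = \sum_{i=1}^d (1/\sqrt{\hat v_{0,i}+\epsilon} - 1/\sqrt{\hat v_{T,i}+\epsilon}) \le d/\sqrt{\epsilon}$, exactly as in the uncompressed AMSGrad analysis of~\cite{zhou2024on}. This is also why the $d$ term sits only in the higher-order $1/T$ piece.
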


\paragraph{Discussion.}  First, notice that the leading term $\tfrac{1}{\sqrt{T}}$ of the rate is the optimal convergence speed for non-convex stochastic gradient methods \citep{Ghadimi2016AGD}. Furthermore, the obtained convergence rate $\mathcal{O}(\tfrac{1}{\sqrt{T}} + \tfrac{d}{T})$ asymptotically matches the rate of uncompressed AMSGrad in the stochastic non-convex setup \citep{zhou2024on}. Hence, the added compression framework of the \method{} together with error feedback mechanism can slow down the convergence speed only up to some constants \change{including the dimension}. Evidently, the additional constants $C_0$ and $C_2$ affected by compression and appearing in the leading terms can be easily estimated once the compressors are fixed. Besides, if we store the full error information without applying $\mathcal{Q}$ compressor (i.e., $\omega=0,\,\q=q$), then \method{} reduces to the single-node Comp-AMS method by \cite{li2022distributed} recovering the same convergence rate. 
The full proof is provided in the Appendix.


\squeezevspace
\subsection{Convergence Rate for Non-Convex Functions under the PL Condition}
\squeezevspace

Next, we show that we can obtain even stronger bounds when the objective satisfies the PL condition: 

\begin{assumption}[\textrm{PL}-condition]\label{ass:PL}
For some $\mu>0$ the loss $f$ satisfies Polyak-Lojasiewicz (PL) inequality
\begin{equation*}
\squeeze
\|\nabla f(\theta)\|^2\geq 2\mu (f(\theta)-f^*), \quad \text{for any } \theta\in\R^d.
\end{equation*}
\end{assumption}

In this case, we can show:

\begin{restatable}{Theorem}{TheoremPLShort}{\bf(PL convergence rate)}\label{TheoremPLShort}
Let Assumptions \ref{ass:quant}, \ref{ass:quant-error}, \ref{ass:smooth}, \ref{ass:boundgrad}, \ref{ass:var} and \ref{ass:PL} hold, and $\q<1$. Then, choosing $\change{\eta = \min\{\frac{\epsilon}{4LC_0}, \frac{2C_0\log T}{\mu T}\}}$, \method{} (Algorithm \ref{alg:microAdam}) satisfies
\begin{align*}
\squeeze
\E[f(\theta_{T+1})] - f^*
\le \frac{\change{2}\log T}{T} \left(\frac{LC_0^2}{\mu} \frac{\sigma^2+(C_1 + C_2^2) G^2}{\mu\epsilon}
+ \frac{C_0(1+C_1)(1+d)G^2}{\mu\sqrt{\epsilon}} \right)
+ \widetilde{\mathcal{O}}\left(\frac{G^4(G+d)}{T^2} \right)
\end{align*}
with constant $C_1\eqdef \frac{\beta_1}{1-\beta_1}(1+C_2)+\frac{2\q}{1-\q^2}$.
\end{restatable}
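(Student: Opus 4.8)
The plan is to build on the analysis for the general non-convex case (Theorem \ref{TheoremNonconvexShort}) and upgrade the descent argument using the PL inequality. First I would establish, exactly as in the non-convex proof, a one-step descent inequality of the schematic form
\begin{equation*}
\E[f(\theta_{t+1})] - f^* \le (1 - c_1\eta\mu)\,\bigl(\E[f(\theta_t)] - f^*\bigr) + c_2\eta^2\,\Phi_t + (\text{error-feedback cross terms}),
\end{equation*}
where $\Phi_t$ collects the variance term $\sigma^2$, the bounded-gradient terms $G^2$ (including the $C_2^2 G^2$ contribution from the quantized error buffer), and the $d$-dependent quantization overhead captured by $C_1$ and the dimension factor $(1+d)$. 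The crucial new ingredient relative to the non-convex case is that, after lower-bounding the gradient-norm term $\E[\|\nabla f(\theta_t)\|^2]$ by $2\mu(\E[f(\theta_t)]-f^*)$ via Assumption \ref{ass:PL}, the per-step progress becomes a genuine contraction in the suboptimality gap rather than merely a telescoping bound on averaged gradient norms. The normalization by $\sqrt{\hat v_t+\epsilon}$ is handled, as before, by the uniform bounds $\epsilon \le \hat v_t + \epsilon$ and $\hat v_t \le G^2$ coming from Assumptions \ref{ass:boundgrad}--\ref{ass:var} and line \ref{line:v}, which sandwich the preconditioner and let us pass to a scalar contraction factor.

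Next I would control the error-feedback sequence. The standard EF21-style argument gives a bound on $\E[\|e_{t+1}\|^2]$ in terms of a contraction (with rate governed by $\q = (1+\omega)q < 1$) plus $\|g_t\|^2 \le G^2$ and the extra quantization-induced term proportional to $\omega q$ — this is precisely where $C_1$ and $C_2$ enter. I would either carry a Lyapunov function of the form $\mathcal{L}_t = (f(\theta_t)-f^*) + a\,\E\|e_t\|^2$ for a suitable constant $a$, or invoke the already-bounded error-norm estimates from the proof of Theorem \ref{TheoremNonconvexShort} as a black box and substitute them into the cross terms. Either way, summing the recursion $\mathcal{L}_{t+1} \le (1-c_1\eta\mu)\mathcal{L}_t + c_2\eta^2\Phi$ over $t=1,\dots,T$ yields
\begin{equation*}
\E[f(\theta_{T+1})] - f^* \le (1-c_1\eta\mu)^T\bigl(f(\theta_1)-f^*\bigr) + \frac{c_2\eta^2\Phi}{c_1\eta\mu} = (1-c_1\eta\mu)^T\bigl(f(\theta_1)-f^*\bigr) + \frac{c_2\eta\Phi}{c_1\mu}.
\end{equation*}

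The step size $\eta = \frac{C_0\log T}{\mu T}$ is then chosen to balance the two terms: with this choice $(1-c_1\eta\mu)^T \le \exp(-c_1 C_0 \log T) \approx T^{-c_1 C_0}$, which, after absorbing constants, contributes the $\frac{f(\theta_1)-f^*}{T}$ leading term, while the second term becomes $\frac{\log T}{T}$ times the stated bracket involving $\frac{LC_0^2}{\mu}\cdot\frac{\sigma^2+(C_1+C_2^2)G^2}{\mu\epsilon}$ and $\frac{C_0(1+C_1)(1+d)G^2}{\mu\sqrt\epsilon}$; the leftover $\eta^2$-order contributions (from second-order Taylor/smoothness remainders and from the $d$-scaled quantization terms multiplied by an extra $\eta$) collapse into the $\widetilde{\mathcal O}(G^4(G+d)/T^2)$ residual. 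I also need to check the constraint $\eta \le \frac{\epsilon}{4LC_0}$ holds for $T$ large enough, which it does since $\eta = \Theta(\log T / T) \to 0$.

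The main obstacle I anticipate is the bookkeeping of the error-feedback cross terms under the PL contraction: unlike the plain telescoping in the non-convex case, here every deferred term gets multiplied by the geometric factor $(1-c_1\eta\mu)$, so one must verify that the EF recursion's own contraction rate is compatible with — and not dominated by — the PL contraction rate, and that the $d$-dependent quantization error genuinely lands at order $\eta \cdot \frac{\log T}{T}$ (giving the $\frac{(1+d)\log T}{T}$ term) rather than leaking into the leading order. Getting the constant $C_1 = \frac{\beta_1}{1-\beta_1}(1+C_2) + \frac{2\q}{1-\q^2}$ to come out exactly — in particular the momentum-induced factor $\frac{\beta_1}{1-\beta_1}$, which arises from unrolling $m_t$ in terms of the compressed gradients $\tilde g_s$ — will require careful tracking of how the first-moment accumulator interacts with the error sequence, and this is the part I would expect to consume most of the proof.
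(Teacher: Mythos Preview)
Your high-level architecture (descent inequality $\to$ PL $\to$ contraction recursion $\to$ unroll with $\eta = \Theta(\log T/T)$) is right, but the proposal has a genuine gap in how the descent inequality is obtained, and this is not a cosmetic issue.

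The paper does \emph{not} establish descent on $f(\theta_t)$ directly, and for good reason: the update direction $m_t$ is a momentum average of \emph{compressed} gradients $\tilde g_\tau = \mathcal C(g_\tau + e_\tau)$, which are biased away from $\nabla f(\theta_t)$ by an $O(G)$ amount that does not vanish. Both the non-convex proof and the PL proof in the paper work with the \emph{virtual iterates}
\[
\theta'_{t+1} = \theta_{t+1} - \eta\,\tfrac{\mathcal E_{t+1}}{\sqrt{\hat v_t+\epsilon}},\qquad
x_{t+1} = \theta'_{t+1} - \eta\,\tfrac{\beta_1}{1-\beta_1}\,\tfrac{m'_{t}+\mathcal Z_t}{\sqrt{\hat v_t+\epsilon}},
\]
where $\mathcal E_t,\mathcal Z_t$ are momentum-averaged error and quantization-noise sequences. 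The point of $x_t$ is that its one-step increment is driven by the \emph{raw} stochastic gradient $g_t$ (plus the unbiased noise $\zeta_t$), so smoothness applied to $f(x_t)$ yields a genuine $-\eta\|\nabla f(\cdot)\|^2$ term. The constant $C_1 = \tfrac{\beta_1}{1-\beta_1}(1+C_2)+\tfrac{2\q}{1-\q^2}$ is exactly the factor bounding $\|x_t-\theta_t\|$; the $\tfrac{\beta_1}{1-\beta_1}$ does not come from ``unrolling $m_t$'' but from the definition of the shift in $x_t$. PL is then applied to $\|\nabla f(x_t)\|^2$, giving a contraction on $f(x_t)-f^*$, and only at the very end is $f(x_{T+1})$ related back to $f(\theta_{T+1})$ via smoothness. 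Your Lyapunov candidate $(f(\theta_t)-f^*)+a\,\E\|e_t\|^2$ is too thin for this setting: it accounts for the error buffer but not for momentum or the time-varying preconditioner, and I do not see how it yields a clean $-c\eta\|\nabla f(\theta_t)\|^2$ term without the virtual-iterate decoupling.

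A second concrete omission is the source and handling of the $d$-dependence. It does not come from quantization; it comes from the preconditioner-shift terms $D_t = (\hat v_{t-1}+\epsilon)^{-1/2} - (\hat v_t+\epsilon)^{-1/2}$, which satisfy the telescoping bounds $\sum_t\|D_t\|_1 \le d/\sqrt{\epsilon}$ and $\sum_t\|D_t\|^2 \le d/\epsilon$. Inside a contraction recursion these appear as $\sum_t(1-\eta\mu/C_0)^{T-t}\|D_t\|_1$, and the paper resolves this simply by bounding $(1-\eta\mu/C_0)^{T-t}\le 1$ and then invoking the telescoping sum; this is what produces the $\eta\cdot d/\sqrt\epsilon$ and $\eta^2\cdot d/\epsilon$ terms at the right orders. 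Your proposal does not mention $D_t$ at all and attributes the $(1+d)$ factor to the wrong mechanism, so this step is not covered. (Also minor: $\hat v_{t,i}$ is not bounded by $G^2$ but by $\tfrac{4(1+\q^2)^3}{(1-\q^2)^2}G^2$, since $\tilde g_t$ can exceed $G$ in norm; this is where $C_0$ comes from.)
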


\paragraph{Discussion.} 
In contrast to the general non-convex setup, the study of non-convex analysis under the PL condition for AMSGrad or Adam-type methods is much less extensive. The only work we found analyzing the PL condition, which claims to be the first in this direction, focuses on Adam when $\beta_2\to1$, achieving a convergence rate of $\mathcal{O}(\tfrac{1}{T})$ \citep{He2023Adam}. However, our \method{} is based on AMSGrad normalization, and no constraint on $\beta_2$ is imposed in the analysis. Therefore, similar to the general non-convex case, we are able to achieve the best-known convergence rate in the leading term, up to a logarithmic factor. The third, higher-order term has higher constant dependencies, but they should be negligible as the term is dampened by $T^2$. Hence, in this case as well, the theory predicts that the convergence rate of the algorithm should be similar to the uncompressed version, modulo a constant that can be controlled using the compression parameters.

\squeezevspace
\section{Experiments}\label{section:experiments}
\squeezevspace

We now validate our optimizer experimentally. We focus on comparing \method{} with Adam, Adam-8bit, GaLore and CAME in the context of LLM finetuning on different tasks and with SGD, Adam and AdamW-8bit in the context of ResNets on ImageNet. Concretely, we test our optimizer in full finetuning (FFT) scenario on BERT-Base/Large~\citep{devlin2018bert} and OPT-1.3B~\citep{zhang2022opt} on GLUE/MNLI and Llama2-7B/13B~\citep{touvron2023llama} on the GSM8k math reasoning dataset and on the Open-Platypus instruction tuning dataset, \change{as well as pre-training ResNet models on ImageNet.} We provide full details regarding training settings hyper-parameters in Appendix~\ref{appendix:hyperparameters}.

\paragraph{Finetuning results on GLUE/MNLI.} We first test our integration of \method{} in HuggingFace Transformers~\citep{wolf-etal-2020-transformers} on moderate-sized language models such as BERT-Base/Large (110M and 335M parameters) and OPT-1.3B, comparing with Adam, Adam-8bit, CAME and GaLore. The results are shown in Table~\ref{table:fft-glue-mnli}. Certain optimizers, notably CAME and GaLore, had numerical stability issues across runs;  for a fair comparison, we report the numbers for the run with maximum accuracy. We emphasize that all methods were tuned using the same protocol.

The results show that \method{} achieves comparable memory usage to the state-of-the-art heuristics Adam-8bit and GaLore, while being surprisingly lower than CAME on all tasks. 
The memory savings for GaLore are more visible when the model size increases, which follows our analysis of theoretical memory usage. However, we see that these gains come at a significant \emph{accuracy cost} for GaLore: across all tasks, it drops at least 1\% accuracy relative to \method{}. For BERT-Base we ran GaLore with a higher SVD re-computation frequency $T=20$ ($10\times$ lower) and the results did not improve, but its running time was much higher. 
Relative to 8bit Adam, \method{} uses essentially the same memory, but achieves slightly better accuracy. 

From these results, we conclude that \method{} can provide better accuracy  relative to other memory-efficient methods on moderate-sized models, at similar space costs. We show training loss curves in Appendix~\ref{appendix:training-graphs}.

\begin{table}[!h]
\begin{center}
\caption{\label{table:fft-glue-mnli} Finetuning results on GLUE/MNLI. We report the entire memory usage read from the GPU during training, that includes the optimizer state, activations and gradients. The asterisk flags the runs for which one or two seeds did not converge (we report the run with maximum performance).}
\begin{tabular}{cc|c|ccccc}
    \toprule
    Model          & Metric & \makecell{\method\\$(m=10)$} & Adam & Adam-8b & CAME & \makecell{GaLore\\$r=256$} \\
    \midrule                    
                      & train loss & 0.2651  & 0.4228  & 0.3402  & 0.6431*  & 0.3908*  \\
    \textsc{Base}     & accuracy   & 85.10\% & 83.53\% & 84.61\% & 76.13\%* & 83.82\%* \\
           (110M)     & memory     & 2.55 GB & 2.70 GB & 2.53 GB & 2.69 GB & 2.53 GB \\
    \midrule
                      & train loss & 0.2509 & 0.3857 & 0.2876   & 0.6658* & 0.3768* \\
    \textsc{Large}    & accuracy   & 86.17\% & 84.79\% & 86.18\% & 75.23\%* & 84.90\%* \\
            (335M)    & memory     & 5.98 GB & 6.64 GB & 6.04 GB & 6.59 GB & 5.85 GB \\
    \midrule
                      & train loss & 0.2122 & 0.2066  & 0.2611  & 0.4959  & 0.2831 \\
    \textsc{OPT-1.3B} & accuracy   & 88.18\%  & 87.90\% & 87.81\% & 83.15\% & 87.70 \\
               (1.3B) & memory     & 15.28 GB & 17.66 GB & 15.00 GB & 17.13 GB & 13.66 GB \\
    \bottomrule
\end{tabular}
\end{center}
\end{table}

\paragraph{Finetuning results for LLaMA2 on GSM-8k.} Next, we perform finetuning on Llama-2 7B/13B on GSM-8k, a challenging grade-school-level mathematical reasoning dataset. The baseline model obtains extremely low zero-shot accuracy on this task and therefore fine-tuning is necessary. In this setup, we compare \method{} with Adam and Adam-8bit in terms of evaluation accuracy and memory usage. In Table~\ref{table:fft-gsm8k-7b} we show our results for 3 training epochs, global batch size $32$ with micro-batch (per-device) size $1$, max sequence length $512$ on a single GPU, which are the standard parameters for this task. We integrated our optimizer with the \texttt{llm-foundry} repository of MosaicML and tested via \texttt{lm-evaluation-harness}.

For the 7B model, out results show that \method{} can allow accurate full fine-tuning of a 7B model on this task using a single 40GB GPU. Moreover, \method{} preserves accuracy relative to Adam, with lower memory usage than the well-optimized implementation of 8bit AdamW, and marginally lower running time for the shorter gradient window $m = 10$. 
Increasing the window size $m$ to $20$ gradients leads to slightly better accuracy, at the cost of higher runtime and space, \change{but still in the 40GB limit.}
Running GaLore in this setup was infeasible since using SVD decomposition for all layers in the model was too slow. Preliminary experiments (with high runtimes) did not yield competitive accuracy. We show training loss curves in Appendix~\ref{appendix:training-graphs}.

The results show that \method{} allows for full accuracy recovery on this task as well relative to Adam, despite using 50\% less memory. (The memory usage and runtime are very similar to those in Table~\ref{table:fft-gsm8k-7b} and are therefore omitted from Table~\ref{table:fft-platypus}.) Moreover, \method{} obtains consistently better accuracy relative to Adam-8b, especially on the more challenging ARC-c task.  
\squeezevspace

\begin{table}[!h]
    \begin{center}
        \caption{\label{table:fft-gsm8k-7b} FFT results for Llama-2 7B/13B on GSM-8k.}
        \begin{tabular}{cccccccc}
            \toprule
            \textbf{LLaMA-2 size} & \textbf{Optimizer}& \textbf{Accuracy} & \textbf{State} & \textbf{Total} & \textbf{Runtime} \\
            \midrule
               & \textbf{Adam}      & 34.50\% & 25.1 GB & 55.2 GB & 1h 17m \\
            7B & \textbf{Adam-8b}   & 34.34\% & 12.55 GB & 42.5 GB & 1h 18m \\
               & \textbf{\method{} ($m=10$)} & 34.72\% & 5.65 GB & 37.1 GB & 1h 8m \\
               & \textbf{\method{} ($m=20$)} & 35.10\% & 8.25 GB & 39.7 GB & 1h 37m \\
            \midrule
                & \textbf{Adam} & 47.08\% & 48.42 GB & >80 GB & 1h 20m\\
            13B & \textbf{Adam-8b} & 45.19\% & 24.21 GB & >80 GB & 1h 17m\\
                & \textbf{\method{} ($m=10$)} & 44.88 \% & 10.9 GB & 70 GB & 1h 38m \\
            \bottomrule
        \end{tabular}
    \end{center}
\end{table}

\paragraph{Finetuning results for LLaMA2-7B on Open-Platypus.} Finally, in Table~\ref{table:fft-platypus} we present FFT results with various optimizers on the popular instruction-tuning Open-Platypus dataset~\citep{lee2023platypus}. To ensure fair comparisons, we perform the same grid search for each optimizer to find the best performing learning-rate, while keeping all other hyperparameters at their default values. We use $m=10$ gradients for the sliding window and gradient density $k=1\%$. Evaluations are conducted following the standard few-shot setup of the Open LLM Leaderboard~\citep{open-llm-leaderboard} on the following datasets: ARC-c~\citep{clark2018think}, HellaSwag~\citep{zellers2019hellaswag}, MMLU~\citep{hendrycks2021measuring}, and Winogrande~\citep{winogrande}.
\squeezevspace

\begin{table}[!h]
\begin{center}
\caption{\label{table:fft-platypus} FFT results on instruction-following Open-Platypus~\citep{lee2023platypus} dataset. 
The results show that \method{} fully recovers accuracy relative to baseline Adam, and outperforms the 8bit variant, despite using less memory. 
}
\begin{tabular}{cccccccc}
    \toprule
    Optimizer & Memory & \makecell{Average\\Accuracy} & \makecell{ARC-c\\25-shot} & \makecell{HellaSwag\\10-shot} & \makecell{MMLU\\5-shot} & \makecell{Winogrande\\5-shot} \\
    \midrule
    AdamW     & 67.17 GB & 62.10 & 52.56 & 77.38 & 45.53 & 72.93 \\
    Adam-8b   & 53.93 GB & 61.84 & 51.96 & 77.51 & 44.11 & 73.79 \\
    \method{} & 46.63 GB & 62.36 & 53.07 & 77.46 & 45.04 & 73.87 \\
    \bottomrule
\end{tabular}
\end{center}
\end{table}

\change{\paragraph{Pre-training results for ResNets on ImageNet.} In Table~\ref{table:pt-resnet-imagenet} we present our results for pre-training (from scratch, randomly initialized weights) for ResNet-18/50 on ImageNet (see Figure~\ref{fig:rn18-imagenet} and Figure~\ref{fig:rn50-imagenet} in Appendix~\ref{appendix:training-graphs}). We compare our \method{} with SGD, Adam and AdamW and report the training loss, validation accuracy and only the optimizer state memory because the total memory usage is not an issue for ResNets on ImageNet (here we focus on the results to emphasize the pre-training performance). For ResNet-18, AdamW reaches the lowest training loss, followed by AdamW-8bit, \method{} and in the end \method{}. However, despite slightly larger loss, \method{} yields the best result among all optimizers, with ~2\% more than the highly tuned SGD, while having the lowest memory footprint for the optimizer states. For ResNet-50, AdamW-8bit reaches the lowest training loss, followed by AdamW, \method{} and SGD. The validation accuracy for AdamW and AdamW-8bit is surprisingly small compared to SGD and \method{}. As it is widely known in the community, Adam variants have lower performance than SGD for Computer Vision tasks and \method{} fixes this issue (see the Discussion section for an intuitive explanation for this phenomenon).}
\squeezevspace

\begin{table}[!ht]
\begin{center}
\caption{Pre-training results for ResNet-18 and ResNet-50 on ImageNet.}\label{table:pt-resnet-imagenet}
\begin{tabular}{ccccccc}
\toprule
\textbf{Model} & \textbf{Metric} & \textbf{SGD} & \textbf{AdamW} & \textbf{AdamW-8bit} & \textbf{\method{}} \\
\midrule
& Train Loss & 1.416 & 1.087 & 1.104 & 1.218 \\
ResNet-18 & Accuracy & 69.96\% & 69.83\% & 70.13\% & 71.86\% \\
& State Size & 44.59 MB & 89.18 MB & 22.30 MB & 10.03 MB \\
\midrule
& Train Loss & 0.9770 & 0.5344 & 0.5158 & 0.7732  \\
ResNet-50 & Accuracy & 76.24\% & 72.05\% & 72.48\% & 77.37\% \\
& State Size & 97.49 MB  & 194.98 MB & 48.75 MB & 21.94 MB \\
\bottomrule
\end{tabular}
\end{center}
\end{table}

\change{\paragraph{Pre-training LLMs.} In this section we explain why we do not include LLM pre-training results. Our motivation is twofold. First, \method{} is mainly designed for low-memory finetuning, and the experimental section shows that \method{} achieved this goal. Surprisingly, updating only 10\% of the weights at each step yields to significantly better performance compared to SGD for ResNets on ImageNet. Secondly, our experiments on LLM pre-training showed difficulties in achieving the same performance compared to AdamW-8bit. Our explanation is that projection matrices from the attention layers must receive dense updates to learn the correlations between words. In contrast to the convolutional filters for CV models, the weights in attention are much larger and try to capture global correlations (features) between words, while the convolutional filters are smaller and capture local features.}

\change{\paragraph{Discussion.} In Appendix~\ref{appendix:discussion} we provide information about the optimization set, intuitive explanations for the implicit regularization effect of \method{}, as well as an overview of our results.}

\squeezevspace
\section{Limitations and Broader Impact}
\label{section:limitations} 
\label{section:broader-impact}
\squeezevspace

The \method{} algorithm we propose is designed and tested with fine-tuning workloads in mind, where the user aims to minimize the memory cost of optimizing over a powerful pre-trained model. Additional work is needed to adapt our approach to the case of LLM pre-training, which presents a different set of challenges, both in terms of implementation and optimization trajectory. We plan to undertake this study in future work \change{as the current implementation works for ResNets}.


Another limitation we aim to address in future work is that we have only focused on sparsity as a form of gradient projection. However, our theoretical analysis also applies to low-rank projection of gradients. We believe that our practical implementation can be extended to this case as well, although providing a general, accurate, and efficient implementation will require non-trivial efforts.

Our work introduces a new, accurate, and memory-efficient optimizer for fine-tuning LLMs. The major positive impact of our approach is its ability to maintain performance while reducing memory requirements, thereby lowering the cost of running experiments due to the reduced hardware expenses. It is important to note that while our optimizer can enhance performance and reduce costs, we do not have control over the neural network applications trained with it.

\squeezevspace
\section*{Acknowledgements}
\squeezevspace

The authors thank Razvan Pascanu, Mahdi Nikdan and Soroush Tabesh for their valuable feedback, the IT department from Institute of Science and Technology Austria for the hardware support and Weights and Biases for the infrastructure to track all our experiments.
Mher Safaryan has received funding from the European Union's Horizon 2020 research and innovation program under the Marie Sklodowska-Curie grant agreement No 101034413.

\bibliographystyle{abbrvnat}
\bibliography{NeurIPS24/references.bib}

\medskip

\newpage
\appendix

{
\tableofcontents
}

\clearpage
\section{Additional Explanations and Experimental Details}\label{appendix:discussion}
\change{\paragraph{Optimization set.} The parameters included in the optimization set usually vary depending on the model and optimizer type. For GLUE, we do not include the embeddings in the optimization set for any of the optimizers because our experiments showed no significant difference when optimizing the embeddings. Moreover, it is more fair for GaLore which would have an increased memory usage due to the projection matrix for the embeddings. For LLaMa2 and ResNet models, we include all layers in the optimization set, regardless of their type. This means that \method{} updates at most $10\%$ of the weights in each layer. In the original work, GaLore was applied only to a subset of layers, such as Q-, K-, V-, O-, up-, down- and gate-projection layers. Applying GaLore to all layers in the same way as we do with \method{} would result in much larger memory usage because of the projection matrices for the embeddings. Moreover, computing SVD for the embedding layer would be infeasible. As a result, we omit the GaLore for LLaMa2 and ResNet experiments.}

\change{\paragraph{Implicit regularization.} In our results so far, we observed \method{} having better performance for a few LLM finetuning tasks, but especially for the ResNet/ImageNet results, where the difference was statistically significant (around 1\%). Our intuition for this behavior mainly comes from the accuracy curves in Figure~\ref{fig:rn18-imagenet} and Figure~\ref{fig:rn50-imagenet}. The trajectories of \method{} and SGD are typical for regularized training. We hypothesize that \method{} has an implicit regularization mechanism because the model update $u_t$ is ~90\% sparse, which leads to only updating ~10\% of the model parameters at each step in each layer. In contrast to a 100\% dense update $u_t$, a sparse update would not change the model parameters as much as a dense one. In the ResNet experiments, all optimizers used the same regularization parameter $\lambda=1e-4$, but the accuracy graph shows that \method{} is more regularized than SGD, while the graphs for AdamW and AdamW-8bit look like the regularization does not have any effect.}

\paragraph{Discussion.} In summary, the experimental results have shown that \method{} can recover the state-of-the-art accuracy of the the uncompressed Adam baseline, while providing significant memory gains and matching wall-clock speed on billion-parameter models. 
Specifically, our approach matches and outperforms Adam-8b and CAME both in terms of memory use and in terms of final accuracy. Relative to the high-compression GaLore method, \method{} provides consistently higher accuracy, as well as  more stable practical convergence. We  conclude that \method{} should be a good alternative to Adam-8bit in memory-constrained settings, and that the empirical results appear to validate our theoretical predictions.

\section{Training Settings and Hyper-parameters}\label{appendix:hyperparameters}
In this section we provide details about the hyper-parameters that we used for each model and dataset.
We train all our models in \textit{bfloat16} format, tune the learning rates on a grid and report the best accuracy among 3 seeds (7, 42 and 1234) and report the results for the best configuration that converged.

All Adam variants use default parameters $\beta_1=0.9, \beta_2=0.999, \epsilon=10^{-8}$ and the regularization parameter $\lambda$ is $0$ for finetuning and $3e-4$ for ImageNet pre-training. \method{} uses a window size of $m=10$ gradients with $k=1\%$ density (equivalent to $99\%$ sparsity and quantization bucket size is set to 64 for the error feedback.

For GaLore we use rank $r=256$ and the SVD update interval is set to $T=200$, as suggested by the original paper. We run our experiments on NVidia GPUs A100-SXM4-80GB, H100-80GB and on RTX 3090 with 24GB RAM in single GPU setup.

\subsection{GLUE/MNLI}
For GLUE/MNLI, we used the learning rate grid $\{1e-6, 3e-6, 5e-6, 7e-6, 1e-5, 3e-5, 5e-5, 7e-5\}$ for all optimizers and models. Certain optimizers diverge for specific seeds. Next, we provide some details about hyper-parameters for each optimizer individually.

\paragraph{\method{}.} We use $m=10$ gradients in the sliding window, $k=1\%$ density (e.g. $99\%$ sparsity) and quantization bucket size $64$ (we also tried 100 000, but this didn't affect performance or memory usage in a meaningful way).

\paragraph{Adam and Adam-8bit.} All hyper-parameters mentioned above apply for these two main baseline optimizers.

\paragraph{GaLore.} We use rank $r=256$ and SVD update interval $T \in \{20, 200\}$. In the original GaLore paper, the authors tune both learning rate and in our experiments we keep scale fixed to value 1 and augment the learning rate grid with the values $\{1e-4, 3e-4, 5e-4, 7e-4\}$.

\paragraph{CAME.} This optimizer has some additional parameters that we keep to default values, such as $\beta_3 = 0.9999$. Instead of $\epsilon$, it uses $\epsilon_1=1e-30$ and $\epsilon_2=1e-16$. The authors mention that the learning rate should be much smaller than Adam's and because of that we augment the learning rate grid with the values $\{1e-7, 3e-7, 5e-7, 7e-7\}$.

\subsection{GSM-8k.}
For GSM-8k, we used the learning rate grid $\{1e-5, 2e-5, 3e-5, 4e-5, 5e-5, 6e-5, 7e-5, 8e-5, 9e-5\}$ and reported the model with the best evaluation accuracy. We found that different versions for \texttt{PyTorch}, \texttt{lm-eval-harness} and \texttt{llm-foundry} have large variance in the results.

\paragraph{\method{}.} We use similar settings as for GLUE/MNLI above in terms of other hyper-parameters.

\subsection{ImageNet}
\change{For ImageNet, we integrate our \method{} in the FFCV repository, which is highly tuned for ResNets and SGD. We use $E=100$ epochs, batch size 1024, cosine learning rate schedule with warmup and image resolution $224 \times 224$ and precision \textit{bfloat16}. We started from the initial learning rate $\eta=1.024$ tuned in the repository which scored highest accuracy for SGD. This learning rate also worked well for \method{}, but it didn't work for AdamW and AdamW-8bit. For these two Adam variants we divided the learning rate by 2 until the models converged.}

\change{\paragraph{ResNet-18.} For AdamW, the learning rate is $\eta=0.016$ and for AdamW-8bit is $\eta=0.032$.}

\change{\paragraph{ResNet-50.} For AdamW, the learning rate is $\eta=0.008$ and for AdamW-8bit is $\eta=0.008$.}

\section{Training Graphs}\label{appendix:training-graphs}
In this section we show training loss curves for BERT-Base, BERT-Large and OPT-1.3b on GLUE/MNLI and Llama-2 7B/13B on GSM-8k and ResNet-18/50 on ImageNet.

\begin{figure}[!h]
\centering
\caption{Training curves for BERT-Base on GLUE/MNLI}  
\includegraphics[width=\textwidth]{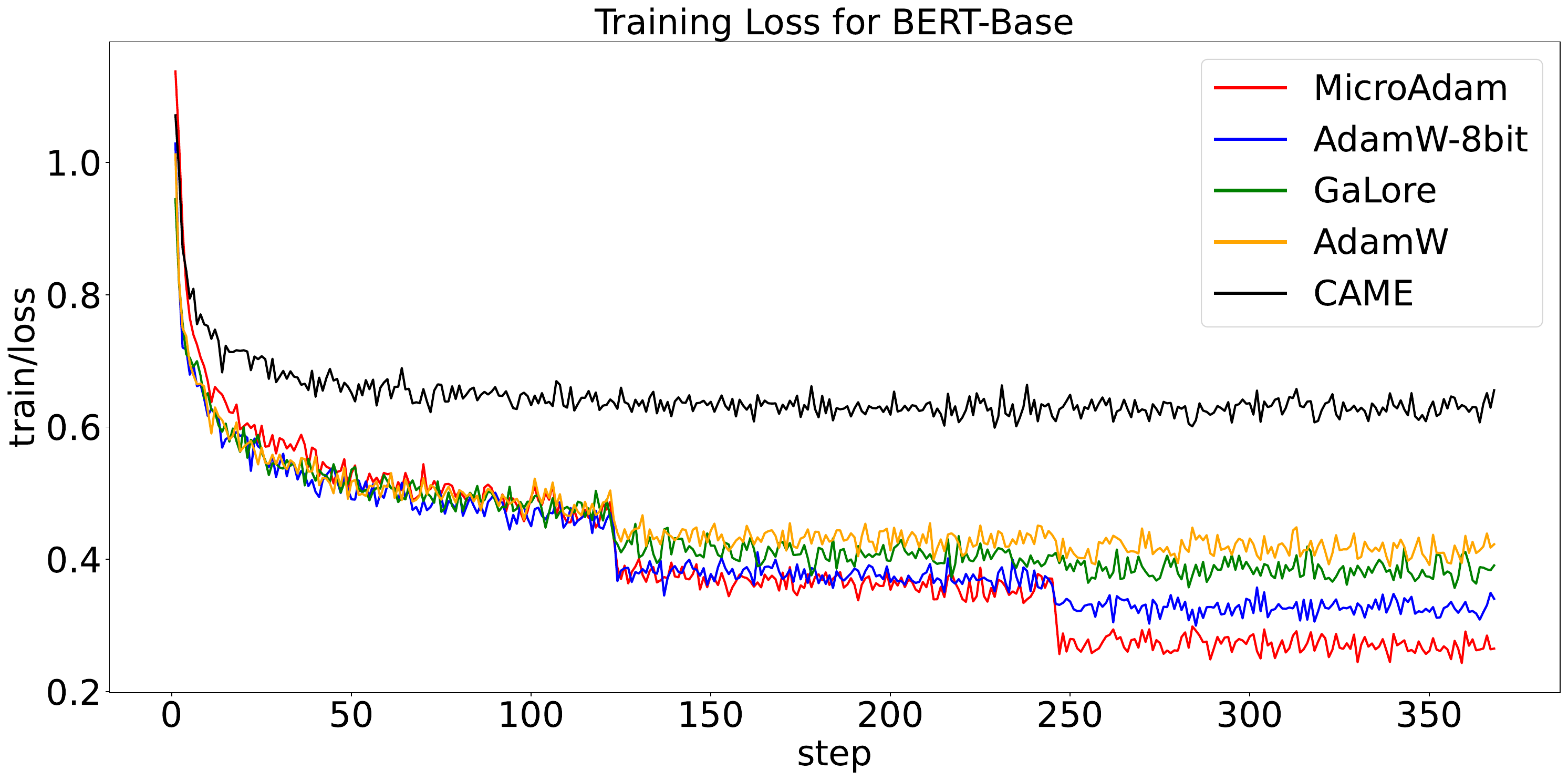}
\end{figure}

\begin{figure}[!h]
\centering
\caption{Training curves for BERT-Large on GLUE/MNLI}  
\includegraphics[width=\textwidth]{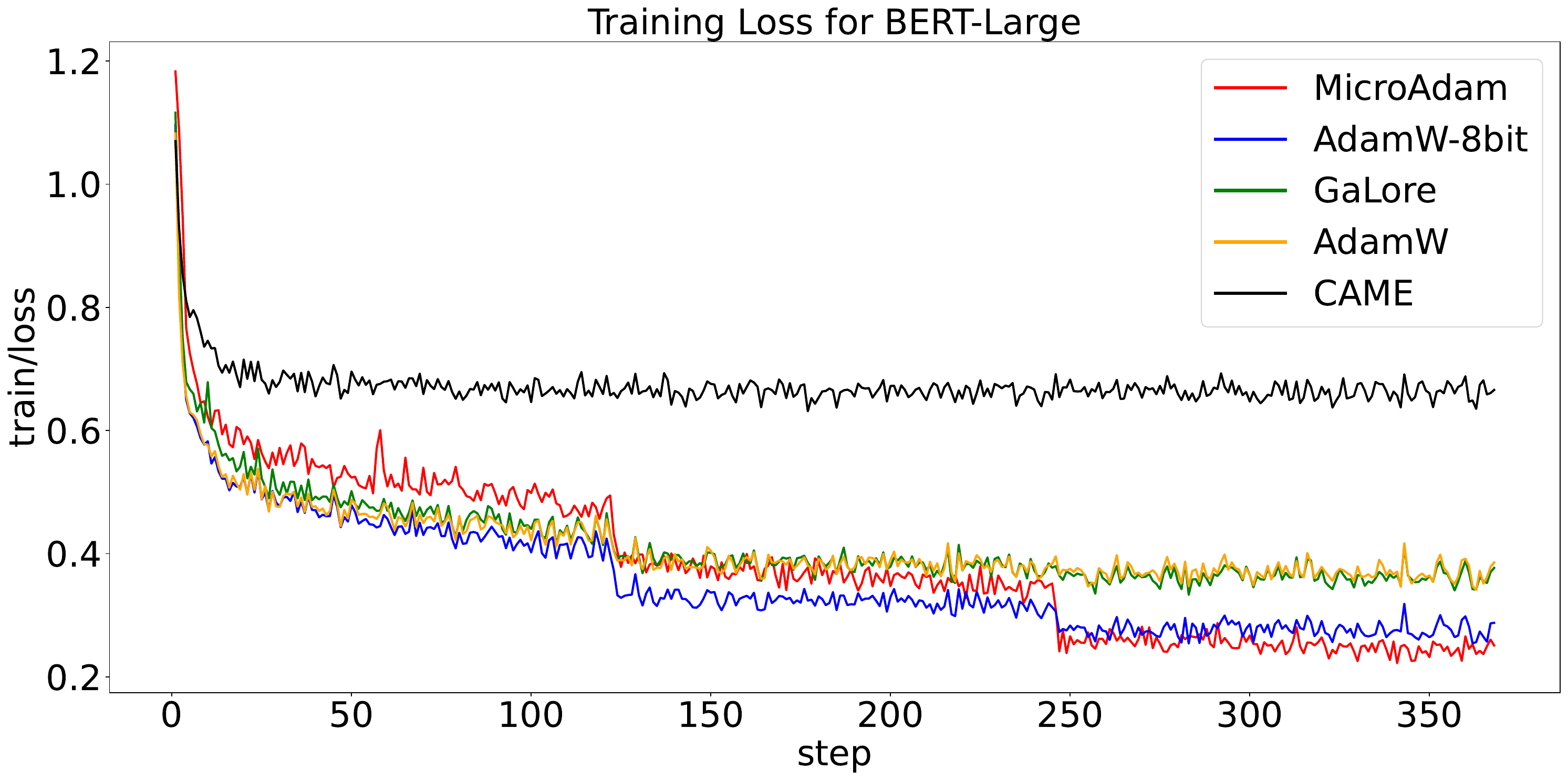}
\end{figure}

\begin{figure}[!h]
\centering
\caption{Training curves for OPT-1.3B on GLUE/MNLI}  
\includegraphics[width=\textwidth]{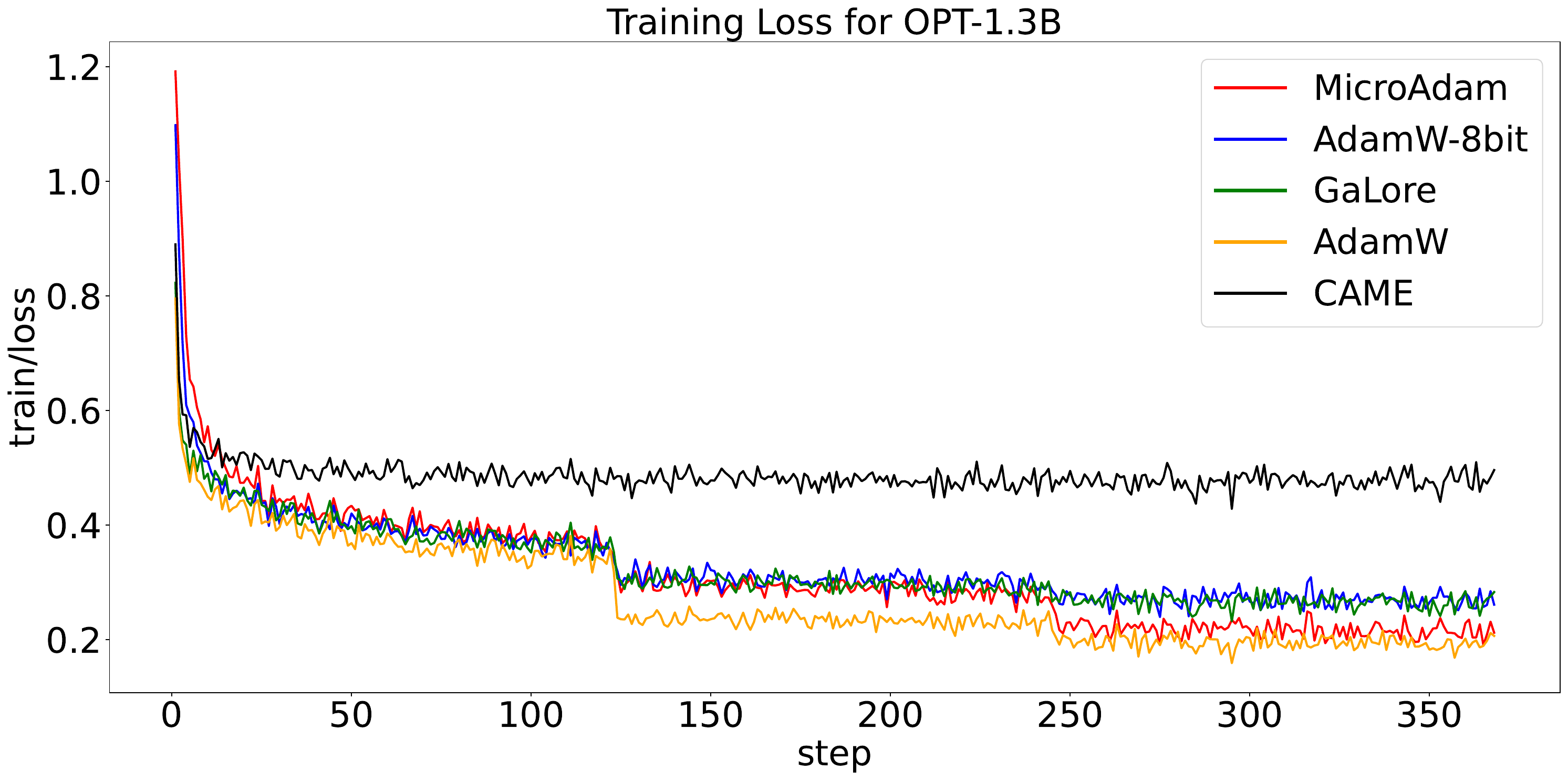}
\end{figure}

\begin{figure}[!h]
\centering
\caption{Training curves for Llama-2 7B on GSM-8k}  
\includegraphics[width=\textwidth]{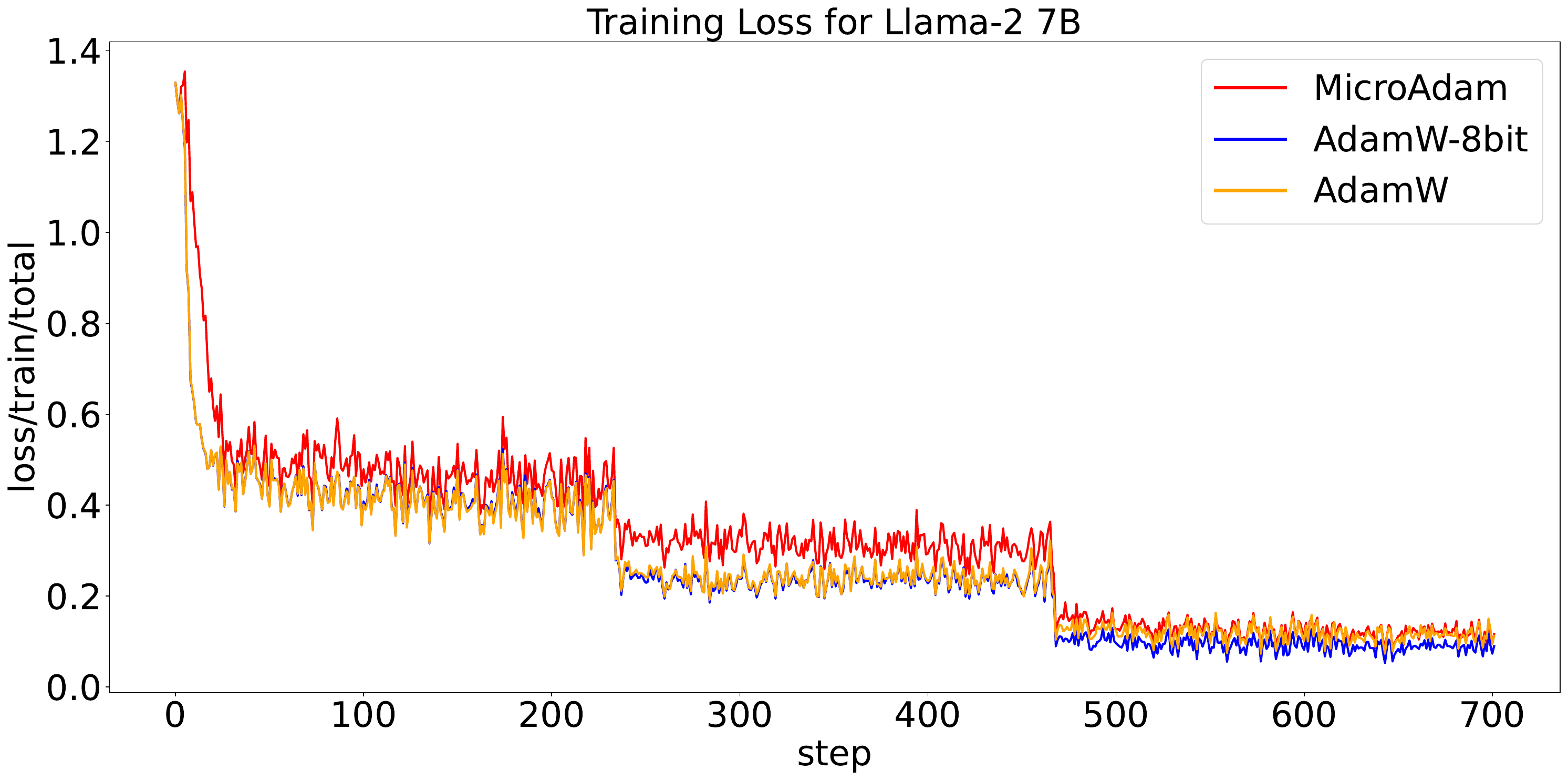}
\end{figure}

\begin{figure}[!h]
    \caption{Pre-training for ResNet-18 on ImageNet}\label{fig:rn18-imagenet}
    \begin{minipage}{0.49\linewidth}
        \includegraphics[width=0.99\linewidth]{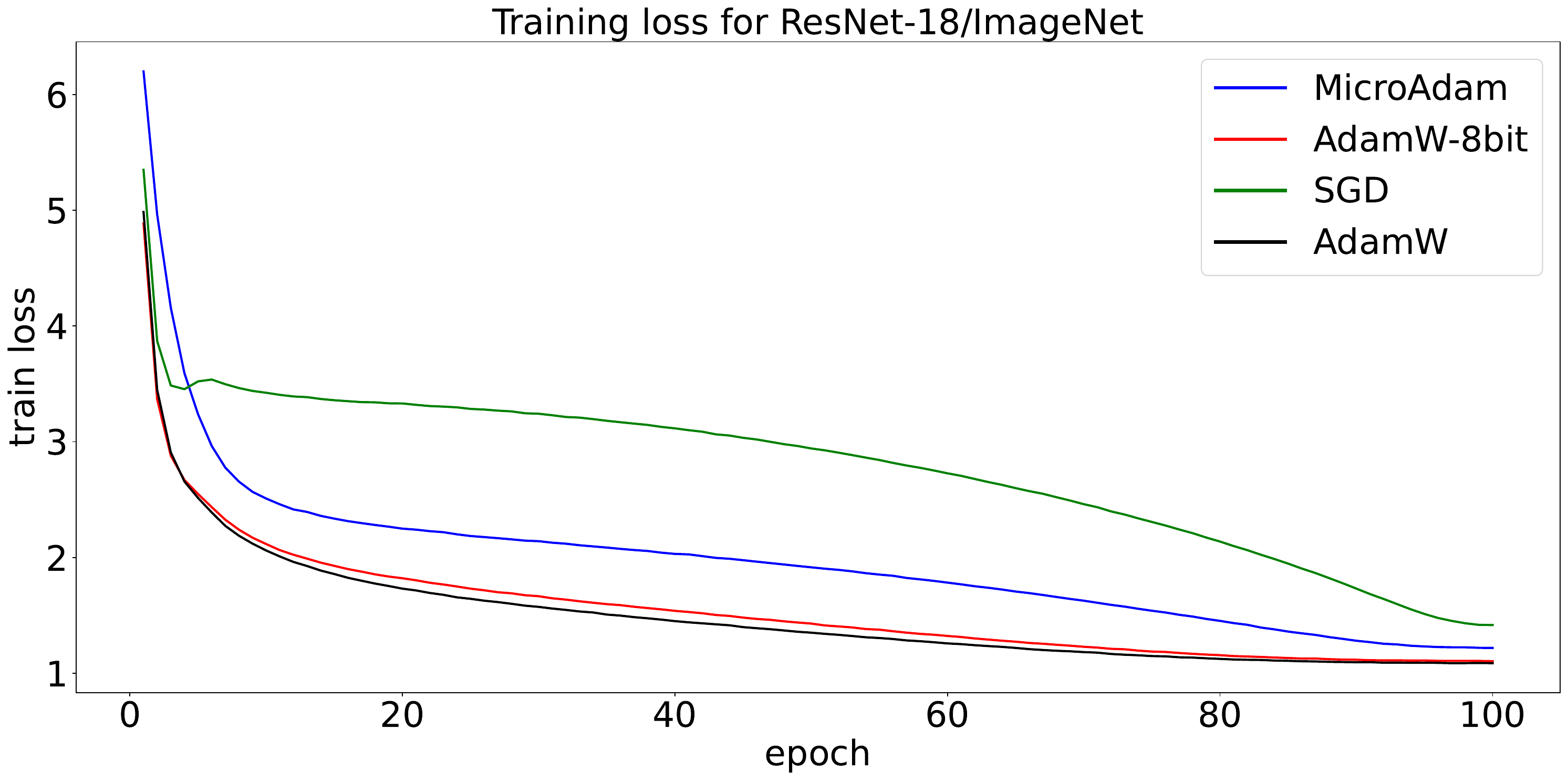}
    \end{minipage}
    \begin{minipage}{0.49\linewidth}
        \includegraphics[width=0.99\linewidth]{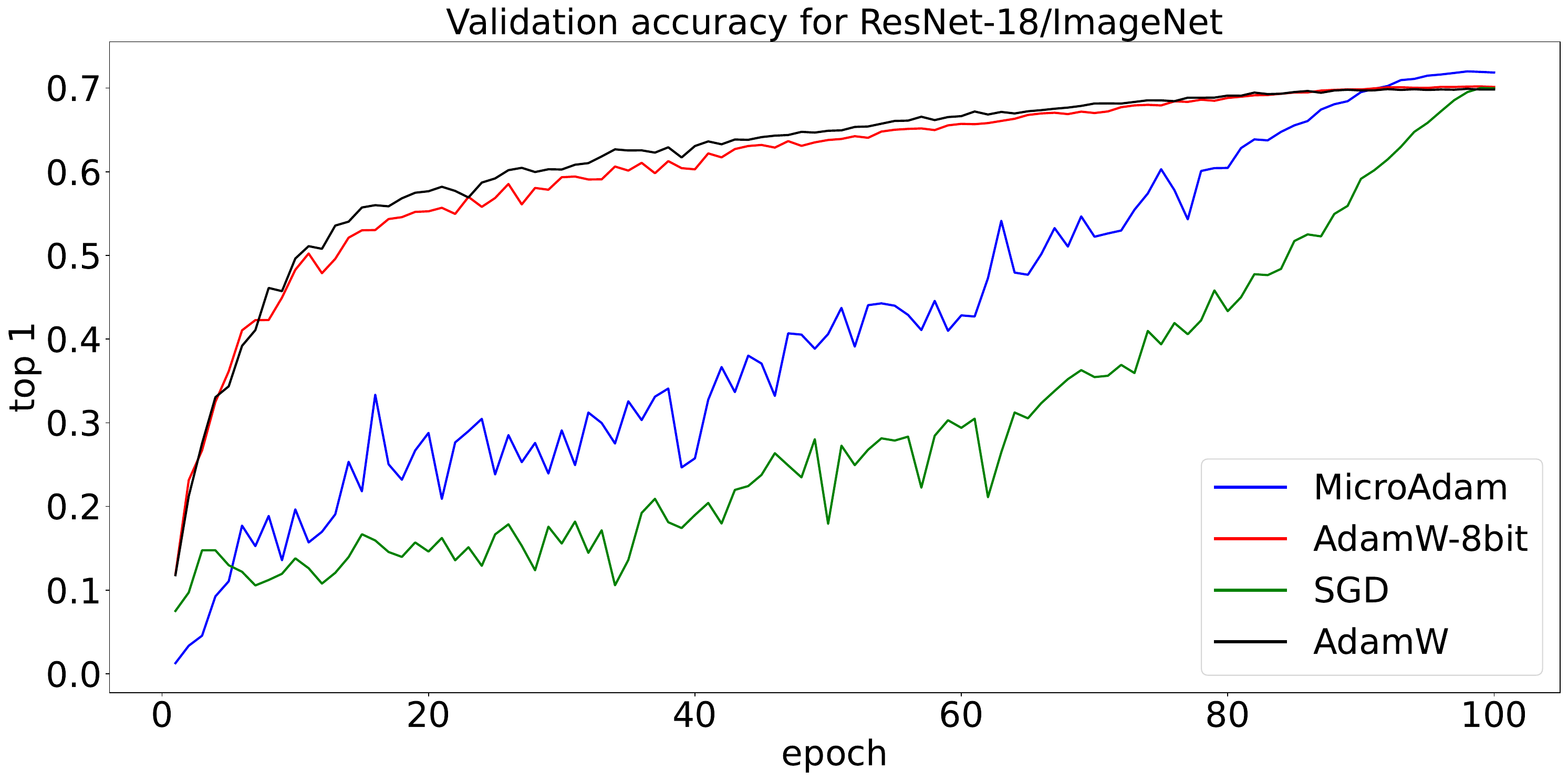}
    \end{minipage}
\end{figure}

\begin{figure}[!h]
    \caption{Pre-training for ResNet-50 on ImageNet}\label{fig:rn50-imagenet}
    \begin{minipage}{0.49\linewidth}
        \includegraphics[width=0.99\linewidth]{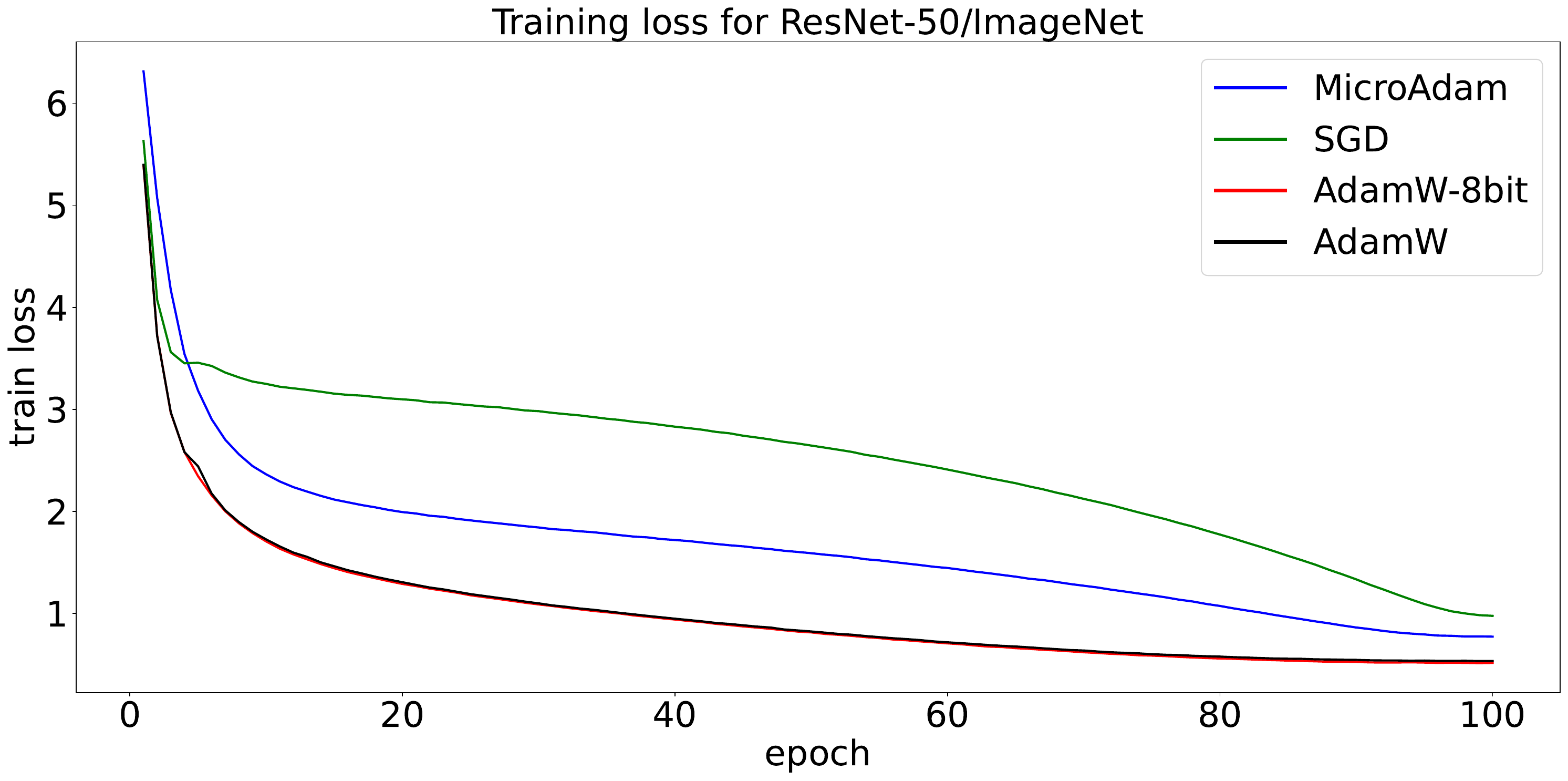}
    \end{minipage}
    \begin{minipage}{0.49\linewidth}
        \includegraphics[width=0.99\linewidth]{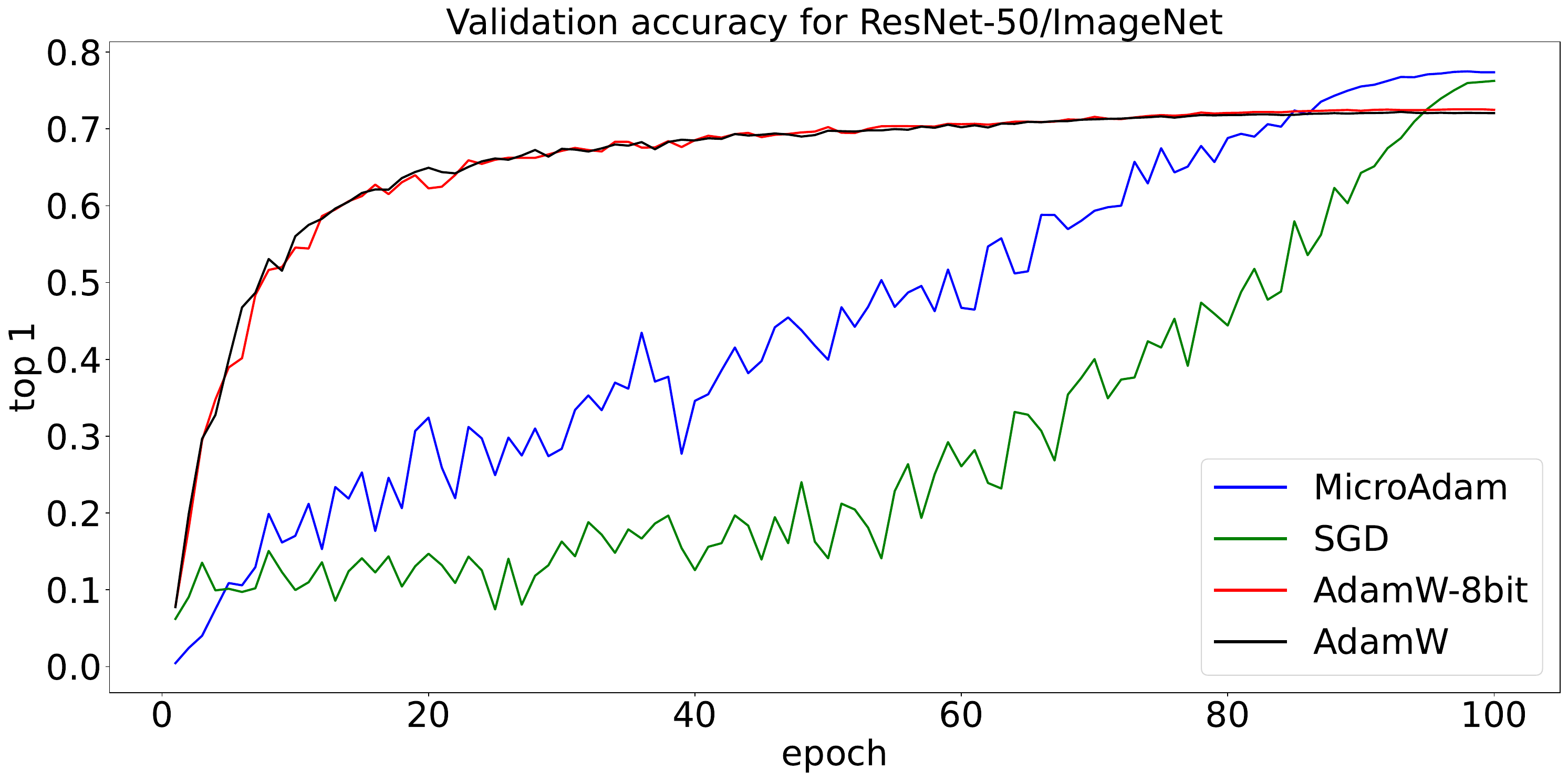}
    \end{minipage}
\end{figure}


\section{Memory footprint for the optimizer state} \label{appendix:memory-python}
In this section we provide a python script to simulate the memory usage for our optimizer's state for Llama2-7b model. Note that the theoretical memory usage will always be slightly lower than the actual allocated memory on the GPU because PyTorch usually allocates more. To run this script, run the following commands:

\begin{lstlisting}[language=Python]
import math

d = 6_738_415_616 # actual number of parameters for Llama-2 7b
k = math.ceil(d / 100)
m = 10

M_AW32 = 8 * d / (2 ** 30)
M_AW16 = 4 * d / (2 ** 30)
M_AW8 = 2 * d / (2 ** 30)
M_muA = (0.5 * d + 4 * m * k) / (2 ** 30)  # B to GB

print(f'{M_AW32=:.2f} GB')
print(f'{M_AW16=:.2f} GB')
print(f'{M_AW8=:.2f} GB')
print(f'{M_muA=:.2f} GB')

GL_sumA = 1_423_872 # sum_Ai from Llama-2 7B
epsilon = 266_240 # sum of sizes for rank-1 layers

for bits, const in [(8, 4), (16, 6)]:
    for rank in [256, 1024]:
        dr = rank * GL_sumA
        M_GLAW_rank = (const * dr + 2 * epsilon) / (2 ** 30)
        print(f'M_GLAW{bits}_rank{rank}={M_GLAW_rank:.2f} GB')
\end{lstlisting} 

\section{Deferred Proofs}\label{appendix:deferred-proofs}


At time step $t$, let the uncompressed stochastic gradient be $g_t = \widetilde\nabla_\theta f(\theta_t)$, the error accumulator be $e_t$, and the compressed gradient after the error correction be $\tilde g_{t}=\mathcal C(g_{t}+e_{t})$. The second moment computed by the compressed gradients is denoted as $v_t=\beta_2 v_{t-1}+(1-\beta_2) \tilde g_t^2$, and $\hat v_t=\max\{\hat v_{t-1}, v_t\}$ is the AMSGrad normalization for the second-order momentum. Besides the first-order gradient momentum $m_t$ used in the algorithm description, we define similar running average sequence $m'_t$ based on the uncompressed gradients $g_t$.
	\begin{align*}
		m_t=\beta_1 m_{t-1}+(1-\beta_1)\tilde g_t \quad & \textrm{and} \quad m_t'=\beta_1 m_{t-1}'+(1-\beta_1) g_t,
	\end{align*}
	Note that $m_t'$ is used only in the analysis, we do not need to store or compute it. By construction we have
    \begin{equation*}
    m_t=(1-\beta_1)\sum_{\tau=1}^t \beta_1^{t-\tau} \tilde g_\tau, \quad
    m_t'=(1-\beta_1)\sum_{\tau=1}^t \beta_1^{t-\tau} g_\tau
    \end{equation*}
	
	Denote by $\zeta_t = e_{t+1} - (e_t + g_t - \tilde g_t) = \mathcal Q(e_t + g_t - \tilde g_t) - (e_t + g_t - \tilde g_t)$ the compression noise from $\mathcal Q$. Due to unbiasedness of the compressor $\mathcal Q$ (see Assumption \ref{ass:quant-error}), we have $\E[\zeta_t \mid \theta_t, g_t, \tilde g_t, e_t] = 0$. Also, from the update rule of $e_{t+1}$ we get $e_{t+1} = e_t + g_t - \tilde g_t + \zeta_t$. Moreover, we use the following auxiliary sequences,
	\begin{align*}
        & \mathcal E_{t+1}\eqdef \beta_1\mathcal E_t + (1-\beta_1)e_{t+1} = (1-\beta_1)\sum_{\tau=1}^{t+1} \beta_1^{t+1-\tau} e_\tau. \\
		& \mathcal Z_{t+1}\eqdef \beta_1\mathcal Z_t + (1-\beta_1)\zeta_{t+1} =  (1-\beta_1)\sum_{\tau=1}^{t+1} \beta_1^{t+1-\tau} \zeta_\tau.
	\end{align*}

\subsection{Intermediate Lemmas}\label{app:lemmas}

\LemmaQuant*
\begin{proof}
The unbiasedness can be verified directly from the definition for each coordinate. Without loss of generality assume that $\delta=x_1\le x_2\le\dots\le x_{d-1}\le x_d = \Delta$. By construction of the quantization, we have $|\hat x_1 - x_1| = |\hat x_d - x_d| = 0$ and $|\hat x_i - x_i| \le u$ for the remaining coordinates $2\le i\le d-1$. Then
$$
\|\hat x - x\|^2 = \sum_{i=1}^d |\hat x_i - x_i|^2
\le (d-2)u^2
\le \frac{(d-2)u^2}{\Delta^2 + \delta^2} \|x\|^2,
$$
which completes the proof.
\end{proof}

\begin{Lemma} \label{lemma:m_t,m_t'}
	Under Assumptions~\ref{ass:quant}-\ref{ass:var}, for all iterates $t$ and $T$ we have
	\begin{equation*}
		\|m_t'\|\leq G, \quad\text{and}\quad \sum_{t=1}^T\mathbb E[\|m_t'\|^2]\leq T\sigma^2 + \sum_{t=1}^T \mathbb E[\|\nabla f(\theta_t)\|^2].
	\end{equation*}
\end{Lemma}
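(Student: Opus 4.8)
\textbf{Proof proposal for Lemma~\ref{lemma:m_t,m_t'}.}

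The plan is to exploit the explicit convex-combination form $m_t' = (1-\beta_1)\sum_{\tau=1}^t \beta_1^{t-\tau} g_\tau$ recorded just above. For the uniform bound, I would first note that $(1-\beta_1)\sum_{\tau=1}^t \beta_1^{t-\tau} \le (1-\beta_1)\sum_{j=0}^{\infty}\beta_1^{j} = 1$, so $m_t'$ is a sub-convex combination of the vectors $g_1,\dots,g_t$. Then by the triangle inequality and Assumption~\ref{ass:boundgrad} (which gives $\|g_\tau\|\le G$ for every $\tau$),
\[
\|m_t'\| \le (1-\beta_1)\sum_{\tau=1}^t \beta_1^{t-\tau}\|g_\tau\| \le (1-\beta_1)\sum_{\tau=1}^t \beta_1^{t-\tau} G \le G.
\]

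For the second claim I would proceed in two stages. First bound $\E[\|m_t'\|^2]$ for a single $t$ by a convexity (Jensen) argument: since the coefficients $w_{t,\tau}\eqdef(1-\beta_1)\beta_1^{t-\tau}$ satisfy $\sum_{\tau=1}^t w_{t,\tau}\le 1$, one has $\|m_t'\|^2 = \|\sum_\tau w_{t,\tau} g_\tau\|^2 \le \sum_\tau w_{t,\tau}\|g_\tau\|^2$ (extending the weights by a zero-mass dummy term if one wants an exact convex combination). Taking expectations and using $\E[\|g_\tau\|^2] = \E[\|g_\tau-\nabla f(\theta_\tau)\|^2] + \E[\|\nabla f(\theta_\tau)\|^2] \le \sigma^2 + \E[\|\nabla f(\theta_\tau)\|^2]$ (Assumption~\ref{ass:var} plus unbiasedness and the bias-variance decomposition) gives
\[
\E[\|m_t'\|^2] \le \sum_{\tau=1}^t w_{t,\tau}\bigl(\sigma^2 + \E[\|\nabla f(\theta_\tau)\|^2]\bigr).
\]
Second, sum over $t=1,\dots,T$ and swap the order of summation. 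The coefficient multiplying $\sigma^2 + \E[\|\nabla f(\theta_\tau)\|^2]$ becomes $\sum_{t=\tau}^T w_{t,\tau} = (1-\beta_1)\sum_{t=\tau}^T \beta_1^{t-\tau} \le (1-\beta_1)\sum_{j=0}^\infty \beta_1^j = 1$, so each term is counted with weight at most $1$, yielding
\[
\sum_{t=1}^T \E[\|m_t'\|^2] \le \sum_{\tau=1}^T \bigl(\sigma^2 + \E[\|\nabla f(\theta_\tau)\|^2]\bigr) = T\sigma^2 + \sum_{\tau=1}^T \E[\|\nabla f(\theta_\tau)\|^2],
\]
which is the desired inequality after renaming $\tau$ to $t$.

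I do not expect any serious obstacle here; the only point needing mild care is making the Jensen step rigorous when $\sum_\tau w_{t,\tau} < 1$ (strict inequality for finite $t$), which is handled either by the ``dummy zero term'' trick above or by simply noting $\|\sum_\tau w_{t,\tau}g_\tau\|^2 \le (\sum_\tau w_{t,\tau})(\sum_\tau w_{t,\tau}\|g_\tau\|^2) \le \sum_\tau w_{t,\tau}\|g_\tau\|^2$ via Cauchy--Schwarz. The double-sum interchange and the geometric-series bound $(1-\beta_1)\sum_{j\ge0}\beta_1^j = 1$ are the workhorses and are entirely routine.
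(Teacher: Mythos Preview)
Your proposal is correct and essentially identical to the paper's proof: triangle inequality plus $\|g_\tau\|\le G$ for the first bound, then Jensen on the (sub-)convex combination, the bias--variance decomposition $\E[\|g_\tau\|^2]\le\sigma^2+\E[\|\nabla f(\theta_\tau)\|^2]$, and finally swapping sums with the geometric-series bound $(1-\beta_1)\sum_{j\ge0}\beta_1^j=1$. The only cosmetic difference is that the paper pulls the $\sigma^2$ contribution out to $T\sigma^2$ before the double-sum interchange, whereas you carry it through the interchange; your extra care about the sub-convex Jensen step is a nice touch the paper leaves implicit.
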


\begin{proof}
The first part follows from triangle inequality and the Assumption~\ref{ass:boundgrad} on bounded stochastic gradient:
\begin{equation*}
    \|m_t'\| = (1-\beta_1)\left\|\sum_{\tau=1}^t \beta_1^{t-\tau} g_{\tau} \right\|\leq (1-\beta_1)\sum_{\tau=1}^t \beta_1^{t-\tau} \|g_{\tau}\| \le G.
\end{equation*}
For the second claim, the expected squared norm of average stochastic gradient can be bounded by
\begin{equation}\label{var-decomp}
    \E\left[\|g_t\|^2\right]
    = \E\left[\|g_t-\nabla f(\theta_t))\|^2\right] + \E[\|\nabla f(\theta_t)\|^2]\\
    \le \sigma^2 + \E[\|\nabla f(\theta_t)\|^2],
\end{equation}
where we use Assumption~\ref{ass:var} that $g_t$ is unbiased with bounded variance. Let $g_{t,j}$ denote the $j$-th coordinate of $g_t$. Applying Jensen's inequality for the squared norm, we get
\begin{eqnarray*}
    \E[\|m_t'\|^2]
    &=& \E\left[\left\|(1-\beta_1)\sum_{\tau=1}^t\beta_1^{t-\tau} g_\tau\right\|^2\right]\\
    &\leq& (1-\beta_1)\sum_{\tau=1}^t \beta_1^{t-\tau}\E[\|g_\tau\|^2]\\
    &\leq& \sigma^2 + (1-\beta_1)\sum_{\tau=1}^t \beta_1^{t-\tau}\mathbb E[\|\nabla f(\theta_{\tau})\|^2],
\end{eqnarray*}
Summing over $t=1,\dots,T$, we obtain
\begin{align*}
    \sum_{t=1}^T\E[\|m_t'\|^2]
    \le T\sigma^2 + (1-\beta_1)\sum_{t=1}^T\sum_{\tau=1}^t \beta_1^{t-\tau}\mathbb E[\|\nabla f(\theta_{\tau})\|^2]
    \leq T\sigma^2+\sum_{t=1}^T \mathbb E[\|\nabla f(\theta_t)\|^2],
\end{align*}
which completes the proof.
\end{proof}

\begin{Lemma} \label{lemma:bound e_t}
	Let $\q=(1+\omega)q<1$. Under Assumptions~\ref{ass:quant}-\ref{ass:var}, for all iterates $t$ we have
	\begin{align*}
		&\|e_{t}\|^2\leq \frac{4\q^2}{(1-\q^2)^2}G^2,\\
		&\E[\|e_{t+1}\|^2]\leq \frac{4\q^2}{(1-\q^2)^2}\sigma^2 + \frac{2\q^2}{1-\q^2}\sum_{\tau=1}^t \left(\frac{1+\q^2}{2}\right)^{t-\tau} \E[\|\nabla f(\theta_\tau)\|^2].
	\end{align*}
\end{Lemma}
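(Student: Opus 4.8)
The plan is to reduce both estimates to a single scalar recursion for $\|e_t\|^2$ driven by the one-step contraction of the error buffer. The first step is to show $\|e_{t+1}\|\le \q\,\|e_t+g_t\|$: since $e_{t+1}=\mathcal Q\big((e_t+g_t)-\mathcal C(e_t+g_t)\big)$, the $\omega$-bound of $\mathcal Q$ (Assumption~\ref{ass:quant-error}) gives $\|e_{t+1}\|\le(1+\omega)\,\|(e_t+g_t)-\mathcal C(e_t+g_t)\|$, and $q$-contractivity of $\mathcal C$ (Assumption~\ref{ass:quant}) gives $\|(e_t+g_t)-\mathcal C(e_t+g_t)\|\le q\,\|e_t+g_t\|$; multiplying yields the factor $\q=(1+\omega)q$. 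Note only the $\omega$-bound of $\mathcal Q$ is used here, not its unbiasedness.

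Next I would square and apply Young's inequality $\|e_t+g_t\|^2\le(1+\gamma)\|e_t\|^2+(1+\gamma^{-1})\|g_t\|^2$ with the choice $\gamma=\frac{1-\q^2}{2\q^2}$, selected so that $\q^2(1+\gamma)=\frac{1+\q^2}{2}$; using $\q<1$ to bound the remaining coefficient $\q^2(1+\gamma^{-1})=\frac{\q^2(1+\q^2)}{1-\q^2}\le\frac{2\q^2}{1-\q^2}$, this produces the recursion $\|e_{t+1}\|^2\le\frac{1+\q^2}{2}\|e_t\|^2+\frac{2\q^2}{1-\q^2}\|g_t\|^2$. For the first claim I substitute $\|g_t\|^2\le G^2$ (Assumption~\ref{ass:boundgrad}), unroll from $e_1=0$, and sum $\sum_{j\ge0}\big(\tfrac{1+\q^2}{2}\big)^j=\tfrac{2}{1-\q^2}$ to get $\|e_t\|^2\le\tfrac{2\q^2}{1-\q^2}\cdot\tfrac{2}{1-\q^2}G^2=\tfrac{4\q^2}{(1-\q^2)^2}G^2$.

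For the second claim I would take expectations in the recursion, replace $\E[\|g_t\|^2]$ by $\sigma^2+\E[\|\nabla f(\theta_t)\|^2]$ (the variance decomposition already established in the proof of Lemma~\ref{lemma:m_t,m_t'}), unroll from $e_1=0$, and split the resulting double sum: the $\sigma^2$ part is $\frac{2\q^2}{1-\q^2}\sum_{\tau=1}^t\big(\tfrac{1+\q^2}{2}\big)^{t-\tau}\sigma^2\le\frac{2\q^2}{1-\q^2}\cdot\frac{2}{1-\q^2}\sigma^2=\frac{4\q^2}{(1-\q^2)^2}\sigma^2$, while the gradient part is exactly $\frac{2\q^2}{1-\q^2}\sum_{\tau=1}^t\big(\tfrac{1+\q^2}{2}\big)^{t-\tau}\E[\|\nabla f(\theta_\tau)\|^2]$, matching the stated bound.

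There is no essential difficulty — this is the standard error-feedback contraction argument. The only points requiring care are: picking the Young's parameter so that the contraction rate is exactly $\frac{1+\q^2}{2}$ rather than some arbitrary value in $(\q^2,1)$; invoking $\q<1$ at the right moment to tidy the $\|g_t\|^2$-coefficient; and keeping the geometric-series bookkeeping tight so the final constants come out as $\frac{4\q^2}{(1-\q^2)^2}$ and $\frac{2\q^2}{1-\q^2}$ exactly, not merely up to absolute constants.
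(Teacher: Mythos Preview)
Your proposal is correct and matches the paper's proof essentially step for step: the same one-step bound $\|e_{t+1}\|^2\le\q^2\|e_t+g_t\|^2$ from the compressor assumptions, the same Young parameter $\rho=\gamma=\frac{1-\q^2}{2\q^2}$ yielding the recursion with contraction $\frac{1+\q^2}{2}$ and driving coefficient $\frac{2\q^2}{1-\q^2}$, and the same unrolling from $e_1=0$ combined with the variance decomposition for the second claim.
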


\begin{proof}
We start by using Assumption~\ref{ass:quant}, \ref{ass:quant-error} on compression and Young's inequality to get
\begin{align}
    \|e_{t+1}\|^2
    &=\|\mathcal Q(g_{t}+e_{t}-\mathcal C(g_{t}+e_{t}))\|^2 \nonumber\\
    &\leq (1+\omega)^2q^2\|g_{t}+e_{t}\|^2 \nonumber\\
    &\leq \q^2(1+\rho)\|e_{t}\|^2 + \q^2\left(1+\frac{1}{\rho}\right)\|g_{t}\|^2 \nonumber\\
    &\leq \frac{1+\q^2}{2}\|e_{t}\|^2 + \frac{2\q^2}{1-\q^2}\|g_{t}\|^2, \label{eq:e_t 0}
\end{align}
where \eqref{eq:e_t 0} is derived by choosing $\rho=\frac{1-\q^2}{2\q^2}$ and the fact that $\q<1$. For the first claim we recursively apply the obtained inequality and use bounded gradient Assumption \ref{ass:boundgrad}. For the second claim, initialization $e_1=0$ and the obtained recursion imply
\begin{eqnarray*}
    \E[\|e_{t+1}\|^2]
    &\leq& \frac{2\q^2}{1-\q^2} \sum_{\tau=1}^t \left(\frac{1+\q^2}{2}\right)^{t-\tau} \E[\|g_{\tau}\|^2]  \\
    &\overset{\eqref{var-decomp}}{\leq}& \frac{4\q^2}{(1-\q^2)^2}\sigma^2 + \frac{2\q^2}{1-\q^2}\sum_{\tau=1}^t \left(\frac{1+\q^2}{2}\right)^{t-\tau} \E[\|\nabla f(\theta_{\tau})\|^2], \nonumber
\end{eqnarray*}
which concludes the lemma.
\end{proof}

\begin{Lemma} \label{lemma:bound zeta_t}
Let $\q=(1+\omega)q<1$. Under Assumptions~\ref{ass:quant}-\ref{ass:var}, for all iterates $t$ we have
\begin{align*}
    \|\zeta_t\|\leq \omega q \left(1 + \frac{2\q}{1-\q^2} \right) G, \quad\text{and}\quad
    \|\mathcal Z_t\| \le \omega q \left(1 + \frac{2\q}{1-\q^2} \right) G.
\end{align*}
\end{Lemma}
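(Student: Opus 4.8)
The plan is to control $\|\zeta_t\|$ directly from its definition $\zeta_t = \mathcal Q(e_t + g_t - \tilde g_t) - (e_t + g_t - \tilde g_t)$. By Assumption \ref{ass:quant-error} the compression noise of $\mathcal Q$ is bounded: $\|\zeta_t\| \le \omega \|e_t + g_t - \tilde g_t\|$. Now $e_t + g_t - \tilde g_t = (e_t + g_t) - \mathcal C(e_t + g_t)$ is precisely the compression residual of the contractive operator $\mathcal C$, so by Assumption \ref{ass:quant} it is at most $q\|e_t + g_t\|$ in norm; hence $\|\zeta_t\| \le \omega q \|e_t + g_t\|$. It remains to bound $\|e_t + g_t\|$. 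By the triangle inequality $\|e_t + g_t\| \le \|e_t\| + \|g_t\| \le \|e_t\| + G$ using Assumption \ref{ass:boundgrad}, and the deterministic bound $\|e_t\| \le \frac{2\q}{1-\q^2}G$ from the first claim of Lemma \ref{lemma:bound e_t} (the square root of the stated $\|e_t\|^2$ bound). Combining, $\|\zeta_t\| \le \omega q\left(G + \frac{2\q}{1-\q^2}G\right) = \omega q\left(1 + \frac{2\q}{1-\q^2}\right)G$, which is the first claim.

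For the second claim, recall $\mathcal Z_t = (1-\beta_1)\sum_{\tau=1}^{t}\beta_1^{t-\tau}\zeta_\tau$. Applying the triangle inequality, pulling the scalar weights out, and using the per-step bound on $\|\zeta_\tau\|$ just established, we get $\|\mathcal Z_t\| \le (1-\beta_1)\sum_{\tau=1}^t \beta_1^{t-\tau}\|\zeta_\tau\| \le \omega q\left(1 + \frac{2\q}{1-\q^2}\right)G \cdot (1-\beta_1)\sum_{\tau=1}^t \beta_1^{t-\tau}$, and the geometric sum $(1-\beta_1)\sum_{\tau=1}^t \beta_1^{t-\tau} \le 1$ finishes it.

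There is no real obstacle here: the argument is a short chain of triangle inequalities together with the two defining contraction/boundedness properties and the already-proven deterministic bound on $\|e_t\|$. The only point requiring a moment's care is making sure we invoke the \emph{deterministic} (worst-case) bound on $\|e_t\|$ from Lemma \ref{lemma:bound e_t} rather than the expectation bound, since both claims here are pointwise norm inequalities with no expectation; this is exactly why Lemma \ref{lemma:bound e_t} was stated with a deterministic first part.
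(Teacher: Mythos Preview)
Your proposal is correct and follows essentially the same approach as the paper: both chain Assumption~\ref{ass:quant-error}, then Assumption~\ref{ass:quant}, then the triangle inequality with Assumption~\ref{ass:boundgrad} and the deterministic bound on $\|e_t\|$ from Lemma~\ref{lemma:bound e_t}, and handle $\mathcal Z_t$ via the triangle inequality and the geometric sum bounded by $1$. Your remark about needing the deterministic (rather than expected) bound on $\|e_t\|$ is apt and makes explicit something the paper leaves implicit.
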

\begin{proof}
Using the bounds defining compressors and Lemma \ref{lemma:bound e_t}, we get
\begin{align*}
    \|\zeta_t\|
    &=\|\mathcal Q(e_t+g_t-\tilde g_t) - (e_t+g_t-\tilde g_t)\| \nonumber\\
    &\le \omega \| e_t+g_t-\tilde g_t\|
    = \omega \| e_t+g_t- \mathcal C(e_t+g_t)\| \nonumber\\
    &\leq \omega q\|e_t+g_t\| \nonumber\\
    &\leq \omega q \|e_t\|+\omega q\|g_t\| \nonumber\\
    &\leq \omega q \left(1 + \frac{2\q}{1-\q^2} \right) G.
\end{align*}
For the second claim, recall the definition of $\mathcal Z_t$ and apply triangle inequality:
$$
\|\mathcal Z_t\| \le (1-\beta_1)\sum_{\tau=1}^t \beta^{t-\tau}\|\zeta_\tau\| \le \omega q \left(1 + \frac{2\q}{1-\q^2} \right) G.
$$
\end{proof}

\begin{Lemma} \label{lemma:bound big E_t}
	For the moving average error sequence $\mathcal E_t$, it holds that
	\begin{align*}
		\sum_{t=1}^T \E[\|\mathcal E_t\|^2]\leq \frac{4T\q^2}{(1-\q^2)^2}\sigma^2 + \frac{4\q^2}{(1-\q^2)^2} \sum_{t=1}^T \E[\|\nabla f(\theta_t)\|^2 ].
	\end{align*}
\end{Lemma}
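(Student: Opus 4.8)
The plan is to reduce the claim, in two moves, to a bound on $\sum_{\tau=1}^T \E[\|e_\tau\|^2]$, which is then supplied directly by Lemma~\ref{lemma:bound e_t}.

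First I would use the closed form $\mathcal E_t = (1-\beta_1)\sum_{\tau=1}^{t}\beta_1^{t-\tau}e_\tau$ (valid since $e_1=0$ and $\mathcal E_1 = 0$) and apply Jensen's inequality for the squared norm, exactly as in the proof of Lemma~\ref{lemma:m_t,m_t'}: the weights $(1-\beta_1)\beta_1^{t-\tau}$ sum to $1-\beta_1^t\le 1$, so $\|\mathcal E_t\|^2\le (1-\beta_1)\sum_{\tau=1}^t\beta_1^{t-\tau}\|e_\tau\|^2$. Taking expectations, summing over $t=1,\dots,T$, and interchanging the order of summation gives
$$\sum_{t=1}^T\E[\|\mathcal E_t\|^2]\le (1-\beta_1)\sum_{\tau=1}^T\E[\|e_\tau\|^2]\sum_{t=\tau}^T\beta_1^{t-\tau}\le \sum_{\tau=1}^T\E[\|e_\tau\|^2],$$
using $\sum_{t=\tau}^T\beta_1^{t-\tau}\le(1-\beta_1)^{-1}$.

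Second, I would plug in the second inequality of Lemma~\ref{lemma:bound e_t}, which (with the trivial case $\tau=1$ included, since $e_1=0$) reads $\E[\|e_\tau\|^2]\le \frac{4\q^2}{(1-\q^2)^2}\sigma^2 + \frac{2\q^2}{1-\q^2}\sum_{s=1}^{\tau-1}\big(\frac{1+\q^2}{2}\big)^{\tau-1-s}\E[\|\nabla f(\theta_s)\|^2]$. Summing over $\tau=1,\dots,T$: the $\sigma^2$ terms add up to $\frac{4T\q^2}{(1-\q^2)^2}\sigma^2$, and for the gradient terms I swap the order of the double sum and bound the inner geometric series by $\sum_{j\ge 0}\big(\frac{1+\q^2}{2}\big)^j=\frac{2}{1-\q^2}$, obtaining $\frac{2\q^2}{1-\q^2}\cdot\frac{2}{1-\q^2}\sum_{t=1}^T\E[\|\nabla f(\theta_t)\|^2]=\frac{4\q^2}{(1-\q^2)^2}\sum_{t=1}^T\E[\|\nabla f(\theta_t)\|^2]$. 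Chaining this with the first move yields exactly the stated bound.

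There is no genuine obstacle here: the argument is Jensen plus two reorderings of double sums plus a geometric-series estimate. The only point requiring a little care is the bookkeeping of constants in the second move — one must check that the per-iteration gradient weight $\frac{2\q^2}{1-\q^2}$ from Lemma~\ref{lemma:bound e_t}, multiplied by the geometric factor $\frac{2}{1-\q^2}$, reproduces the target constant $\frac{4\q^2}{(1-\q^2)^2}$ exactly (it does), and similarly that the $\sigma^2$ coefficient transfers one-for-one after summing over $\tau$ (picking up the factor $T$).
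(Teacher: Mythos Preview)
Your proposal is correct and follows essentially the same approach as the paper: Jensen's inequality on the convex combination defining $\mathcal E_t$, the second bound of Lemma~\ref{lemma:bound e_t}, and two geometric-series swaps. The only cosmetic difference is that you first collapse the $\beta_1$-sum to get $\sum_t\E[\|\mathcal E_t\|^2]\le\sum_\tau\E[\|e_\tau\|^2]$ and then invoke Lemma~\ref{lemma:bound e_t}, whereas the paper plugs Lemma~\ref{lemma:bound e_t} into the per-$t$ Jensen bound before summing; both orderings yield the same constants.
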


\begin{proof}
	Let $e_{t,j}$ be the $j$-th coordinate of $e_{t}$ and denote
    $$K_t \eqdef \sum_{\tau=1}^t \left(\tfrac{1+\q^2}{2}\right)^{t-\tau} \E[\|\nabla f(\theta_\tau)\|^2].$$
 
    Applying Jensen's inequality and Lemma \ref{lemma:bound e_t}, we get
	\begin{align*}
		\E[\|\mathcal E_t\|^2]
        &=\E\left[\left\|(1-\beta_1)\sum_{\tau=1}^t\beta_1^{t-\tau} e_\tau\right\|^2\right]\\
		&\le (1-\beta_1)\sum_{\tau=1}^t \beta_1^{t-\tau}\mathbb E[\|e_\tau\|^2]\\
		&\le \frac{4\q^2}{(1-\q^2)^2}\sigma^2+\frac{2\q^2(1-\beta_1)}{(1-\q^2)}\sum_{\tau=1}^t \beta_1^{t-\tau} K_{\tau},
	\end{align*}
	Summing over $t=1,\dots,T$ and using the technique of geometric series summation leads to
	\begin{align*}
		\sum_{t=1}^T \E[\|\mathcal E_t\|^2]
        &\leq \frac{4T\q^2}{(1-\q^2)^2}\sigma^2 + \frac{2\q^2(1-\beta_1)}{(1-\q^2)}\sum_{t=1}^T \sum_{\tau=1}^t \beta_1^{t-\tau} K_{\tau}\\
        &\leq \frac{4T\q^2}{(1-\q^2)^2}\sigma^2 + \frac{2\q^2}{(1-\q^2)}\sum_{t=1}^T K_{t}\\
		&= \frac{4T\q^2}{(1-\q^2)^2}\sigma^2 + \frac{2\q^2}{(1-\q^2)}\sum_{t=1}^T\sum_{\tau=1}^t \left(\frac{1+\q^2}{2}\right)^{t-\tau} \E[\|\nabla f(\theta_\tau)\|^2]\\
		&\leq \frac{4T\q^2}{(1-\q^2)^2}\sigma^2 + \frac{4\q^2}{(1-\q^2)^2} \sum_{t=1}^T \E[\|\nabla f(\theta_t)\|^2],
	\end{align*}
 The desired result is obtained.
\end{proof}

\begin{Lemma} \label{lemma:bound v_t}
	Let $\q=(1+\omega)q<1$. Under Assumptions~\ref{ass:quant}-\ref{ass:var}, for all iterates $t\in [T]$ and coordinates $i\in [d]$, the following bound holds
    $$\hat v_{t,i}\leq \frac{4(1+\q^2)^3}{(1-\q^2)^2}G^2.$$
\end{Lemma}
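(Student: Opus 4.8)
\textbf{Proof plan for Lemma \ref{lemma:bound v_t}.}
The plan is to bound $\hat v_{t,i}$ by first observing that the AMSGrad normalization $\hat v_{t,i}=\max_{\tau\le t} v_{\tau,i}$ means it suffices to uniformly bound $v_{\tau,i}$ over all $\tau$ and coordinates $i$. From the update rule $v_t=\beta_2 v_{t-1}+(1-\beta_2)\tilde g_t^2$ with $v_0=0$, unrolling the recursion gives $v_{t,i}=(1-\beta_2)\sum_{\tau=1}^t\beta_2^{t-\tau}\tilde g_{\tau,i}^2$, hence $v_{t,i}\le\max_{\tau\le t}\tilde g_{\tau,i}^2\le\max_{\tau\le t}\|\tilde g_\tau\|^2$ since $(1-\beta_2)\sum_{\tau=1}^t\beta_2^{t-\tau}\le 1$. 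So the whole task reduces to bounding $\|\tilde g_\tau\|^2$.

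Next I would bound $\|\tilde g_\tau\|=\|\mathcal C(g_\tau+e_\tau)\|$. Using the contractivity of $\mathcal C$ (Assumption \ref{ass:quant}) and the triangle inequality, $\|\mathcal C(x)\|\le\|\mathcal C(x)-x\|+\|x\|\le(1+q)\|x\|$, so $\|\tilde g_\tau\|\le(1+q)\|g_\tau+e_\tau\|\le(1+q)(\|g_\tau\|+\|e_\tau\|)$. Now apply the bounded gradient Assumption \ref{ass:boundgrad}, $\|g_\tau\|\le G$, and the deterministic bound on the error from Lemma \ref{lemma:bound e_t}, $\|e_\tau\|\le\frac{2\q}{1-\q^2}G$. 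This yields
$$\|\tilde g_\tau\|\le(1+q)\left(1+\frac{2\q}{1-\q^2}\right)G=(1+q)\cdot\frac{1-\q^2+2\q}{1-\q^2}G.$$
Squaring and using $q\le\q$ (since $\omega\ge0$) to replace the prefactor $(1+q)^2\le(1+\q)^2$, together with the algebraic simplification $1-\q^2+2\q=(1+\q)^2-2\q^2\le(1+\q)^2$ — or more crudely $(1+\q)^2(1+\q)^2/(1-\q^2)^2=(1+\q)^4/(1-\q^2)^2$ and then bounding $(1+\q)^4\le 4(1+\q^2)^2(1+\q^2)$ via $(1+\q)^2\le 2(1+\q^2)$ — gives $\|\tilde g_\tau\|^2\le\frac{4(1+\q^2)^3}{(1-\q^2)^2}G^2$, which is exactly the claimed bound on $\hat v_{t,i}$.

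The only mild obstacle is matching the specific constant $\frac{4(1+\q^2)^3}{(1-\q^2)^2}$ in the statement rather than some other valid constant: one must be slightly careful chaining the elementary inequalities $(1+q)\le(1+\q)$, $(1+\q)^2\le 2(1+\q^2)$, and $1+\frac{2\q}{1-\q^2}=\frac{(1+\q)^2-2\q^2}{1-\q^2}\le\frac{(1+\q)^2}{1-\q^2}$ so that the powers of $(1+\q^2)$ and $(1-\q^2)$ come out to $3$ and $2$ respectively; everything else is routine. Note this is a purely deterministic (worst-case) bound, so no expectations or conditioning arguments are needed — the probabilistic content of the quantizer $\mathcal Q$ enters only through the already-established bound in Lemma \ref{lemma:bound e_t}.
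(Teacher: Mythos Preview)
Your overall strategy matches the paper's exactly: reduce $\hat v_{t,i}$ to $\max_\tau \|\tilde g_\tau\|^2$ via the convex-combination structure of $v_t$, then bound $\|\tilde g_\tau\|$ through the contractivity of $\mathcal C$, Assumption~\ref{ass:boundgrad}, and the deterministic error bound from Lemma~\ref{lemma:bound e_t}.

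The only issue is the algebra for the constant. Your chain, as written, contains a slip: after squaring $(1+q)\bigl(1+\tfrac{2\q}{1-\q^2}\bigr)$ and bounding $1+\tfrac{2\q}{1-\q^2}\le\tfrac{(1+\q)^2}{1-\q^2}$, you obtain $(1+\q)^2\cdot\tfrac{(1+\q)^4}{(1-\q^2)^2}=\tfrac{(1+\q)^6}{(1-\q^2)^2}$, not $\tfrac{(1+\q)^4}{(1-\q^2)^2}$. Applying $(1+\q)^2\le 2(1+\q^2)$ three times then gives $8(1+\q^2)^3$, a factor of $2$ too large. The paper avoids this by using the splitting $\|a+b\|^2\le 2\|a\|^2+2\|b\|^2$ twice instead of triangle-then-square: first $\|\mathcal C(x)\|^2\le 2(1+q^2)\|x\|^2$, then $\|g_\tau+e_\tau\|^2\le 2\bigl(G^2+\tfrac{4\q^2}{(1-\q^2)^2}G^2\bigr)=\tfrac{2(1+\q^2)^2}{(1-\q^2)^2}G^2$, which directly yields $\tfrac{4(1+q^2)(1+\q^2)^2}{(1-\q^2)^2}G^2\le\tfrac{4(1+\q^2)^3}{(1-\q^2)^2}G^2$. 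If you want to keep your triangle-inequality route and still hit the exact constant, you would need the sharper (but true) inequality $(1+2\q-\q^2)^2\le 2(1+\q^2)^2$, which your proposal does not invoke.
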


\begin{proof}
Lemma~\ref{lemma:bound e_t} and Assumption~\ref{ass:boundgrad} imply
\begin{align*}
    \|\tilde g_t\|^2&=\|\mathcal C(g_t+e_t)\|^2\\
    &\leq \|\mathcal C(g_t+e_t)-(g_t+e_t)+(g_t+e_t)\|^2\\
    &\leq 2(q^2+1)\|g_t+e_t\|^2\\
    &\leq 4(q^2+1)\left(G^2+\frac{4\q^2}{(1-\q^2)^2}G^2\right)\\
    &= \frac{4(1+q^2)(1+\q^2)^2}{(1-\q^2)^2}G^2.
\end{align*}
It's then easy to show by the updating rule of $\hat v_t$, there exists a $j\in[t]$ such that $\hat v_{t,i}=v_{j,i}$. Then
\begin{align*}
    \hat v_{t,i}=(1-\beta_2)\sum_{\tau=1}^j \beta_2^{j-\tau} \tilde g_{\tau,i}^2\leq \frac{4(1+q^2)(1+\q^2)^2}{(1-\q^2)^2}G^2,
\end{align*}
which concludes the claim.
\end{proof}

\begin{Lemma}  \label{lemma:bound difference}
	For $D_t\eqdef \frac{1}{\sqrt{\hat v_{t-1}+\epsilon}}-\frac{1}{\sqrt{\hat v_t+\epsilon}}$ we have
	\begin{align*}
		&\sum_{t=1}^T \|D_t\|_1 \leq \frac{d}{\sqrt\epsilon},\quad  \sum_{t=1}^T \|D_t\|^2 \leq \frac{d}{\epsilon}.
	\end{align*}
\end{Lemma}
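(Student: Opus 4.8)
The plan is to exploit the monotonicity of the AMSGrad-normalized second moment: by line~\ref{line:v} of Algorithm~\ref{alg:microAdam}, $\hat v_t = \max(v_t,\hat v_{t-1})$, so the sequence $(\hat v_t)_{t\ge0}$ is coordinate-wise non-decreasing, starting from $\hat v_0 = 0$. First I would fix a coordinate $i\in[d]$ and observe that $\hat v_{t-1,i}\le \hat v_{t,i}$ forces $\frac{1}{\sqrt{\hat v_{t-1,i}+\epsilon}}\ge \frac{1}{\sqrt{\hat v_{t,i}+\epsilon}}>0$; hence each coordinate $D_{t,i}$ of $D_t$ is nonnegative, and moreover $0\le D_{t,i}\le \frac{1}{\sqrt{\hat v_{t-1,i}+\epsilon}}\le \frac{1}{\sqrt\epsilon}$.

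For the $\ell_1$ bound, since $D_{t,i}\ge0$ the sum $\sum_{t=1}^T |D_{t,i}| = \sum_{t=1}^T D_{t,i}$ telescopes to $\frac{1}{\sqrt{\hat v_{0,i}+\epsilon}}-\frac{1}{\sqrt{\hat v_{T,i}+\epsilon}}$, which is at most $\frac{1}{\sqrt\epsilon}$ because $\hat v_{0,i}=0$ and $\hat v_{T,i}\ge0$. Summing this over the $d$ coordinates gives $\sum_{t=1}^T\|D_t\|_1 = \sum_{i=1}^d\sum_{t=1}^T D_{t,i}\le \frac{d}{\sqrt\epsilon}$, as claimed.

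For the $\ell_2^2$ bound I would use the elementary inequality $D_{t,i}^2 \le \|D_t\|_\infty\, D_{t,i} \le \frac{1}{\sqrt\epsilon}\,D_{t,i}$, valid because $0\le D_{t,i}\le \frac{1}{\sqrt\epsilon}$ from the first paragraph. Summing over $t$ and $i$ and invoking the $\ell_1$ estimate just proved yields $\sum_{t=1}^T\|D_t\|^2 = \sum_{i=1}^d\sum_{t=1}^T D_{t,i}^2 \le \frac{1}{\sqrt\epsilon}\sum_{t=1}^T\|D_t\|_1 \le \frac{1}{\sqrt\epsilon}\cdot\frac{d}{\sqrt\epsilon} = \frac{d}{\epsilon}$.

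There is no substantial obstacle here; the only thing to be careful about is that the argument rests entirely on the AMSGrad clipping step (without it, $\hat v_t$ need not be monotone and neither the sign of $D_{t,i}$ nor the telescoping would hold), and on the initialization $\hat v_0 = 0$ together with $\epsilon>0$ so that all reciprocal square roots are well defined and bounded by $1/\sqrt\epsilon$.
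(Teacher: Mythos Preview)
Your proof is correct. The $\ell_1$ bound is handled exactly as in the paper: coordinate-wise nonnegativity of $D_{t,i}$ from the AMSGrad monotonicity, followed by telescoping and $\hat v_0=0$. For the $\ell_2^2$ bound you take a slightly different route: you use the uniform bound $0\le D_{t,i}\le 1/\sqrt\epsilon$ to write $D_{t,i}^2\le \tfrac{1}{\sqrt\epsilon}D_{t,i}$ and then bootstrap from the $\ell_1$ estimate. The paper instead applies the elementary inequality $(a-b)^2\le a^2-b^2$ for $a\ge b>0$ with $a=(\hat v_{t-1,i}+\epsilon)^{-1/2}$ and $b=(\hat v_{t,i}+\epsilon)^{-1/2}$, and telescopes a second time directly to $\sum_i 1/(\hat v_{0,i}+\epsilon)\le d/\epsilon$. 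Both arguments yield the same bound; yours is arguably cleaner in that it reuses the first part rather than setting up a separate telescoping, while the paper's version does not rely on the crude sup-bound $D_{t,i}\le 1/\sqrt\epsilon$ and would give a slightly sharper intermediate estimate if one cared to track it.
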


\begin{proof}
	By the update rule, we have $\hat v_{t-1,i}\leq \hat v_{t,i}$ for any iterate $t$ and coordinate $i\in[d]$. Therefore, by the initialization $\hat v_0=0$, we get
	\begin{equation*}
		\sum_{t=1}^T \|D_t\|_1 
        =\sum_{t=1}^T \sum_{i=1}^d \left(\frac{1}{\sqrt{\hat v_{t-1,i}+\epsilon}}-\frac{1}{\sqrt{\hat v_{t,i}+\epsilon}}\right)\\
		=\sum_{i=1}^d \left(\frac{1}{\sqrt{\hat v_{0,i}+\epsilon}}-\frac{1}{\sqrt{\hat v_{T,i}+\epsilon}}\right)\\
		\leq \frac{d}{\sqrt\epsilon}.
	\end{equation*}
	For the sum of squared $l_2$ norms, note the fact that for $a\geq b>0$, it holds that
	\begin{equation*}
		(a-b)^2\leq (a-b)(a+b)=a^2-b^2.
	\end{equation*}
	Thus,
	\begin{equation*}
		\sum_{t=1}^T \|D_t\|^2
        =\sum_{t=1}^T \sum_{i=1}^d \left(\frac{1}{\sqrt{\hat v_{t-1,i}+\epsilon}}-\frac{1}{\sqrt{\hat v_{t,i}+\epsilon}}\right)^2\\
		\leq \sum_{t=1}^T \sum_{i=1}^d \left(\frac{1}{\hat v_{t-1,i}+\epsilon}-\frac{1}{\hat v_{t,i}+\epsilon}\right)\\
		\leq \frac{d}{\epsilon},
	\end{equation*}
	which gives the desired result.
\end{proof}

\subsection{Non-convex Analysis}

Here we derive the convergence rate with fixed step-size $\eta$. The rate shown in the main part can be obtained by plugging the expression of $\eta$ \change{shown after the proof}.

\begin{restatable}{Theorem}{TheoremNonconvex}{\bf(Non-convex convergence rate)}\label{TheoremNonconvex}
    Let Assumptions \ref{ass:quant}, \ref{ass:quant-error}, \ref{ass:smooth}, \ref{ass:boundgrad}, \ref{ass:var} hold and $\q \eqdef (1+\omega)q < 1$. Then, choosing any step-size $\eta \le \frac{\epsilon}{4LC_0}$, \method{} (Algorithm \ref{alg:microAdam}) satisfies
    \begin{multline*}
    \squeeze
    \frac{1}{T}\sum_{t=1}^T \E[\|\nabla f(\theta_t)\|^2]
    \leq 2C_0\left(\frac{f(\theta_1)-f^*}{T\eta}
    +\frac{\eta L\sigma^2}{\epsilon}
    + \frac{\eta L C_2^2 G^2}{\epsilon}\right. \\
    \squeeze
    +\left. \frac{\eta^2 L^{\change{2}} C_0C_1^2G^2}{\epsilon^2}
    +\frac{(1+C_1)G^2d}{T\sqrt\epsilon}
    +\frac{\eta (1+2C_1)C_1LG^2d}{T\epsilon} \right),
\end{multline*}
with constants
$\squeeze C_0\eqdef \sqrt{\frac{4(1+\q^2)^3}{(1-\q^2)^2}G^2+\epsilon},\, C_1\eqdef \frac{\beta_1}{1-\beta_1}(1+C_2)+\frac{2\q}{1-\q^2},\, C_2 \eqdef \omega q (1 + \frac{2\q}{1-\q^2})$.
\end{restatable}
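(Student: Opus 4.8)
\textbf{Proof strategy for Theorem \ref{TheoremNonconvex}.} The plan is to run the standard descent-lemma argument for AMSGrad-type methods, but carefully tracking the extra error terms introduced by the two compressors $\mathcal C$ and $\mathcal Q$. The crucial observation that makes the proof work is a \emph{telescoping identity} relating the compressed-gradient momentum $m_t$ to the uncompressed one $m_t'$: from the update rules $e_{t+1}=e_t+g_t-\tilde g_t+\zeta_t$ one checks that $m_t' - m_t = \mathcal E_{t+1} - \mathcal E_t/\beta_1$-type corrections plus the $\mathcal Z_t$ noise; more precisely, summing the geometric series shows that the discrepancy between using $\tilde g_\tau$ and $g_\tau$ inside the exponential moving average is exactly captured by the auxiliary sequences $\mathcal E_t$ (moving average of the errors $e_\tau$) and $\mathcal Z_t$ (moving average of the quantization noise $\zeta_\tau$). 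This lets me replace $\tilde g_t$ by the true stochastic gradient $g_t$ in the dominant term, paying only the already-bounded quantities from Lemmas \ref{lemma:bound big E_t} and \ref{lemma:bound zeta_t}.

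\textbf{Main steps, in order.} First I would apply $L$-smoothness to $f(\theta_{t+1}) = f(\theta_t - \eta \hat m_t/\sqrt{\hat v_t+\epsilon})$ to get the basic descent inequality $f(\theta_{t+1}) \le f(\theta_t) - \eta \langle \nabla f(\theta_t), \hat m_t/\sqrt{\hat v_t+\epsilon}\rangle + \tfrac{L\eta^2}{2}\|\hat m_t/\sqrt{\hat v_t+\epsilon}\|^2$. For the inner-product term, the standard trick is to introduce the ``offset'' reference $\nabla f(\theta_t)/\sqrt{\hat v_{t-1}+\epsilon}$ (using $\hat v_{t-1}$ which is measurable so the expectation can be taken), bounding the cross term via the difference sequence $D_t$ from Lemma \ref{lemma:bound difference}; Lemma \ref{lemma:bound v_t} gives the uniform bound on $\hat v_t$ that produces the constant $C_0$. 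Next, I decompose $\hat m_t = m_t'$-part $+ (m_t - m_t')$-correction, where $m_t - m_t'$ is controlled using $\|\mathcal E_t\|$ and $\|\mathcal Z_t\|$; here Young's inequality introduces the constant $C_1$ that multiplies the $G^2 d$ higher-order terms, and $C_2$ enters through $\|\zeta_t\|$. The second-moment term $\tfrac{L\eta^2}{2}\|\hat m_t/\sqrt{\hat v_t+\epsilon}\|^2$ I bound by $\tfrac{L\eta^2}{2\epsilon}\|\hat m_t\|^2$ and then split $\|\hat m_t\|^2 \lesssim \|m_t'\|^2 + \|\mathcal E_t\|^2 + \|\mathcal Z_t\|^2$, invoking Lemmas \ref{lemma:m_t,m_t'}, \ref{lemma:bound big E_t}, and \ref{lemma:bound zeta_t}. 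Finally I sum over $t=1,\dots,T$, telescope $f(\theta_1)-f(\theta_{T+1})\le f(\theta_1)-f^*$, collect the $\sum_t \E[\|\nabla f(\theta_t)\|^2]$ contributions from the error-sequence lemmas onto the left-hand side, and use the step-size restriction $\eta \le \tfrac{\epsilon}{4LC_0}$ to absorb them so that the coefficient of $\tfrac{1}{T}\sum_t \E[\|\nabla f(\theta_t)\|^2]$ stays bounded below (this is where the factor $2C_0$ on the right emerges after dividing through).

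\textbf{Expected main obstacle.} The delicate part is the bookkeeping in the inner-product term: one must take conditional expectations in the right order (conditioning on $\theta_t$ so that $\E[g_t]=\nabla f(\theta_t)$ while $\hat v_{t-1}$ is already determined), and the $\mathcal Z_t$ correction is only conditionally unbiased, so the ``bias'' contribution from $\mathcal E_t$ must be handled deterministically via its norm bound rather than vanishing in expectation. Getting the error-sequence terms to land in the higher-order $\mathcal O(G^3(G+d)/T)$ bucket rather than contaminating the leading $1/\sqrt T$ rate depends precisely on the choice $\eta \asymp 1/\sqrt T$ together with $\eta \le \epsilon/(4LC_0)$, and verifying that the $\eta^2$- and $\eta^3$-order residual terms (those with $C_1^2$ and the $d/T$ pieces) are genuinely dominated requires care with which powers of $\eta$ multiply the $d$-dependent and $G$-dependent factors. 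I expect no single hard inequality, but rather that the bound-chasing to keep all compression constants confined to non-leading terms is the technically demanding portion; the remaining algebra is routine given Lemmas \ref{lemma:m_t,m_t'}--\ref{lemma:bound difference}.
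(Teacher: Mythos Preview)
Your plan correctly identifies the auxiliary sequences $\mathcal E_t,\mathcal Z_t$ and the relation $m_t-m_t'=\mathcal E_t-\mathcal E_{t+1}+\mathcal Z_t$, but there is a genuine gap in the main step. You apply $L$-smoothness directly to $f(\theta_{t+1})$ and then plan to extract $-\tfrac{\eta}{C_0}\E\|\nabla f(\theta_t)\|^2$ from the inner product $-\eta\langle\nabla f(\theta_t),m_t'/\sqrt{\hat v_{t-1}+\epsilon}\rangle$ by ``conditioning on $\theta_t$ so that $\E[g_t]=\nabla f(\theta_t)$.'' This does not work: $m_t'=(1-\beta_1)\sum_{\tau\le t}\beta_1^{t-\tau}g_\tau$ is a weighted average of \emph{past} stochastic gradients at \emph{past} iterates, so taking conditional expectation yields $(1-\beta_1)\sum_\tau\beta_1^{t-\tau}\nabla f(\theta_\tau)$, not $\nabla f(\theta_t)$. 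Decomposing $m_t=m_t'+(m_t-m_t')$ trades compression error for momentum, but the momentum still has to be dealt with, and your outline provides no mechanism for that.

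The paper handles this by introducing two \emph{virtual iterates}
\[
\theta_{t+1}'\eqdef\theta_{t+1}-\eta\,\frac{\mathcal E_{t+1}}{\sqrt{\hat v_t+\epsilon}},\qquad
x_{t+1}\eqdef\theta_{t+1}'-\eta\,\frac{\beta_1}{1-\beta_1}\,\frac{m_t'+\mathcal Z_t}{\sqrt{\hat v_t+\epsilon}},
\]
whose role is precisely to absorb \emph{both} the error-feedback correction and the $\beta_1$-momentum simultaneously. A short calculation gives the one-step recursion
\[
x_{t+1}=x_t-\eta\,\frac{g_t+\zeta_t}{\sqrt{\hat v_t+\epsilon}}+\eta\,\frac{\beta_1}{1-\beta_1}D_t(m_{t-1}'+\mathcal Z_{t-1})+\eta D_t\mathcal E_t,
\]
so that the dominant update in $x$-space depends only on the \emph{current} $g_t$. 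Smoothness is then applied at $x_t$, and the inner product splits as $\langle\nabla f(\theta_t),g_t/\sqrt{\hat v_t+\epsilon}\rangle$ plus a cross term $\langle\nabla f(\theta_t)-\nabla f(x_t),\cdot\rangle$ controlled via $\|x_t-\theta_t\|\le\eta C_1G/\sqrt\epsilon$; the $\tfrac{\beta_1}{1-\beta_1}$ factor in $C_1$ comes exactly from this construction. Without this (or an equivalent device such as bounding $\|\nabla f(\theta_t)-\nabla f(\theta_\tau)\|$ across the momentum window, which you do not describe), the argument cannot close with the stated constants.
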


\begin{proof}
Similar to the proof of \textrm{Comp-AMS} \citep{li2022distributed}, we define two virtual iterates $\theta'_t$ and $x_t$.
\begin{align*}
    \theta_{t+1}' &\eqdef\theta_{t+1}-\eta \frac{\mathcal E_{t+1}}{\sqrt{\hat v_t+\epsilon}} \\
    x_{t+1} &\eqdef\theta_{t+1}'-\eta \frac{\beta_1}{1-\beta_1} \frac{m_t' + \mathcal Z_t}{\sqrt{\hat v_t+\epsilon}}.
\end{align*}

Then, we derive the recurrence relation for each sequence as follows:
\begin{align*}
    \theta_{t+1}'
    &= \theta_{t+1}-\eta \frac{\mathcal E_{t+1}}{\sqrt{\hat v_t+\epsilon}} \\
    &=\theta_t-\eta\frac{(1-\beta_1)\sum_{\tau=1}^{t} \beta_1^{t-\tau}\tilde g_\tau+(1-\beta_1)\sum_{\tau=1}^{t+1} \beta_1^{t+1-\tau} e_\tau}{\sqrt{\hat v_t+\epsilon}}\\
    &=\theta_t-\eta\frac{(1-\beta_1)\sum_{\tau=1}^{t} \beta_1^{t-\tau}(\tilde g_\tau+ e_{\tau+1})+(1-\beta)\beta_1^t e_1}{\sqrt{\hat v_t+\epsilon}}\\
    &=\theta_t-\eta\frac{(1-\beta_1)\sum_{\tau=1}^{t} \beta_1^{t-\tau}(g_\tau+ e_{\tau} + \zeta_{\tau})}{\sqrt{\hat v_t+\epsilon}}\\
    &=\theta_t-\eta\frac{(1-\beta_1)\sum_{\tau=1}^{t} \beta_1^{t-\tau} e_\tau}{\sqrt{\hat v_t+\epsilon}} - \eta\frac{m_t'}{\sqrt{\hat v_t+\epsilon}} - \eta\frac{\mathcal Z_t}{\sqrt{\hat v_t+\epsilon}} \\
    &=\theta_t-\eta\frac{\mathcal E_t}{\sqrt{\hat v_{t-1}+\epsilon}}-\eta\frac{m_t'}{\sqrt{\hat v_t+\epsilon}}+\eta\left(\frac{1}{\sqrt{\hat v_{t-1}+\epsilon}}-\frac{1}{\sqrt{\hat v_t+\epsilon}}\right)\mathcal E_t - \eta\frac{\mathcal Z_t}{\sqrt{\hat v_t+\epsilon}}\\
    &=\theta_t'-\eta\frac{m_t'}{\sqrt{\hat v_t+\epsilon}}+\eta\left(\frac{1}{\sqrt{\hat v_{t-1}+\epsilon}}-\frac{1}{\sqrt{\hat v_t+\epsilon}}\right)\mathcal E_t - \eta\frac{\mathcal Z_t}{\sqrt{\hat v_t+\epsilon}}\\
    &= \theta_t'-\eta \frac{m_t' + \mathcal Z_t}{\sqrt{\hat v_t+\epsilon}}+\eta D_t\mathcal E_t,
\end{align*}
where we used the fact that $\tilde g_{t}+e_{{t+1}}=g_{t}+e_{t} + \zeta_t$ with quantization noise $\zeta_t$, and $e_{0}=0$ at initialization. Next, for the $x_t$ iterates we have
\begin{align*}
    x_{t+1}
    &=\theta_{t+1}'-\eta\frac{\beta_1}{1-\beta_1} \frac{m_t' + \mathcal Z_t}{\sqrt{\hat v_t+\epsilon}} \\
    &=\theta_t' - \eta\frac{m_t' + \mathcal Z_t}{\sqrt{\hat v_t+\epsilon}}-\eta\frac{\beta_1}{1-\beta_1} \frac{m_t' + \mathcal Z_t}{\sqrt{\hat v_t+\epsilon}}+\eta D_t\mathcal E_t \\
    &=\theta_t'-\eta \frac{\beta_1 (m_{t-1}' + \mathcal Z_{t-1})+(1-\beta_1) (g_t+\zeta_t)+\frac{\beta_1^2}{1-\beta_1}(m_{t-1}'+\mathcal Z_{t-1})+\beta_1 (g_t+\zeta_t)}{\sqrt{\hat v_t+\epsilon}}\\
    &\quad+\eta D_t\mathcal E_t \\
    &=\theta_t'-\eta\frac{\beta_1}{1-\beta_1}\frac{m_{t-1}'+\mathcal Z_{t-1}}{\sqrt{\hat v_t+\epsilon}}-\eta\frac{g_t+\zeta_t}{\sqrt{\hat v_t+\epsilon}}+\eta D_t\mathcal E_t \\
    &=x_t-\eta\frac{g_t + \zeta_t}{\sqrt{\hat v_t+\epsilon}}+\eta\frac{\beta_1}{1-\beta_1} D_t( m_{t-1}' + \mathcal Z_{t-1})+\eta D_t\mathcal E_t.
\end{align*}

Next we apply smoothness (Assumption~\ref{ass:smooth}) of the loss function $f$ over the iterates $x_t$. From the gradient Lipschitzness we have
\begin{align*}
    f(x_{t+1})\leq f(x_t)+\langle \nabla f(x_t), x_{t+1}-x_t\rangle+\frac{L}{2}\| x_{t+1}-x_t\|^2.
\end{align*}
Due to unbiasedness of the compressor $\mathcal Q$ (see Assumption \ref{ass:quant-error}), we have $\E[\zeta_t | g_t, \tilde g_t, e_t, \hat v_t] = 0$. Taking expectation, we obtain
\begin{eqnarray}
    \E[f(x_{t+1})]-\E[f(x_t)]
    &\leq& -\eta\mathbb E\left[\left\langle \nabla f(x_t), \frac{g_t + \zeta_t}{\sqrt{\hat v_t+\epsilon}}\right\rangle\right] \nonumber\\
    &&+\eta \E\left[\left\langle \nabla f(x_t), \frac{\beta_1}{1-\beta_1}D_t(m_{t-1}' + \mathcal Z_{t-1})+D_t\mathcal E_t\right\rangle\right] \nonumber\\
    &&+\frac{\eta^2L}{2} \E\left[\left\|\frac{g_t + \zeta_t}{\sqrt{\hat v_t+\epsilon}}-\frac{\beta_1}{1-\beta_1}D_t(m_{t-1}' + \mathcal Z_{t-1})- D_t\mathcal E_t\right\|^2\right] \nonumber\\
    &=&\underbrace{-\eta \E\left[\left\langle \nabla f(\theta_t), \frac{g_t}{\sqrt{\hat v_t+\epsilon}}\right\rangle\right]}_{I}\\
    &&+\underbrace{\eta \E\left[\left\langle \nabla f(x_t), \frac{\beta_1}{1-\beta_1}D_t(m_{t-1}'+\mathcal Z_{t-1})+D_t\mathcal E_t\right\rangle\right]}_{II} \nonumber\\
    &&+ \underbrace{\frac{\eta^2L}{2} \E\left[\left\|\frac{g_t + \zeta_t}{\sqrt{\hat v_t+\epsilon}}-\frac{\beta_1}{1-\beta_1}D_t(m_{t-1}'+\mathcal Z_{t-1})- D_t\mathcal E_t\right\|^2\right]}_{III} \nonumber\\
    &&+ \underbrace{\eta\E\left[\left\langle \nabla f(\theta_t)-\nabla f(x_t), \frac{g_t}{\sqrt{\hat v_t+\epsilon}} \right\rangle\right]}_{IV}, \label{eq0}
\end{eqnarray}

In the following, we bound all the four terms highlighted above.

\textbf{Bounding term I.} We have
\begin{eqnarray}
    I&=&
    -\eta\E\left[\left\langle \nabla f(\theta_t), \frac{g_t}{\sqrt{\hat v_{t-1}+\epsilon}}\right\rangle\right]-\eta\E\left[\left\langle \nabla f(\theta_t), \left(\frac{1}{\sqrt{\hat v_t+\epsilon}}-\frac{1}{\sqrt{\hat v_{t-1}+\epsilon}}\right) g_t\right\rangle\right] \nonumber\\
    &\leq& -\eta\E\left[\left\langle \nabla f(\theta_t), \frac{\nabla f(\theta_t)}{\sqrt{\hat v_{t-1}+\epsilon}}\right\rangle\right]+\eta G^2\mathbb E[\|D_t\|].  \nonumber\\
    &\leq& -\frac{\eta}{\sqrt{\frac{4(1+\q^2)^3}{(1-\q^2)^2}G^2+\epsilon}}\mathbb E[\|\nabla f(\theta_t)\|^2]+\eta G^2\mathbb E[\|D_t\|_1], \label{eq:I}
\end{eqnarray}
where we use Assumption~\ref{ass:boundgrad}, Lemma~\ref{lemma:bound v_t} and the fact that $l_2$ norm is no larger than $l_1$ norm.

\textbf{Bounding term II.} By the definition of $\mathcal E_t$ and $\mathcal Z_t$, we know that
\begin{eqnarray*}
\|\mathcal E_t\| &\leq& (1-\beta_1)\sum_{\tau=1}^t \beta_1^{t-\tau}\| e_t\|\leq \frac{2\q}{1-\q^2}G,\\
\|\mathcal Z_t\| &\leq& (1-\beta_1)\sum_{\tau=1}^t \beta_1^{t-\tau}\|\zeta_t\|\leq \omega q \left(1 + \frac{2\q}{1-\q^2} \right) G.
\end{eqnarray*}
Then we have
\begin{align}
    II & \leq\eta\E\left[\left\langle  \nabla f(\theta_t),\frac{\beta_1}{1-\beta_1}D_t(m_{t-1}'+\mathcal Z_{t-1})+D_t\mathcal E_t\right\rangle\right] \nonumber\\
    &\quad + \eta\E\left[\left\langle \nabla f(x_t)-\nabla f(\theta_t),\frac{\beta_1}{1-\beta_1}D_t(m_{t-1}'+\mathcal Z_{t-1})+D_t\mathcal E_t\right\rangle\right] \nonumber\\
    &\leq \eta\E\left[\|\nabla f(\theta_t)\| \left\|\frac{\beta_1}{1-\beta_1}D_t(m_{t-1}'+\mathcal Z_{t-1})+D_t\mathcal E_t \right\|\right] \nonumber \\
    &\quad +\eta^2 L \mathbb E\left[\left\|\frac{\frac{\beta_1}{1-\beta_1}m_{t-1}'+\frac{\beta_1}{1-\beta_1}\mathcal Z_{t-1}+\mathcal E_t}{\sqrt{\hat v_{t-1}+\epsilon}}\right\| \left\|\frac{\beta_1}{1-\beta_1}D_t(m_{t-1}' + \mathcal Z_{t-1})+D_t\mathcal E_t\right\|\right] \nonumber\\
    &\leq \eta C_1 G^2 \mathbb E[\|D_t\|_1]+\frac{\eta^2 C_1^2 LG^2}{\sqrt\epsilon}\mathbb E[\|D_t\|_1],  \label{eq:II}
\end{align}
where $C_1= \frac{\beta_1}{1-\beta_1}\left( 1+\omega q \left(1 + \frac{2\q}{1-\q^2} \right)\right)+\frac{2\q}{1-\q^2}$. The second inequality is because of smoothness of $f(\theta)$, and the last inequality is due to Lemma~\ref{lemma:bound e_t}, Assumption~\ref{ass:boundgrad} and the property of norms.

\textbf{Bounding term III.} This term can be bounded as follows:
\begin{align}
    III & \leq \eta^2 L\E\left[\left\|\frac{g_t+\zeta_t}{\sqrt{\hat v_t+\epsilon}}\right\|^2\right] + \eta^2 L\E\left[\left\|\frac{\beta_1}{1-\beta_1}D_t(m_{t-1}'+\mathcal Z_{t-1})- D_t\mathcal E_t\right\|^2\right] \nonumber\\
    &\leq \frac{2\eta^2 L}{\epsilon}\E[\| g_t-\nabla f(\theta_t)+\nabla f(\theta_t)\|^2]+\frac{2\eta^2 L}{\epsilon}\E[\|\zeta_t\|^2]\\
    &\quad+\eta^2 L\E\left[\left\|D_t\left(\frac{\beta_1}{1-\beta_1}m_{t-1}'+\frac{\beta_1}{1-\beta_1} \mathcal Z_{t-1}-\mathcal E_t\right)\right\|^2\right] \nonumber\\
    &\leq \frac{2\eta^2 L}{\epsilon}\E[\|\nabla f(\theta_t)\|^2] + \frac{2\eta^2 L \sigma^2}{\epsilon} + \frac{2\eta^2 L}{\epsilon}\omega^2 q^2 \left(1 + \frac{2q}{1-q^2} \right)^2 G^2+\eta^2 C_1^2 LG^2 \E[\|D_t\|^2] \nonumber\\
    &\leq \frac{2\eta^2 L}{\epsilon}\E[\|\nabla f(\theta_t)\|^2] + \frac{2\eta^2 L (\sigma^2 + C_2^2G^2)}{\epsilon}+\eta^2 C_1^2 LG^2 \E[\|D_t\|^2],  \label{eq:III}
\end{align}
where $C_2 = \omega q (1 + \frac{2q}{1-q^2})$ and we used Assumption~\ref{ass:var} that $g_{t}$ is unbiased with bounded variance $\sigma^2$.

\textbf{Bounding term IV.} We have
\begin{align}
    IV
    &= \eta\E\left[\left\langle \nabla f(\theta_t)-\nabla f(x_t), \frac{g_t}{\sqrt{\hat v_{t-1}+\epsilon}} \right\rangle\right]\\
    &\quad+ \eta\E\left[\left\langle \nabla f(\theta_t)-\nabla f(x_t), \left(\frac{1}{\sqrt{\hat v_t+\epsilon}}-\frac{1}{\sqrt{\hat v_{t-1}+\epsilon}}\right) g_t \right\rangle\right] \nonumber\\
    &\leq \eta\E\left[\left\langle \nabla f(\theta_t)-\nabla f(x_t), \frac{\nabla f(\theta_t)}{\sqrt{\hat v_{t-1}+\epsilon}} \right\rangle\right]\\
    &\quad+\eta^2 L\mathbb E\left[\left\|\frac{\frac{\beta_1}{1-\beta_1}m_{t-1}'+\frac{\beta_1}{1-\beta_1}\mathcal Z_{t-1}+\mathcal E_t}{\sqrt{\hat v_{t-1}+\epsilon}}\right\| \|D_t g_t\|\right] \nonumber\\
    &\overset{(a)}{\leq} \frac{\eta \rho}{2\epsilon}\mathbb E[\|\nabla f(\theta_t)\|^2]+\frac{\eta}{2\rho}\mathbb E[\|\nabla f(\theta_t)-\nabla f(x_t)\|^2]+\frac{\eta^2 C_1LG^2}{\sqrt\epsilon} \mathbb E[\|D_t\|]  \nonumber\\
    &\overset{(b)}{\leq} \frac{\eta \rho}{2\epsilon}\mathbb E[\|\nabla f(\theta_t)\|^2]+\frac{\eta^3 L^{\change{2}}}{2\rho}\mathbb E\left[\left\|\frac{\frac{\beta_1}{1-\beta_1}m_{t-1}'+\frac{\beta_1}{1-\beta_1}\mathcal Z_{t-1}+\mathcal E_t}{\sqrt{\hat v_{t-1}+\epsilon}}\right\|^2\right]+\frac{\eta^2 C_1LG^2}{\sqrt\epsilon} \mathbb E[\|D_t\|_1] \nonumber \\
    &\leq \frac{\eta \rho}{2\epsilon}\mathbb E[\|\nabla f(\theta_t)\|^2]+\frac{\eta^3 L^{\change{2}}}{2\rho}\frac{C_1^2G^2}{\epsilon}+\frac{\eta^2L C_1G^2}{\sqrt\epsilon} \E[\|D_t\|_1],  \label{eq:IV}
\end{align}
where (a) is due to Young's inequality and (b) is based on Assumption~\ref{ass:smooth}. Now integrating \eqref{eq:I}, \eqref{eq:II}, \eqref{eq:III}, \eqref{eq:IV} into \eqref{eq0},

\begin{eqnarray*}
    I &\leq& -\frac{\eta}{C_0}\mathbb E[\|\nabla f(\theta_t)\|^2]+\eta G^2\mathbb E[\|D_t\|_1] \\
    II &\le& \eta C_1 G^2 \mathbb E[\|D_t\|_1]+\frac{\eta^2 C_1^2 LG^2}{\sqrt\epsilon}\mathbb E[\|D_t\|_1] \\
    III &\le& \frac{\eta^2 L}{\epsilon}\mathbb E[\|\nabla f(\theta_t)\|^2] + \frac{\eta^2 L(\sigma^2 + C_2^2 G^2)}{\epsilon}+\eta^2 C_1^2 LG^2 \mathbb E[\|D_t\|^2] \\
    IV &\le& \frac{\eta \rho}{2\epsilon}\mathbb E[\|\nabla f(\theta_t)\|^2]+\frac{\eta^3 L^{\change{2}}}{2\rho}\frac{C_1^2G^2}{\epsilon}+\frac{\eta^2L C_1G^2}{\sqrt\epsilon} \E[\|D_t\|_1],
\end{eqnarray*}

and taking the telescoping summation over $t=1,\dots,T$, we obtain
\begin{align*}
    &\E[f(x_{T+1})-f(x_1)] \\
    &\leq \left( -\frac{\eta}{C_0}+\frac{\eta^2 L}{\epsilon}+\frac{\eta \rho}{2\epsilon}\right)\sum_{t=1}^T\E[\|\nabla f(\theta_t)\|^2]
    +\frac{T\eta^2 L (\sigma^2 + C_2^2G^2)}{\epsilon}
    + \frac{T\eta^3 L^{\change{2}} C_1^2G^2}{2\rho\epsilon} \\
    &\quad + \left(\eta(1+C_1)G^2 + \frac{\eta^2 (1+C_1)C_1LG^2}{\sqrt\epsilon}\right)\sum_{t=1}^T\E[\|D_t\|_1]+\eta^2C_1^2LG^2 \sum_{t=1}^T\E[\|D_t\|^2].
\end{align*}
Setting $\eta\leq \frac{\epsilon}{4L C_0}$ and choosing $\rho=\frac{\epsilon}{2C_0}$, we further arrive at
\begin{align*}
    \E[f(x_{T+1})-f(x_1)]
    &\leq -\frac{\eta}{2C_0}\sum_{t=1}^T\E[\|\nabla f(\theta_t)\|^2]
    +\frac{T\eta^2 L (\sigma^2 + C_2^2G^2)}{\epsilon} \\
    &\quad +\frac{T\eta^3 L^{\change{2}} C_0C_1^2 G^2}{\epsilon^2}
    + \frac{\eta (1+C_1)G^2d}{\sqrt\epsilon}
    + \frac{\eta^2 (1+2C_1)C_1LG^2 d}{\epsilon}.
\end{align*}

where the inequality follows from Lemma~\ref{lemma:bound difference}. Re-arranging terms, we get that
\begin{align*}
    &\frac{1}{T}\sum_{t=1}^T \E[\|\nabla f(\theta_t)\|^2] \\
    &\leq 2C_0\left(\frac{\E[f(x_1)-f(x_{T+1})]}{T\eta}
    +\frac{\eta L (\sigma^2 + C_2^2G^2)}{\epsilon}
    +\frac{\eta^2 L^{\change{2}} C_0C_1^2G^2}{\epsilon^2}\right)\\
    &\quad+2C_0\left(\frac{(1+C_1)G^2d}{T\sqrt\epsilon}
    +\frac{\eta (1+2C_1)C_1LG^2d}{T\epsilon} \right)\\
    &\leq 2C_0\left(\frac{f(\theta_1)-f^*}{T\eta}
    +\frac{\eta L (\sigma^2 + C_2^2G^2)}{\epsilon}
    +\frac {\eta^2 L^{\change{2}} C_0C_1^2G^2}{\epsilon^2}\right)\\
    &\quad+2C_0\left(\frac{(1+C_1)G^2d}{T\sqrt\epsilon}
    +\frac{\eta (1+2C_1)C_1LG^2d}{T\epsilon} \right),
\end{align*}
where in the last inequality we used $x_1=\theta_1$ and the lower bound $f^* \le f(\theta)$ for all $\theta\in\R^d$. 
\end{proof}
\change{To get the rate mentioned in the main part, choose $\eta = \min\{\frac{\epsilon}{4LC_0}, \frac{1}{\sqrt{T}}\}$ and continue
\begin{align*}
    &\frac{1}{T}\sum_{t=1}^T \E[\|\nabla f(\theta_t)\|^2] \\
    &\leq 2C_0\left( \max\left\{1, \frac{4LC_0}{\epsilon\sqrt{T}}\right\} \frac{f(\theta_1)-f^*}{\sqrt{T}}
    +\frac{L (\sigma^2 + C_2^2G^2)}{\epsilon\sqrt{T}}
    +\frac {L^2 C_0C_1^2G^2}{\epsilon^2 T}\right)\\
    &\quad+2C_0\left(\frac{(1+C_1)G^2d}{T\sqrt\epsilon}
    +\frac{(1+2C_1)C_1LG^2d}{\epsilon T^{3/2}} \right) \\
    &\leq 2C_0\left(\frac{f(\theta_1)-f^*}{\sqrt{T}}
    +\frac{L (\sigma^2 + C_2^2G^2)}{\epsilon\sqrt{T}} \right)\\
    &\quad+2C_0\left(\frac{4LC_0}{\epsilon}\frac{f(\theta_1)-f^*}{T} +\frac {L^2 C_0C_1^2G^2}{\epsilon^2 T} + \frac{(1+C_1)G^2d}{T\sqrt\epsilon}
    +\frac{(1+2C_1)C_1LG^2d}{\epsilon T^{3/2}} \right) \\
    &= 2C_0\left(\frac{f(\theta_1)-f^*}{\sqrt{T}}
    +\frac{L (\sigma^2 + C_2^2G^2)}{\epsilon\sqrt{T}} \right) + \mathcal{O}\left(\frac{G^3(G+d)}{T}\right),
\end{align*}
where in the second part of the rate we suppressed all the problem and compression dependent constants.
}

\subsection{Analysis Under PL Condition}

As in the non-convex analysis, here we derive the convergence rate with fixed step-size $\eta$. The rate shown in the main part can be obtained by plugging the expression of $\eta$.

\begin{restatable}{Theorem}{TheoremPL}{\bf(Convergence rate under PL)}\label{TheoremPL}
Let Assumptions \ref{ass:quant}, \ref{ass:quant-error}, \ref{ass:smooth}, \ref{ass:boundgrad}, \ref{ass:var} and \ref{ass:PL} hold, and $\q<1$. Then, choosing any step-size $\eta \le \frac{\epsilon}{4LC_0}$, \method{} (Algorithm \ref{alg:microAdam}) satisfies
\begin{align*}
\E[f(\theta_{T+1})] - f^*
\le\squeeze \bigl(1 - \frac{\eta\mu}{C_0}\bigr)^T (f(\theta_1) - f^*)
+ \eta \left(\frac{L C_0 \sigma^2+L C_0(C_1 + C_2^2) G^2}{\mu\epsilon}
+ \frac{(1+C_1) G^2d+C_1 G^2}{\sqrt{\epsilon}} \right) \\
\hfill\squeeze +\, \eta^2\left( \frac{3 L^2 C_0C^2_1G^2}{2\mu\epsilon^{3/2}}
+ \frac{(1+2C_1)C_1 LG^2 d}{\epsilon}
+ \frac{L C_1^2G^2}{2\epsilon}\right).
\end{align*}
\end{restatable}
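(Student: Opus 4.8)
The plan is to recycle the per-step estimate from the proof of Theorem~\ref{TheoremNonconvex} and replace the telescoping step by a geometric recursion driven by the PL inequality. Concretely, I would stop the non-convex argument \emph{just before} it telescopes: combining the bounds on the four terms $I$--$IV$ with the choices $\rho=\tfrac{\epsilon}{2C_0}$ and $\eta\le\tfrac{\epsilon}{4LC_0}$, one has, for every fixed $t$,
\begin{equation*}
\E[f(x_{t+1})]-\E[f(x_t)]\le-\frac{\eta}{2C_0}\,\E[\|\nabla f(\theta_t)\|^2]+P_t,
\end{equation*}
where $P_t$ collects the non-negative remainders: the $t$-independent pieces $\tfrac{\eta^2L(\sigma^2+C_2^2G^2)}{\epsilon}$ and $\tfrac{\eta^3LC_0C_1^2G^2}{\epsilon^2}$, together with the telescoping pieces $\bigl(\eta(1+C_1)G^2+\tfrac{\eta^2(1+C_1)C_1LG^2}{\sqrt\epsilon}\bigr)\E[\|D_t\|_1]$ and $\eta^2C_1^2LG^2\,\E[\|D_t\|^2]$, with $D_t$ as in Lemma~\ref{lemma:bound difference}. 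Note the descent term is $\|\nabla f(\theta_t)\|^2$ at the \emph{true} iterate while the decrease is in $f(x_t)$ at the \emph{virtual} iterate; reconciling these is the new ingredient.

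To invoke Assumption~\ref{ass:PL}, I would first control $\|x_t-\theta_t\|$. Unwinding the definitions of $\theta'_t,x_t$ gives $\theta_t-x_t=\eta\bigl(\mathcal E_t+\tfrac{\beta_1}{1-\beta_1}(m'_{t-1}+\mathcal Z_{t-1})\bigr)/\sqrt{\hat v_{t-1}+\epsilon}$, so using $\|\mathcal E_t\|\le\tfrac{2\q}{1-\q^2}G$ (Lemma~\ref{lemma:bound e_t}), $\|m'_{t-1}\|\le G$ (Lemma~\ref{lemma:m_t,m_t'}) and $\|\mathcal Z_{t-1}\|\le C_2G$ (Lemma~\ref{lemma:bound zeta_t}) one gets $\|x_t-\theta_t\|\le\tfrac{\eta C_1G}{\sqrt\epsilon}$. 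Then $L$-smoothness and $\|\nabla f(\theta_t)\|\le G$ (Jensen on Assumption~\ref{ass:boundgrad}) yield $f(\theta_t)-f^*\ge(f(x_t)-f^*)-\Delta_t$ with the deterministic bound $\Delta_t\le\tfrac{\eta C_1G^2}{\sqrt\epsilon}+\tfrac{\eta^2LC_1^2G^2}{2\epsilon}$, whence by PL $\E[\|\nabla f(\theta_t)\|^2]\ge2\mu(\E[f(x_t)]-f^*)-2\mu\Delta_t$. Feeding this into the per-step inequality produces the contraction
\begin{equation*}
\E[f(x_{t+1})]-f^*\le\Bigl(1-\frac{\eta\mu}{C_0}\Bigr)\bigl(\E[f(x_t)]-f^*\bigr)+\frac{\eta\mu}{C_0}\Delta_t+P_t.
\end{equation*}
Routing PL through the function values here (rather than through $\|\nabla f(x_t)\|^2\le2\|\nabla f(\theta_t)\|^2+2L^2\|x_t-\theta_t\|^2$, which would cost a factor $\tfrac12$ in the rate) is exactly what preserves the sharp contraction factor $1-\eta\mu/C_0$ and the matching leading-order behaviour.

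Finally I would unroll the recursion. With $r\eqdef1-\eta\mu/C_0$ and $x_1=\theta_1$,
\begin{equation*}
\E[f(x_{T+1})]-f^*\le r^T\bigl(f(\theta_1)-f^*\bigr)+\sum_{t=1}^{T}r^{T-t}\Bigl(\tfrac{\eta\mu}{C_0}\Delta_t+P_t\Bigr).
\end{equation*}
The $t$-independent parts of the summand are handled by $\sum_t r^{T-t}\le\tfrac1{1-r}=\tfrac{C_0}{\eta\mu}$, which turns every $\eta^2$-order constant into an $\eta$-order one (and $\eta^3$ into $\eta^2$); the two $D_t$-dependent parts are handled by $r^{T-t}\le1$ and Lemma~\ref{lemma:bound difference}, i.e.\ $\sum_t\E[\|D_t\|_1]\le\tfrac d{\sqrt\epsilon}$ and $\sum_t\E[\|D_t\|^2]\le\tfrac d\epsilon$. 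Collecting the resulting terms, and applying $L$-smoothness once more to pass from $f(x_{T+1})$ to $f(\theta_{T+1})$ (an $\mathcal O(\eta)$ correction that merges into the $\tfrac{\eta C_1G^2}{\sqrt\epsilon}$-type leading terms, using $\|x_{T+1}-\theta_{T+1}\|\le\tfrac{\eta C_1G}{\sqrt\epsilon}$ as above), gives the stated bound for every $\eta\le\tfrac{\epsilon}{4LC_0}$; inserting $\eta=\tfrac{C_0\log T}{\mu T}$ and $r^T=(1-\tfrac{\log T}{T})^T\le\tfrac1T$ recovers the main-text form.

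\textbf{Main obstacle.} The delicate point is the bookkeeping that keeps \emph{every} contribution of the error-feedback quantization (the $\mathcal Z_t,\zeta_t$ terms) and of the virtual-iterate gap $\theta_t-x_t$ confined to $\mathcal O(\eta)$ order or lower, so that the dominant behaviour coincides with uncompressed AMSGrad under PL; this is what dictates the choices $\rho=\tfrac\epsilon{2C_0}$ and $\eta\le\tfrac\epsilon{4LC_0}$, forces PL to be applied at $\theta_t$ (not $x_t$) and transported by smoothness, and requires carefully separating the remainder into $t$-independent terms (amplified by $C_0/(\eta\mu)$ in the geometric sum) versus $D_t$-terms (which telescope to $d/\sqrt\epsilon$). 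The remaining constant-tracking to land the precise coefficients $\tfrac{3L^2C_0C_1^2G^2}{2\mu\epsilon^{3/2}}$, etc., is routine given these choices.
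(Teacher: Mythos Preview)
Your plan is sound and yields a PL rate of the correct order, but it follows a genuinely different route from the paper's and, as a consequence, does not land on the exact constants in the stated bound. The paper does \emph{not} recycle the four-term decomposition $I$--$IV$ from the non-convex proof. Instead it keeps the descent term in the form $I'\eqdef-\eta\,\E\bigl[\langle\nabla f(x_t),\,g_t/\sqrt{\hat v_t+\epsilon}\rangle\bigr]$ and re-bounds it from scratch: after Cauchy--Schwarz, $L$-smoothness, and the elementary inequality $-\|a\|^2\le-\tfrac12\|b\|^2+\|a-b\|^2$, the term $I'$ produces \emph{both} $-\tfrac{\eta}{2C_0}\E[\|\nabla f(x_t)\|^2]$ and $-\tfrac{\eta}{4C_0}\E[\|\nabla f(\theta_t)\|^2]$, plus remainders such as $\tfrac{3\eta^3L^2C_1^2G^2}{2\epsilon^{3/2}}$ and $\tfrac{\eta^2LC_1G^2}{\epsilon}$. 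The second descent piece absorbs the $+\tfrac{\eta^2L}{\epsilon}\E[\|\nabla f(\theta_t)\|^2]$ from $III$ without the Young parameter $\rho$, and the first piece lets PL be applied \emph{directly at the virtual iterate $x_t$} (where the recursion in $f(x_t)-f^*$ already lives), so the transport step you introduce is never needed inside the recursion. The term $IV$ and the choice $\rho=\epsilon/(2C_0)$ do not appear in the PL argument at all.

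What each route buys: yours is more economical in that it reuses the non-convex per-step inequality verbatim, paying one extra transport $f(\theta_t)\to f(x_t)$ inside the recursion and the residual $\tfrac{\eta\mu}{C_0}\Delta_t$. The paper's route is more direct and is precisely what generates the stated constants: the $\tfrac{LC_0C_1G^2}{\mu\epsilon}$ contribution in the $\eta$-bracket and the $\tfrac{3L^2C_0C_1^2G^2}{2\mu\epsilon^{3/2}}$ contribution in the $\eta^2$-bracket come from the new $I'$ remainders and have no counterpart in your decomposition. Your route instead produces $\tfrac{LC_0^2C_1^2G^2}{\mu\epsilon^2}$ (from the old $IV$ term after the geometric sum) and a doubled $\tfrac{C_1G^2}{\sqrt\epsilon}$ (from the two $f\leftrightarrow x$ transports), so the final inequality you obtain is of the same order but is neither identical to nor dominated by the theorem as written. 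If the goal is to reproduce the statement verbatim, re-derive $I'$ as above rather than splitting into $I+IV$; if only the $\widetilde{\mathcal O}(1/T)$ rate of Theorem~\ref{TheoremPLShort} is needed, your argument is perfectly adequate.
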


\begin{proof}
We start from descent lemma
\begin{align}
    &\E[f(x_{t+1})]-f(x_t) \nonumber\\
    &\leq -\eta\E\left[\left\langle \nabla f(x_t), \frac{g_t + \zeta_t}{\sqrt{\hat v_t+\epsilon}}\right\rangle\right] + \eta \E\left[\left\langle \nabla f(x_t), \frac{\beta_1}{1-\beta_1}D_t(m_{t-1}' + \mathcal Z_{t-1})+D_t\mathcal E_t\right\rangle\right] \nonumber\\
    &\hspace{1.5in} +\frac{\eta^2L}{2}\E\left[\left\|\frac{g_t + \zeta_t}{\sqrt{\hat v_t+\epsilon}}-\frac{\beta_1}{1-\beta_1}D_t(m_{t-1}' + \mathcal Z_{t-1})- D_t\mathcal E_t\right\|^2\right] \nonumber\\
    &=\underbrace{-\eta\E\left[\left\langle \nabla f(x_t), \frac{g_t}{\sqrt{\hat v_t+\epsilon}}\right\rangle\right]}_{I^\prime}+\underbrace{\eta \E\left[\left\langle \nabla f(x_t), \frac{\beta_1}{1-\beta_1}D_t(m_{t-1}'+\mathcal Z_{t-1})+D_t\mathcal E_t\right\rangle\right]}_{II} \nonumber\\
    &\hspace{1.5in} +\underbrace{\frac{\eta^2L}{2}\E\left[\left\|\frac{g_t + \zeta_t}{\sqrt{\hat v_t+\epsilon}}-\frac{\beta_1}{1-\beta_1}D_t(m_{t-1}'+\mathcal Z_{t-1})- D_t\mathcal E_t\right\|^2\right]}_{III}. \label{eq0-pl}
\end{align}
We bound part $II$ and part $III$ in the same way as it was done in the non-convex analysis. We now provide a bound for part $I^\prime$:
\begin{align*}
  I^\prime 
  =& -\eta\mathbb E\left[\left\langle \nabla f(x_t), \frac{g_t}{\sqrt{\hat v_t+\epsilon}}\right\rangle\right]\\
  =& -\eta\mathbb E\left[\left\langle \nabla f(x_t), \frac{g_t}{\sqrt{\hat v_{t-1}+\epsilon}}\right\rangle\right] - \eta \mathbb E\left[\left\langle \nabla f(x_t),g_t \left(\frac{1}{\sqrt{\hat v_{t}+\epsilon}} - \frac{1}{\sqrt{\hat v_{t-1}+\epsilon}} \right)\right\rangle\right]\\
  =& - \eta\mathbb E\left[\left\langle \nabla f(x_t), \frac{g_t-\nabla f(x_t)+\nabla f(x_t)}{\sqrt{\hat v_{t-1}+\epsilon}}\right\rangle\right]\\
  &- \eta \mathbb E\left[\left\langle \nabla f(x_t),g_t \left(\frac{1}{\sqrt{\hat v_{t}+\epsilon}} - \frac{1}{\sqrt{\hat v_{t-1}+\epsilon}} \right)\right\rangle\right]\\
  =& - \eta\mathbb E\left[\left\langle \nabla f(x_t), \frac{\nabla f(x_t)}{\sqrt{\hat v_{t-1}+\epsilon}}\right\rangle\right] - \eta\mathbb E\left[\left\langle \nabla f(x_t), \frac{g_t - \nabla f(x_t)}{\sqrt{\hat v_{t-1}+\epsilon}}\right\rangle\right]\\
  &-\eta \mathbb E\left[\left\langle \nabla f(x_t),g_t \left(\frac{1}{\sqrt{\hat v_{t}+\epsilon}} - \frac{1}{\sqrt{\hat v_{t-1}+\epsilon}} \right)\right\rangle\right].
\end{align*}

We further expand and bound this equation as follows:
\begin{align*}
  I^\prime \leq& - \frac{\eta}{C_0} \mathbb{E} \left[ \left\| \nabla f(x_t) \right\|^2 \right]\\
  &- \eta \mathbb{E}\left[ \left\langle \nabla f(x_t) - \nabla f(\theta_t) + \nabla f(\theta_t), \frac{1}{\sqrt{\hat{v}_{t-1}+\epsilon}} \left( \nabla f(\theta_t) - \nabla f(x_t) \right)\right\rangle \right]\\
  &-\eta \mathbb{E}\left[ \left\langle \nabla f(x_t) - \nabla f(\theta_t) + \nabla f(\theta_t), \left( \frac{1}{\sqrt{\hat{v}_{t}+\epsilon}} - \frac{1}{\sqrt{\hat{v}_{t-1}+\epsilon}} \right) g_t\right\rangle \right]\\
  =&  - \frac{\eta}{C_0} \mathbb{E} \left[ \left\| \nabla f(x_t) \right\|^2 \right]\\
  &- \eta \mathbb{E} \left[ \left\langle \nabla f(x_t) - \nabla f(\theta_t), \frac{1}{\sqrt{\hat{v}_{t-1}+\epsilon}}\left( \nabla f(\theta_t) - \nabla f(x_t) \right)\right\rangle \right]\\
  &-\eta \mathbb{E}\left[ \left\langle \nabla f(\theta_t),  \frac{1}{\sqrt{\hat{v}_{t-1}+\epsilon}}\left( \nabla f(\theta_t) - \nabla f(x_t) \right)\right\rangle \right]\\
  & - \eta \mathbb{E}\left[ \left\langle \nabla f(x_t) - \nabla f(\theta_t), \left( \frac{1}{\sqrt{\hat{v}_{t}+\epsilon}} - \frac{1}{\sqrt{\hat{v}_{t-1}+\epsilon}} \right)g_t \right\rangle \right]\\
  & - \eta \mathbb{E}\left[ \left\langle  \nabla f(\theta_t), \left( \frac{1}{\sqrt{\hat{v}_{t}+\epsilon}} - \frac{1}{\sqrt{\hat{v}_{t-1}+\epsilon}} \right)g_t \right\rangle \right] \\
  \leq&  - \frac{\eta}{C_0} \mathbb{E} \left[ \left\| \nabla f(x_t) \right\|^2 \right] + \frac{\eta}{\sqrt{\epsilon}} \mathbb{E} \left[ \left\| \nabla f(x_t) - f(\theta_t) \right\|^2 \right]\\
 & -\eta \mathbb{E}\left[ \left\langle \nabla f(\theta_t),  \frac{1}{\sqrt{\hat{v}_{t-1}+\epsilon}}\left( \nabla f(\theta_t) - \nabla f(x_t) \right)\right\rangle \right]\\
  & + \eta \mathbb{E}\left[ \left\langle \nabla f(x_t) - \nabla f(\theta_t), D_t g_t \right\rangle \right]
  + \eta \mathbb{E}\left[ \left\langle \nabla f(\theta_t), D_t g_t \right\rangle \right].
  \end{align*}
Next, we use the Cauchy–Schwartz inequality to bound inner products above, $L$-smoothness inequality to bound $\|\nabla f(x_t) - \nabla f(\theta_t)\| \leq L \|x_t - \theta_t\| \leq \frac{\eta L C_1G}{\sqrt{\epsilon}}$, and the inequality $-\|a\|^2 \le -\frac{1}{2}\|b\|^2 + \|a-b\|^2$ for the first term:
\begin{align*}
  I^\prime 
  \leq&  - \frac{\eta}{C_0} \mathbb{E} \left[ \left\| \nabla f(x_t) \right\|^2 \right] + \frac{\eta}{\sqrt{\epsilon}} \mathbb{E} \left[ \left\| \nabla f(x_t) - f(\theta_t) \right\|^2 \right]
  +\frac{\eta G}{\sqrt{\epsilon}}\E\left[ \left\|\nabla f(\theta_t) - \nabla f(x_t)\right\|\right] \\
  & + \eta G \E\left[ \|\nabla f(x_t) - \nabla f(\theta_t)\| \|D_t\| \right]
  + \eta G^2 \E\left[ \|D_t\| \right] \\
  \leq& - \frac{\eta}{2C_0} \E \left[ \left\| \nabla f(x_t) \right\|^2 \right] - \frac{\eta}{2C_0} \E \left[ \left\| \nabla f(x_t) \right\|^2 \right] + \frac{\eta}{\sqrt{\epsilon}} \frac{\eta^2 L^2 C^2_1G^2}{\epsilon} + \frac{\eta^2 L C_1 G^2}{\epsilon}\\
  & + \eta G \frac{\eta L C_1G}{\sqrt{\epsilon}} \mathbb{E}\left[ \| D_t \|_1  \right]  + \eta G^2\mathbb{E}\left[ \| D_t \|_1\right]\\
  \leq& - \frac{\eta}{2C_0} \mathbb{E} \left[ \left\| \nabla f(x_t) \right\|^2 \right]   -\frac{\eta}{4C_0}\E[\|\nabla f(\theta_t)\|^2] + \frac{\eta}{2C_0} \E[\|\nabla f(x_t) - \nabla f(\theta_t)\|^2]\\
  &+ \frac{\eta^3 L^2 C^2_1G^2}{\epsilon^{3/2}} + \frac{\eta^2 L C_1 G^2}{\epsilon} + \frac{\eta^2 L C_1 G^2}{\sqrt{\epsilon}} \mathbb{E}\left[ \| D_t \|_1  \right]  + \eta G^2\mathbb{E}\left[ \| D_t \|_1\right] \\
  \leq& - \frac{\eta}{2C_0} \mathbb{E} \left[ \left\| \nabla f(x_t) \right\|^2 \right]   -\frac{\eta}{4C_0}\E[\|\nabla f(\theta_t)\|^2] \\
  &+ \frac{3\eta^3 L^2 C^2_1G^2}{2\epsilon^{3/2}} + \frac{\eta^2 L C_1 G^2}{\epsilon} + \frac{\eta^2 L C_1G^2}{\sqrt{\epsilon}} \mathbb{E}\left[ \| D_t \|_1  \right]  + \eta G^2\mathbb{E}\left[ \| D_t \|_1\right].
  \end{align*}
  
Plugging the obtained bound for $I^\prime$ with previously obtained bounds for $II$ and $III$

\begin{eqnarray*}
    II &\le& \eta C_1 G^2 \mathbb E[\|D_t\|_1]+\frac{\eta^2 C_1^2 LG^2}{\sqrt\epsilon}\mathbb E[\|D_t\|_1] \\
    III &\le& \frac{\eta^2 L}{\epsilon}\mathbb E[\|\nabla f(\theta_t)\|^2] + \frac{\eta^2 L(\sigma^2 + C_2^2 G^2)}{\epsilon}+\eta^2 C_1^2 LG^2 \mathbb E[\|D_t\|^2]
\end{eqnarray*}

into \eqref{eq0-pl} and using the step-size bound $\eta \leq \frac{\epsilon}{4L C_0}$ we get
\begin{align*}
\E[f(x_{t+1})] - \E[f(x_t)]
\le& - \frac{\eta}{2C_0} \E \left[ \left\| \nabla f(x_t) \right\|^2 \right]   -\frac{\eta}{4C_0}\E[\|\nabla f(\theta_t)\|^2] \\
&+ \frac{3\eta^3 L^2 C^2_1G^2}{2\epsilon^{3/2}} + \frac{\eta^2 L C_1 G^2}{\epsilon} + \frac{\eta^2 L C_1G^2}{\sqrt{\epsilon}} \E\left[ \| D_t \|_1  \right]   \\
&+ \eta C_1 G^2 \E[\|D_t\|_1]+\frac{\eta^2 C_1^2 LG^2}{\sqrt\epsilon}\E[\|D_t\|_1] + \eta G^2\E\left[ \| D_t \|_1\right]\\
&+ \frac{\eta^2 L}{\epsilon}\E[\|\nabla f(\theta_t)\|^2] + \frac{\eta^2 L(\sigma^2 + C_2^2 G^2)}{\epsilon}+\eta^2 C_1^2 LG^2 \E[\|D_t\|^2] \\
\le& - \frac{\eta}{2C_0} \E\left[ \left\| \nabla f(x_t) \right\|^2 \right] 
+ \frac{\eta^2 L \sigma^2}{\epsilon} + \frac{\eta^2 L (C_1 + C_2^2) G^2}{\epsilon}\\
&+ \eta (1+C_1) G^2 \E[\|D_t\|_1]+ \frac{3\eta^3 L^2 C^2_1G^2}{2\epsilon^{3/2}}\\
& +\frac{\eta^2 (1+C_1)C_1 LG^2}{\sqrt\epsilon}\E[\|D_t\|_1]
 + \eta^2 C_1^2 LG^2 \E[\|D_t\|^2] \\
\le& - \frac{\eta\mu}{C_0} (\E[f(x_t)] - f^*) 
+ \frac{\eta^2 L \sigma^2}{\epsilon} + \frac{\eta^2 L (C_1 + C_2^2) G^2}{\epsilon}\\
&+ \eta (1+C_1) G^2 \E[\|D_t\|_1]
 +\frac{\eta^2 (1+C_1)C_1 LG^2}{\sqrt\epsilon}\E[\|D_t\|_1]\\
& + \eta^2 C_1^2 LG^2 \E[\|D_t\|^2]+ \frac{3\eta^3 L^2 C^2_1G^2}{2\epsilon^{3/2}},
\end{align*}
where in the last inequality we applied PL condition from Assumption \ref{ass:PL}. After some reshuffling of the terms, we obtain the following recursion:
\begin{align*}
\E[f(x_{t+1})] - f^*
\le& \left(1 - \frac{\eta\mu}{C_0}\right) (\E[f(x_t)] - f^*) 
+ \frac{\eta^2 L \sigma^2}{\epsilon} + \frac{\eta^2 L (C_1 + C_2^2) G^2}{\epsilon}
+ \frac{3\eta^3 L^2 C^2_1G^2}{2\epsilon^{3/2}}\\
&+ \eta (1+C_1) G^2 \E[\|D_t\|_1]
 +\frac{\eta^2 (1+C_1)C_1 LG^2}{\sqrt\epsilon}\E[\|D_t\|_1]\\
& + \eta^2 C_1^2 LG^2 \E[\|D_t\|^2].
\end{align*}

Notice that $\eta \le \frac{\epsilon}{4LC_0}\le \frac{C_0}{4\mu}$, so that the coefficient $1-\frac{\eta\mu}{C_0}\in(0,1)$. Unrolling the recursion, we arrive
\begin{align}
\E[f(x_{T+1})] - f^*
\le& \left(1 - \frac{\eta\mu}{C_0}\right)^T (\E[f(x_1)] - f^*) \nonumber\\
&+ \left(\frac{\eta^2 L \sigma^2}{\epsilon} + \frac{\eta^2 L (C_1 + C_2^2) G^2}{\epsilon}
+ \frac{3\eta^3 L^2 C^2_1G^2}{2\epsilon^{3/2}}\right) \sum_{t=1}^T \left(1 - \frac{\eta\mu}{C_0}\right)^t \nonumber\\
&+ \eta (1+C_1) G^2 \sum_{t=1}^T \left(1 - \frac{\eta\mu}{C_0}\right)^t \E[\|D_t\|_1] \nonumber\\
&+ \frac{\eta^2 (1+C_1)C_1 LG^2}{\sqrt\epsilon} \sum_{t=1}^T \left(1 - \frac{\eta\mu}{C_0}\right)^t\E[\|D_t\|_1] \nonumber\\
&+ \eta^2 C_1^2 LG^2 \sum_{t=1}^T \left(1 - \frac{\eta\mu}{C_0}\right)^t \E[\|D_t\|^2]. \label{eq:pl-rec}
\end{align}

For the second sum above we upper bound it by its infinite sum as
$$
\sum_{t=1}^T \left(1 - \frac{\eta\mu}{C_0}\right)^t
\le \sum_{t=0}^{\infty} \left(1 - \frac{\eta\mu}{C_0}\right)^t
= \frac{C_0}{\eta\mu}.
$$
For the other three sums we bound $1 - \frac{\eta\mu}{C_0} \le 1$ and apply the bounds in Lemma \ref{lemma:bound difference}:
$$
\sum_{t=1}^T \left(1 - \frac{\eta\mu}{C_0}\right)^t \E[\|D_t\|_1]
\le \sum_{t=1}^T \E[\|D_t\|_1]
\le \frac{d}{\sqrt{\epsilon}},
$$
$$
\sum_{t=1}^T \left(1 - \frac{\eta\mu}{C_0}\right)^t \E[\|D_t\|^2]
\le \sum_{t=1}^T \E[\|D_t\|^2]
\le \frac{d}{\epsilon}.
$$

Plugging all this bounds into \eqref{eq:pl-rec} and noticing that $x_1=\theta_1$, we finally get
\begin{align*}
\E[f(x_{T+1})] - f^*
\le& \left(1 - \frac{\eta\mu}{C_0}\right)^T (f(\theta_1) - f^*) \nonumber\\
&+ \frac{C_0}{\eta\mu}\left(\frac{\eta^2 L \sigma^2}{\epsilon} + \frac{\eta^2 L (C_1 + C_2^2) G^2}{\epsilon}
+ \frac{3\eta^3 L^2 C^2_1G^2}{2\epsilon^{3/2}}\right) \\
&+ \frac{\eta (1+C_1) G^2d}{\sqrt{\epsilon}}
+ \frac{\eta^2 (1+C_1)C_1 LG^2 d}{\epsilon}
+ \frac{\eta^2 C_1^2 LG^2 d}{\epsilon} \\
\le& \left(1 - \frac{\eta\mu}{C_0}\right)^T (f(\theta_1) - f^*) \nonumber\\
&+ \eta \left(\frac{L C_0 \sigma^2}{\mu\epsilon} + \frac{L C_0(C_1 + C_2^2) G^2}{\mu\epsilon}
+ \frac{(1+C_1) G^2d}{\sqrt{\epsilon}}\right) \\
&+ \eta^2\left( \frac{3 L^2 C_0C^2_1G^2}{2\mu\epsilon^{3/2}}
+ \frac{(1+C_1)C_1 LG^2 d}{\epsilon}
+ \frac{C_1^2 LG^2 d}{\epsilon} \right).
\end{align*}

The obtained rate above is with respect to the virtual iterates $x_t$ that we defined for the purposes of analysis. To convert this rate with respect to the iterates $\theta_t$ of the algorithm, we apply $L$-smoothness to bound the functional difference:
$$
|f(x_t) - f(\theta_t)| \le |\langle \nabla f(\theta_t), x_t-\theta_t) \rangle| + \frac{L}{2}\|x_t-\theta_t\|^2
\le \frac{\eta C_1 G^2}{\sqrt{\epsilon}} + \frac{\eta^2 L C_1^2G^2}{2\epsilon},
$$
which implies
\begin{align*}
\E[f(\theta_{T+1})] - f^*
\le& \left(1 - \frac{\eta\mu}{C_0}\right)^T (f(\theta_1) - f^*) \\
&+ \eta \left(\frac{L C_0 \sigma^2}{\mu\epsilon}
+ \frac{L C_0(C_1 + C_2^2) G^2}{\mu\epsilon}
+ \frac{(1+C_1) G^2d}{\sqrt{\epsilon}}
+ \frac{C_1 G^2}{\sqrt{\epsilon}} \right) \\
&+ \eta^2\left( \frac{3 L^2 C_0C^2_1G^2}{2\mu\epsilon^{3/2}}
+ \frac{(1+C_1)C_1 LG^2 d}{\epsilon}
+ \frac{C_1^2 LG^2 d}{\epsilon} 
+ \frac{L C_1^2G^2}{2\epsilon}\right)
\end{align*}
and completes the proof.
\end{proof}

\change{To get the rate mentioned in the main part of the paper, we plug in the expression $\eta = \min\{\frac{\epsilon}{4LC_0}, \frac{2C_0\log T}{\mu T}\}$ and collect higher order terms.
\begin{align*}
\E[f(\theta_{T+1})] - f^*
\le& \max\left\{\frac{1}{T^2}, \left(1 - \frac{\epsilon\mu}{4L}\right)^T \right\} (f(\theta_1) - f^*) \\
&+ \frac{\log T}{T} \frac{2C_0}{\mu} \left(\frac{L C_0 \sigma^2}{\mu\epsilon}
+ \frac{L C_0(C_1 + C_2^2) G^2}{\mu\epsilon}
+ \frac{(1+C_1) G^2d}{\sqrt{\epsilon}}
+ \frac{C_1 G^2}{\sqrt{\epsilon}} \right) \\
&+ \frac{\log^2 T}{T^2} \frac{4C_0^2}{\mu^2} \left( \frac{3 L^2 C_0C^2_1G^2}{2\mu\epsilon^{3/2}}
+ \frac{(1+C_1)C_1 LG^2 d}{\epsilon}
+ \frac{C_1^2 LG^2 d}{\epsilon} 
+ \frac{L C_1^2G^2}{2\epsilon}\right) \\
&\le \frac{2\log T}{T} \left(\frac{LC_0^2}{\mu} \frac{\sigma^2+(C_1 + C_2^2) G^2}{\mu\epsilon}
+ \frac{C_0(1+C_1)(1+d)G^2}{\mu\sqrt{\epsilon}} \right)
+ \widetilde{\mathcal{O}}\left(\frac{G^4(G+d)}{T^2} \right).
\end{align*}
}
 

\subsection{Non-convex Analysis with Weight Decay}

\begin{algorithm}[H]
    \caption{\label{alg:microAdamW}\method{}W (\method{} with Weight Decay)}
    \begin{algorithmic}[1]
        \State Input: parameters $\beta_1,\,\beta_2\in(0,1)$, $\epsilon>0$, step-size $\eta>0$, $\theta_1\in\R^d$, $e_{1}=m_0=v_0=\hat v_0=0_d$
        \For{$t=\{1, 2, ..., T\}$}
            \State{$g_{t} = \widetilde\nabla_{\theta} f(\theta_t)$ \hfill$\diamond$ Compute unbiased stochastic gradient}
            \State{$\tilde g_{t}=\mathcal C(g_{t}+e_{t})$ \hfill$\diamond$ Add accumulated error $e_t$ and compress\label{line:topk2} }
            \State{$e_{t+1}=\mathcal{Q} (e_{t}+g_{t}-\tilde g_{t})$ \hfill$\diamond$ Update and compress the error}
            \State{$m_t=\beta_1 m_{t-1}+(1-\beta_1)\tilde g_t$ \hfill$\diamond$ Update first-order gradient moment}
            \State{$v_t=\beta_2 v_{t-1}+(1-\beta_2)\tilde g_t^2$ \hfill$\diamond$ Update second-order gradient moment}
            \State{$\theta_{t+1}=(1-\eta_t\lambda)\theta_{t}-\eta\frac{\change{m_t}}{\sqrt{\change{v_t}+\epsilon}}$ \hfill$\diamond$ Update the model parameters with \change{weight decay} }
        \EndFor
    \end{algorithmic}
\end{algorithm}

\begin{Lemma}
Under Assumptions~\ref{ass:quant}-\ref{ass:var}, for all iterates of Algorithm \ref{alg:microAdamW} we have
    \begin{align*}
        \mathbb{E}\left[ \|m_t - \nabla f(\theta_t)\|^2 \right] \leq & \left(1-\frac{\beta_1}{2}\right)\mathbb{E}\left[\| m_{t-1} - \nabla f(\theta_{t-1}) \|^2\right] + \frac{2}{\beta_1}L^2\mathbb{E}\left[ \|\theta_t - \theta_{t-1}\|^2 \right]\\
        &+ \beta_1 \mathbb{E} \| \nabla f(\theta_t) - \tilde{g}_t \|^2.
    \end{align*}
\end{Lemma}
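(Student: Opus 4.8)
The plan is to get this inequality by an entirely elementary argument from the first-moment update (read so that the fresh, compressed gradient $\tilde g_t$ carries the ``fast'' weight $\beta_1$, i.e.\ $m_t=(1-\beta_1)m_{t-1}+\beta_1\tilde g_t$), using \emph{only} the $L$-smoothness of Assumption~\ref{ass:smooth}; the boundedness and compression assumptions play no role here, precisely because the term $\|\nabla f(\theta_t)-\tilde g_t\|^2$ is kept as a single squared norm rather than decomposed into a bias and a variance. That choice is forced on us: $\tilde g_t$ is a \emph{biased} estimate of $\nabla f(\theta_t)$ (compression destroys conditional unbiasedness), so the usual ``the conditional-mean-zero cross term vanishes'' shortcut is not available and every split must go through Young's inequality. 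The inequality will be derived pointwise, and taking total expectation at the end gives the stated form.

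First I would subtract $\nabla f(\theta_t)$ from the momentum update and insert $\pm\nabla f(\theta_{t-1})$ to obtain the decomposition
\begin{equation*}
m_t-\nabla f(\theta_t)=(1-\beta_1)\bigl(m_{t-1}-\nabla f(\theta_{t-1})\bigr)+(1-\beta_1)\bigl(\nabla f(\theta_{t-1})-\nabla f(\theta_t)\bigr)+\beta_1\bigl(\tilde g_t-\nabla f(\theta_t)\bigr),
\end{equation*}
i.e.\ an ``old-error'' term, a ``gradient-drift'' term, and a ``gradient-estimate-error'' term. Then I would apply Young's inequality $\|a+b\|^2\le(1+c)\|a\|^2+(1+1/c)\|b\|^2$ twice. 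The first application peels off the gradient-estimate term with $c=\beta_1/(1-\beta_1)$, so its coefficient becomes exactly $(1+1/c)\beta_1^2=\beta_1$ while the first two terms collapse to $(1-\beta_1)$ times the squared norm of their sum (using $1+c=1/(1-\beta_1)$). The second application splits that sum with $c'=\beta_1/\bigl(2(1-\beta_1)\bigr)$, giving coefficient $(1-\beta_1)(1+c')=1-\beta_1/2$ on the old-error term and $(1-\beta_1)(1+1/c')=(1-\beta_1)(2-\beta_1)/\beta_1\le 2/\beta_1$ on the drift term.

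Finally I would bound the drift term by $L$-smoothness, $\|\nabla f(\theta_{t-1})-\nabla f(\theta_t)\|\le L\|\theta_t-\theta_{t-1}\|$, and take expectations on both sides, which is exactly the claimed recursion. I expect the only real obstacle to be the bookkeeping in the two Young steps: the parameters $c$ and $c'$ must be tuned exactly (not merely up to absolute constants) so that the contraction factor is precisely $1-\beta_1/2$ and the two error coefficients are precisely $2/\beta_1$ and $\beta_1$ — a careless split produces the same recursion with inflated constants, which would then degrade the downstream weight-decay analysis that this lemma feeds into.
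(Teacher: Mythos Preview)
Your proof is correct and follows essentially the same route as the paper: the same three-term decomposition of $m_t-\nabla f(\theta_t)$, the same two splits (the paper phrases the first as Jensen's inequality on the convex combination $(1-\beta_1)a+\beta_1 b$, which is exactly your Young step with $c=\beta_1/(1-\beta_1)$), and the same Young parameter $\beta_1/2$ for the second split followed by $L$-smoothness. Your observation that the lemma is stated for the convention $m_t=(1-\beta_1)m_{t-1}+\beta_1\tilde g_t$ is also right---the paper's proof silently uses this convention even though the algorithm box writes the opposite.
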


\begin{proof}
We start our proof from 
\begin{align*}
    \mathbb{E}\left[ \| m_t - \nabla f(\theta_t) \|^2 \right] = & \mathbb{E}\left[ \| (1-\beta_1)m_{t-1}+\beta_1 \tilde{g}_t - \nabla f(\theta_t) \|^2 \right]\\
    =& \mathbb{E}\left[ \| (1-\beta_1)m_{t-1} +(1-\beta_1)\nabla f(\theta_{t-1})\right.\\
    &-\left.(1-\beta_1)\nabla f(\theta_{t-1}) +\beta_1 \tilde{g}_t - \nabla f(\theta_t) \|^2 \right]\\
     =& \mathbb{E}\left[ \| (1-\beta_1)m_{t-1} +(1-\beta_1)\nabla f(\theta_{t-1}) - (1-\beta_1)\nabla f(\theta_{t})\right.\\
    &-\left.(1-\beta_1)\nabla f(\theta_{t-1}) +\beta_1 \tilde{g}_t - \beta_1 \nabla f(\theta_t) \|^2 \right].
\end{align*}
Using Jensen's inequality we have 
\begin{align*}    
    \mathbb{E}\left[ \| m_t - \nabla f(\theta_t) \|^2 \right]    \leq& (1-\beta_1)\mathbb{E}\left[ \| m_{t-1} - \nabla f(\theta_{t-1}) + \nabla f(\theta_{t-1}) -\nabla f(\theta_{t}) \|^2 \right]\\
    &+ \beta_1 \mathbb{E}\left[\| \tilde{g}_t - \nabla f(\theta_t) \|^2\right].
    \end{align*}
    Using Young's inequality we have 
    \begin{align*}
     \mathbb{E}\left[ \| m_t - \nabla f(\theta_t) \|^2 \right]     \leq & (1-\beta_1)(1+b) \mathbb{E}\left[ \| m_{t-1} - \nabla f(\theta_{t-1}) \|^2  \right]\\
    &+ (1-\beta_1)\left(1+\frac{1}{b}\right)\mathbb{E}\|\nabla f(\theta_{t-1}) - f(\theta_t)\|^2\\
    &+\beta_1\mathbb{E}\left\| \nabla f(\theta_t) - \tilde{g}_t \right\|^2.
\end{align*}
Setting $b = \frac{\beta_1}{2}$ we have $(1-\beta_1)\left( 1+\frac{\beta_1}{2} \right) \leq 1-\frac{\beta_1}{2}$ and $(1-\beta_1)\left( 1+\frac{2}{\beta_1} \right) \leq \frac{2}{\beta_1}$:
    \begin{align*}
     \mathbb{E}\left[ \| m_t - \nabla f(\theta_t) \|^2 \right]     \leq & \left(1-\frac{\beta_1}{2}\right) \mathbb{E}\left[ \| m_{t-1} - \nabla f(\theta_{t-1}) \|^2  \right]\\
    &+ \frac{2}{\beta_1}\mathbb{E}\|\nabla f(\theta_{t-1}) - f(\theta_t)\|^2+\beta_1\mathbb{E}\left\| \nabla f(\theta_t) - \tilde{g}_t \right\|^2.
\end{align*}
Combining this bound with $L$-smoothness we obtain
\begin{align*}
        \mathbb{E}\left[ \| m_t - \nabla f(\theta_t) \|^2 \right]     \leq & \left(1-\frac{\beta_1}{2}\right) \mathbb{E}\left[ \| m_{t-1} - \nabla f(\theta_{t-1}) \|^2  \right]\\
    &+ \frac{2}{\beta_1}L^2\mathbb{E}\|\theta_{t-1} - \theta_t\|^2+\beta_1\mathbb{E}\left\| \nabla f(\theta_t) - \tilde{g}_t \right\|^2. 
\end{align*}
\end{proof}

\begin{Lemma}
Under Assumptions~\ref{ass:quant}-\ref{ass:var}, for all iterates of Algorithm \ref{alg:microAdamW} we have
            \begin{align*}
                \mathbb{E}\left[ \| \nabla f(\theta_t) - \tilde{g}_t \|^2 \right] &\leq 
                 9\mathbb{E}\left[ \|  e_{t}  \|^2 \right] + 6G^2 + 3\sigma^2.
    \end{align*}
    \end{Lemma}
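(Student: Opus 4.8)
The plan is to split the quantity $\nabla f(\theta_t)-\tilde g_t$ into three contributions dictated by the definition $\tilde g_t=\mathcal C(g_t+e_t)$, where $g_t=\widetilde\nabla_\theta f(\theta_t)$ is the unbiased stochastic gradient. Concretely, I would write
\[
\nabla f(\theta_t)-\tilde g_t
=\big(\nabla f(\theta_t)-g_t\big)\;-\;e_t\;+\;\big(g_t+e_t-\mathcal C(g_t+e_t)\big),
\]
so that the statistical error, the accumulated error feedback, and the compression residual are separated. Applying the elementary bound $\|a+b+c\|^2\le 3(\|a\|^2+\|b\|^2+\|c\|^2)$ then gives
\[
\|\nabla f(\theta_t)-\tilde g_t\|^2
\le 3\|\nabla f(\theta_t)-g_t\|^2+3\|e_t\|^2+3\|g_t+e_t-\mathcal C(g_t+e_t)\|^2 .
\]

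Next I would bound the compression-residual term using Assumption~\ref{ass:quant}: since $\mathcal C$ is $q$-contractive with $q<1$, we have $\|g_t+e_t-\mathcal C(g_t+e_t)\|\le q\|g_t+e_t\|\le\|g_t+e_t\|$, and then $\|g_t+e_t\|^2\le 2\|g_t\|^2+2\|e_t\|^2\le 2G^2+2\|e_t\|^2$ by Assumption~\ref{ass:boundgrad}. Substituting this into the previous display yields the pointwise estimate $\|\nabla f(\theta_t)-\tilde g_t\|^2\le 3\|\nabla f(\theta_t)-g_t\|^2+9\|e_t\|^2+6G^2$. Taking expectations and using Assumption~\ref{ass:var} ($\mathbb E[\|g_t-\nabla f(\theta_t)\|^2]\le\sigma^2$) produces exactly the claimed inequality $\mathbb E[\|\nabla f(\theta_t)-\tilde g_t\|^2]\le 9\mathbb E[\|e_t\|^2]+6G^2+3\sigma^2$.

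There is no genuine obstacle here: the proof is two nested applications of Young's/Cauchy–Schwarz inequalities together with the three stated assumptions. The only points requiring a little care are choosing the three-term decomposition so that $e_t$ is isolated from the compression residual (rather than, say, grouping $e_t$ with $g_t$), and tracking the constants $3,9,6,3$ that emerge. One could tighten these constants slightly by retaining the factor $q^2$ from contractivity, but since $q<1$ the cleaner statement above is what we use downstream.
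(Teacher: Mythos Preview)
Your proof is correct and follows essentially the same approach as the paper: the same three-term decomposition into the stochastic error $\nabla f(\theta_t)-g_t$, the accumulated error $e_t$, and the compression residual $g_t+e_t-\mathcal C(g_t+e_t)$, followed by contractivity, the two-term Young inequality on $\|g_t+e_t\|^2$, and the bound $q^2\le 1$. The paper arrives at the identical constants $9,6,3$ by the same route.
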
  
    \begin{proof}
    We start from
    \begin{align*}
        \mathbb{E}\left[ \| \nabla f(\theta_t) - \tilde{g}_t \|^2 \right] =& \mathbb{E}\left[ \| \mathcal{C}(e_{t}+g_t) - \nabla f(\theta_t) \|^2 \right]\\
         =& \mathbb{E}\left[ \| \mathcal{C}(e_{t}+g_t) -(e_{t}+g_t)+(e_{t}+g_t) - \nabla f(\theta_t) \|^2 \right]\\
         \leq& 3\mathbb{E}\left[ \| \mathcal{C}(e_{t}+g_t) - (e_{t}+g_t)  \|^2 \right]\\
        &+ 3\mathbb{E}\left[ \|  e_{t} \|^2 \right] + 3\mathbb{E}\left[ \|  \nabla f(\theta_t)-g_t  \|^2 \right].
    \end{align*}
    Using definition of contractive compressor we have 
    \begin{align*}
                \mathbb{E}\left[ \| \nabla f(\theta_t) - \tilde{g}_t \|^2 \right] &\leq     3q^2\mathbb{E}\left[ \| e_{t}+g_t  \|^2 \right]\\    &+ 3\mathbb{E}\left[ \|  e_{t}  \|^2 \right] + 3\mathbb{E}\left[ \|  \nabla f(\theta_t)-g_t  \|^2 \right].
    \end{align*}
    Using Young's inequality we have 
        \begin{align*}
                \mathbb{E}\left[ \| \nabla f(\theta_t) - \tilde{g}_t \|^2 \right] &\leq     6q^2\mathbb{E}\left[ \| e_{t} \|^2 \right]+6q^2\mathbb{E}\left[ \| g_t  \|^2 \right]\\      &+ 3\mathbb{E}\left[ \|  e_{t}  \|^2 \right] + 3\mathbb{E}\left[ \|  \nabla f(\theta_t)-g_t  \|^2 \right]\\
                &\leq 9\mathbb{E}\left[ \|  e_{t}  \|^2 \right] + 6G^2 + 3\sigma^2.
    \end{align*}
        \end{proof}

\begin{Lemma}
\label{lemma:e_t}
    Under Assumptions~\ref{ass:quant}-\ref{ass:var}, for all iterates of Algorithm \ref{alg:microAdamW} we have
   $$           \mathbb{E} \left[\| e_{t} \|^2\right] 
                 \leq \frac{(1+\omega)^2q^2}{\left(1-(1+\omega)q\right)^2} G^2$$
\end{Lemma}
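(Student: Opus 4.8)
The plan is to derive a deterministic, step‑uniform bound on $\|e_t\|$ by unrolling a one‑step contraction and then square and take expectations; since only the defining inequalities of the two compressors and the bounded‑gradient assumption are used, no probabilistic reasoning is needed beyond the (trivial) final expectation. Note that the error‑feedback recursion in Algorithm~\ref{alg:microAdamW} is identical to that in Algorithm~\ref{alg:microAdam} (weight decay only enters the $\theta$‑update), so this is exactly the estimate already behind Lemma~\ref{lemma:bound e_t}.

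First I would combine the update rule $e_{t+1}=\mathcal{Q}\bigl(e_t+g_t-\mathcal{C}(e_t+g_t)\bigr)$ with Assumption~\ref{ass:quant-error} (so $\|\mathcal{Q}(x)\|\le(1+\omega)\|x\|$ for every $x$) and Assumption~\ref{ass:quant} ($\|x-\mathcal{C}(x)\|\le q\|x\|$) to get the one‑step estimate
$$\|e_{t+1}\|\le(1+\omega)\,\|e_t+g_t-\mathcal{C}(e_t+g_t)\|\le(1+\omega)q\,\|e_t+g_t\|=\q\,\|e_t+g_t\|,$$
with $\q=(1+\omega)q$, implicitly using $\q<1$ as assumed throughout the paper. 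A triangle inequality together with $\|g_t\|\le G$ from Assumption~\ref{ass:boundgrad} then upgrades this to the affine recursion $\|e_{t+1}\|\le\q\|e_t\|+\q G$.

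Next I would unroll this recursion from the initialization $e_1=0$: inductively $\|e_t\|\le\q G\sum_{j=0}^{t-2}\q^{\,j}\le\q G\sum_{j=0}^{\infty}\q^{\,j}=\frac{\q G}{1-\q}$, the geometric series converging because $\q<1$. Squaring and taking expectations (the bound being pathwise, the expectation changes nothing) gives
$$\mathbb{E}\bigl[\|e_t\|^2\bigr]\le\frac{\q^2}{(1-\q)^2}G^2=\frac{(1+\omega)^2q^2}{\bigl(1-(1+\omega)q\bigr)^2}G^2,$$
which is the claim. There is no real obstacle; the only points requiring a little care are the geometric‑series bookkeeping and the observation that the per‑step norm contraction holds pathwise, so that no conditional‑expectation manipulation is needed. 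In fact this estimate is slightly sharper than Lemma~\ref{lemma:bound e_t}, the improvement stemming from applying the triangle inequality to the norm directly rather than Young's inequality to the squared norm.
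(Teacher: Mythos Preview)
Your proof is correct but proceeds along a slightly different route than the paper. The paper works with the squared norm throughout: from $\mathbb{E}[\|e_{t+1}\|^2]\le(1+\omega)^2q^2\,\mathbb{E}[\|e_t+g_t\|^2]$ it applies Young's inequality $\|e_t+g_t\|^2\le(1+a)\|e_t\|^2+(1+1/a)\|g_t\|^2$ with the specific choice $a=\tfrac{1}{(1+\omega)q}-1$, which makes the contraction factor on $\mathbb{E}[\|e_t\|^2]$ exactly $(1+\omega)q$, and then unrolls. You instead work with the unsquared norm pathwise, replace Young's inequality by the plain triangle inequality $\|e_t+g_t\|\le\|e_t\|+G$, unroll the affine recursion $\|e_{t+1}\|\le\q\|e_t\|+\q G$, and square only at the end. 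Both arguments land on the identical constant $\q^2/(1-\q)^2$; yours is a bit more elementary (no auxiliary parameter to tune) and, as you point out, actually delivers a pathwise bound on $\|e_t\|$ rather than merely a bound in expectation. The paper's squared-norm recursion, on the other hand, is the form that feeds directly into the variance-type sums used elsewhere in the analysis (cf.\ Lemma~\ref{lemma:bound e_t}), which is presumably why they favor it.
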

\begin{proof}
    Using definition of contractive compressor we have 
    \begin{align*}
       \mathbb{E} \left[\| e_{t+1} \|^2\right] &= \mathbb{E} \left[ \left\|\mathcal{Q}\left( e_{t}+g_t - \mathcal{C}(e_{t}+g_t)\right) \right\|^2 \right]\\
       & \leq (1+\omega)^2q^2  \mathbb{E} \left[ \| e_{t}+g_t - \mathcal{C}(e_{t}+g_t) \|^2\right].
    \end{align*}
    Using Young's inequality we have 
    \begin{align*}
               \mathbb{E} \left[\| e_{t+1} \|^2\right] &\leq (1+\omega)^2q^2\left( 1+a\right) \mathbb{E}\left[\| e_{t}  \|^2\right] + (1+\omega)^2q^2\left( 1+\frac{1}{a} \right) \mathbb{E}\left[\| g_{t}  \|^2\right].
    \end{align*}
    We need to satisfy the following condition:
    \begin{align*}
        (1+\omega)^2q^2(1+a)\leq (1+\omega)q.
    \end{align*}
    It holds for $0\leq\omega,$ $0\leq q<1$, $(1+\omega)q<1$ and $a = \frac{1}{(1+\omega)q} - 1$.  Using this parameters we have 
        \begin{align*}
               \mathbb{E} \left[\| e_{t+1} \|^2\right] &\leq (1+\omega)q \mathbb{E}\left[\| e_{t}  \|^2 \right]+ (1+\omega)^2q^2\left( \frac{1}{1-(1+\omega)q} \right) \| g_{t}  \|^2\\
                &\leq (1+\omega)q \mathbb{E}\left[\| e_{t}  \|^2 \right]+  \frac{(1+\omega)^2q^2}{1-(1+\omega)q} \mathbb{E} \left[ \| g_{t}  \|^2\right].
    \end{align*}
    Unrolling this recursion allows us to obtain
        \begin{align*}
               \mathbb{E} \left[\| e_{t+1} \|^2\right] 
               &\leq \left( (1+\omega)q \right)^t \mathbb{E}\left[\| e_{1}  \|^2 \right]+ \sum^{T}_{t=1}  \left( (1+\omega)q \right)^t \frac{(1+\omega)^2q^2}{1-(1+\omega)q} G^2\\
                              &\leq \left( (1+\omega)q \right)^t \mathbb{E}\left[\| e_{1}  \|^2 \right]+ \frac{1}{1- (1+\omega)q}\frac{(1+\omega)^2q^2}{1-(1+\omega)q} G^2\\
                          &   \leq \frac{(1+\omega)^2q^2}{\left(1-(1+\omega)q\right)^2} G^2,
    \end{align*}
    last inequality holds because $e_1 = 0.$
\end{proof}

\begin{Lemma}
    Under Assumptions~\ref{ass:quant}-\ref{ass:var}, for all iterates of Algorithm \ref{alg:microAdamW} we have
    \begin{align*}
        \mathbb{E}\left[\left\| \change{\tilde{g}_t}\right\|^2\right] \leq 4(1+q^2)\left(  1+\frac{(1+\omega)^2q^2}{(1-(1+\omega)q)^2} \right) G^2.
    \end{align*}
    \begin{proof}
        \begin{align*}
                    \mathbb{E}\left[\left\| \change{\tilde{g}_t}\right\|^2\right] & = \mathbb{E}\left[\left\| \mathcal{C}\left( g_t + e_t \right)\right\|^2\right]\\
                    &=\mathbb{E}\left[\left\| \mathcal{C}\left( g_t + e_t \right)-\left( g_t + e_t \right)+\left( g_t + e_t \right)\right\|^2\right]\\
                    &\leq 2\mathbb{E}\left[\left\| \mathcal{C}\left( g_t + e_t \right)-\left( g_t + e_t \right)\right\|^2\right]+2\mathbb{E}\left[\left\|  g_t + e_t \right\|^2\right]\\
                    &\leq 2(1+q^2)\mathbb{E}\left[\left\|  g_t + e_t \right\|^2\right].
        \end{align*}
        Using Lemma~\ref{lemma:e_t} we obtain
        \begin{align*}
                            \mathbb{E}\left[\left\| \change{\tilde{g}_t}\right\|^2\right]
                                        &\leq 4(1+q^2)\left(1+\frac{(1+\omega)^2q^2}{(1-(1+\omega)q)^2}\right)G^2.
        \end{align*}

    \end{proof}
\end{Lemma}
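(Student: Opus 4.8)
The plan is to reduce $\|\tilde g_t\|^2 = \|\mathcal C(g_t+e_t)\|^2$ to a bound on $\|g_t+e_t\|^2$ via contractivity of $\mathcal C$, and then to control $\|g_t+e_t\|^2$ by the uniform gradient bound together with the error-accumulator estimate already established in Lemma~\ref{lemma:e_t}.

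First I would write $\tilde g_t=\mathcal C(g_t+e_t)$, insert $\pm(g_t+e_t)$ inside the norm, and apply Young's inequality in the form $\|a+b\|^2\le 2\|a\|^2+2\|b\|^2$ to obtain $\|\tilde g_t\|^2\le 2\|\mathcal C(g_t+e_t)-(g_t+e_t)\|^2+2\|g_t+e_t\|^2$. Assumption~\ref{ass:quant} bounds the first term by $2q^2\|g_t+e_t\|^2$, so altogether $\|\tilde g_t\|^2\le 2(1+q^2)\|g_t+e_t\|^2$; this is the short chain of inequalities one expects.

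Next I would take expectations and split $\mathbb E\|g_t+e_t\|^2\le 2\mathbb E\|g_t\|^2+2\mathbb E\|e_t\|^2$ by Young's inequality once more. The first summand is at most $G^2$ by Assumption~\ref{ass:boundgrad}, and the second is at most $\tfrac{(1+\omega)^2q^2}{(1-(1+\omega)q)^2}G^2$ by Lemma~\ref{lemma:e_t}. Substituting back gives $\mathbb E\|\tilde g_t\|^2\le 4(1+q^2)\bigl(1+\tfrac{(1+\omega)^2q^2}{(1-(1+\omega)q)^2}\bigr)G^2$, which is exactly the claimed bound.

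There is no genuine obstacle: the only nontrivial input is Lemma~\ref{lemma:e_t}, which we may assume and which is itself proved by unrolling the error recursion as a geometric series under the composition constraint $(1+\omega)q<1$ (choosing $a=\tfrac1{(1+\omega)q}-1$ so that $(1+\omega)^2q^2(1+a)=(1+\omega)q$). The only point requiring a little care is the bookkeeping of the constant: each of the two uses of Young's inequality contributes a factor $2$, so they must combine to exactly $4$, and one should keep $1+q^2$ explicit rather than loosely bounding it by $2$, so as to match the stated constant.
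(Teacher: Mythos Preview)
Your proposal is correct and matches the paper's proof essentially line for line: the same add-and-subtract of $g_t+e_t$, the same $2(1+q^2)$ bound from contractivity, and the same second application of Young's inequality together with Assumption~\ref{ass:boundgrad} and Lemma~\ref{lemma:e_t} to reach the final constant $4(1+q^2)$. The only difference is that the paper leaves the last splitting step implicit, whereas you spell it out.
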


\begin{Lemma}[From paper: \citet{10480574}]
    Let us consider the update rule:
    $$\theta_{t+1} = (1-\eta\lambda)\theta_{t} - \eta_t \frac{\change{m_t}}{\sqrt{\change{v_t}+\varepsilon}}.$$

For brevity, we denote $\widehat{v}_t = \sqrt{v_t+\varepsilon}$. Also we define
$$
u_t:=m_t+\lambda \theta_t \otimes \widehat{v}_t,
$$
where $\otimes$ denotes element-wise product. 
Moreover, we also define $\widetilde{f}\left(\theta_t\right)$ as follows:
$$
\widetilde{f}\left(\theta_t\right)=f(\theta_t)+\lambda_t\|\theta_t\|_{\widehat{v}_t}^2
$$
where $\lambda_t=\frac{\lambda}{2} \sum_{i=1}^t\left(\frac{1-q}{2}\right)^i$ \change{for $t>0$}, $\lambda_0=0$ with $0<q<1$ and $\|\theta_t\|_{\hat{v}_t} = \sqrt{\left\langle \theta_t, \change{\hat{v}_t}\otimes \theta_t \right\rangle}$. Also let $c_1 \leq \|\widehat{v}_t\|_{\infty}<c_2$, then iterates of Algorithm~\ref{alg:microAdamW} satisfy

\begin{align*}
    \widetilde{f}\left(\theta_t\right) &\leq \widetilde{f}\left(\theta_{t-1}\right)+\frac{\eta_t}{2 c_1}\left\|\nabla f\left(\theta_{t-1}\right)-m_{t-1}\right\|^2\\
&-\frac{\eta_t}{2 c_2}\left\|\nabla \widetilde{f}\left(\theta_{t-1}\right)\right\|^2-\frac{\eta_t}{4 c_2}\left\|u_{t-1}\right\|^2.
\end{align*}
\end{Lemma}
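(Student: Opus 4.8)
The plan is to read the \method{}W update as a preconditioned, momentum-corrected descent step on the \emph{time-varying} regularized objective $\widetilde f(\theta)=f(\theta)+\lambda_t\|\theta\|_{\widehat v_t}^2$, and to show that the geometric schedule $\lambda_t=\tfrac\lambda2\sum_{i=1}^t(\tfrac{1-q}2)^i$ is calibrated so that the terms generated by the drifting preconditioner $\widehat v_t$ and by the mismatch between the decoupled weight-decay $\lambda$ and the effective weight $2\lambda_t$ are reabsorbed step by step. The single algebraic identity that drives everything is obtained by multiplying $\theta_t=(1-\eta_t\lambda)\theta_{t-1}-\eta_t\,m_{t-1}/\widehat v_{t-1}$ coordinate-wise by $\widehat v_{t-1}$: this gives $\widehat v_{t-1}\otimes(\theta_t-\theta_{t-1})=-\eta_t u_{t-1}$, so $\theta_t-\theta_{t-1}$ is the preconditioned direction $-\eta_t u_{t-1}/\widehat v_{t-1}$ and $\langle u_{t-1},\theta_t-\theta_{t-1}\rangle=-\tfrac1{\eta_t}\|\theta_t-\theta_{t-1}\|^2_{\widehat v_{t-1}}$. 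Combined with the coordinate-wise bounds $c_1\le(\widehat v_t)_i<c_2$ (how I read $c_1\le\|\widehat v_t\|_\infty<c_2$; in the worst case $c_1=\sqrt\varepsilon$), every quantity linear in the step becomes a controlled multiple of $\|u_{t-1}\|^2$ or $\|\theta_t-\theta_{t-1}\|^2$.

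\textbf{Main steps.} First I would apply $L$-smoothness of $f$ along the step, $f(\theta_t)\le f(\theta_{t-1})+\langle\nabla f(\theta_{t-1}),\theta_t-\theta_{t-1}\rangle+\tfrac L2\|\theta_t-\theta_{t-1}\|^2$, and split $\nabla f(\theta_{t-1})=m_{t-1}+(\nabla f(\theta_{t-1})-m_{t-1})$; a Young's-inequality estimate on $\langle\nabla f(\theta_{t-1})-m_{t-1},\theta_t-\theta_{t-1}\rangle$ produces the first term $\tfrac{\eta_t}{2c_1}\|\nabla f(\theta_{t-1})-m_{t-1}\|^2$ of the claim. Second, I would expand the regularizer increment $\lambda_t\|\theta_t\|_{\widehat v_t}^2-\lambda_{t-1}\|\theta_{t-1}\|_{\widehat v_{t-1}}^2$ by writing $\theta_t=\theta_{t-1}+(\theta_t-\theta_{t-1})$ inside the $\widehat v_t$-norm and separating the metric change $\widehat v_{t-1}\to\widehat v_t$; the linear cross term $2\lambda_t\langle\widehat v_t\otimes\theta_{t-1},\theta_t-\theta_{t-1}\rangle$, together with $\langle m_{t-1},\theta_t-\theta_{t-1}\rangle$ (using $m_{t-1}=u_{t-1}-\lambda\,\theta_{t-1}\otimes\widehat v_{t-1}$ from the definition of $u_{t-1}$), reassembles $\langle\nabla\widetilde f(\theta_{t-1}),\theta_t-\theta_{t-1}\rangle$ up to a residual proportional to $\lambda-2\lambda_{t-1}$ and to $\widehat v_t-\widehat v_{t-1}$. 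Third, I would convert $\langle\nabla\widetilde f(\theta_{t-1}),\theta_t-\theta_{t-1}\rangle$ and $\langle u_{t-1},\theta_t-\theta_{t-1}\rangle$ into $-\|\nabla\widetilde f(\theta_{t-1})\|^2$- and $-\|u_{t-1}\|^2$-type terms via the polarization $\langle a,b\rangle=\tfrac12\|a\|^2+\tfrac12\|b\|^2-\tfrac12\|a-b\|^2$ in the $\widehat v_{t-1}^{-1}$ inner product, bounding norms by $c_1,c_2$; the $\tfrac L2\|\theta_t-\theta_{t-1}\|^2$ term and the leftover positive quadratics are then dominated by the negative $\|u_{t-1}\|^2$ budget — which is where the standing step-size restriction ($\eta_t$ small relative to $c_1^2/(Lc_2)$) is used — leaving precisely $-\tfrac{\eta_t}{2c_2}\|\nabla\widetilde f(\theta_{t-1})\|^2-\tfrac{\eta_t}{4c_2}\|u_{t-1}\|^2$.

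\textbf{Main obstacle.} The delicate part is controlling, \emph{per step}, the two residuals above: the metric-change term $\propto\lambda_{t-1}\langle\theta_{t-1},(\widehat v_t-\widehat v_{t-1})\otimes\theta_{t-1}\rangle$ and the weight-decay mismatch $\propto(\lambda-2\lambda_{t-1})\,\widehat v_{t-1}\otimes\theta_{t-1}$. Since Algorithm~\ref{alg:microAdamW} applies no AMSGrad clipping, $\widehat v_t$ is non-monotone, so these residuals have no definite sign and cannot simply be telescoped away: they must be absorbed into the \emph{increase} of the regularizer $\lambda_t\|\theta_t\|^2_{\widehat v_t}$, which is exactly what forces the contraction factor $\tfrac{1-q}2<\tfrac12$ in the schedule and ties the estimate to the one-sided bound $\widehat v_t\ge\sqrt{\beta_2}\,\widehat v_{t-1}$ (from $v_t\ge\beta_2 v_{t-1}$) together with the two-sided bound $c_1\le(\widehat v_t)_i<c_2$. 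Verifying that the constants come out exactly as $\tfrac{\eta_t}{2c_1},\tfrac{\eta_t}{2c_2},\tfrac{\eta_t}{4c_2}$ is routine but lengthy; I would carry out this residual estimate carefully and quote the remaining manipulations from \citet{10480574}.
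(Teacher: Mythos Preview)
The paper does not give its own proof of this lemma: it is quoted verbatim as ``From paper: \citet{10480574}'' and used as a black box in the subsequent weight-decay theorem, so there is no in-paper argument to compare against. What you have written is therefore not a reconstruction of the authors' reasoning but an attempt to reprove the cited result.

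As a sketch, your approach is on the right track. The key identity $\widehat v_{t-1}\otimes(\theta_t-\theta_{t-1})=-\eta_t u_{t-1}$ is correct and is indeed the pivot that turns the decoupled weight-decay step into a preconditioned descent on $\widetilde f$; the split $\nabla f(\theta_{t-1})=m_{t-1}+(\nabla f(\theta_{t-1})-m_{t-1})$ together with Young's inequality is the standard way to generate the $\tfrac{\eta_t}{2c_1}\|\nabla f(\theta_{t-1})-m_{t-1}\|^2$ term. Where your outline becomes thin is exactly where you flag it: the per-step absorption of the metric-drift term $\lambda_{t-1}\langle\theta_{t-1},(\widehat v_t-\widehat v_{t-1})\otimes\theta_{t-1}\rangle$ and of the mismatch $(\lambda-2\lambda_{t-1})$ into the increment $\lambda_t-\lambda_{t-1}$ is not actually carried out, and it is not obvious that the one-sided bound $\widehat v_t\ge\sqrt{\beta_2}\,\widehat v_{t-1}$ alone suffices without an additional ratio bound of the type stated in the companion lemma (the one asserting $\|(v_t+\epsilon)^p/(v_{t+1}+\epsilon)^p\|_\infty\in[1-\mu,1+\mu]$). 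Your closing sentence effectively defers this step back to \citet{10480574}, which is the same move the present paper makes; if the goal is an independent proof, that residual estimate is the part that still needs to be written out in full.
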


\begin{Lemma}[From paper: \citet{10480574}]
    Assume that $c_{s,\infty} \leq \|\tilde{g}_t\|_{\infty} \leq c_{\infty}, $ then we have 
    $$
    \|m_t\|_{\infty} \leq c_{\infty}, \quad \|v_t+\epsilon\|_{\infty} c^2_{\infty} + \epsilon, \quad \left\| \frac{(v_t+\epsilon)^p}{(v_{t+1}+\epsilon)^p} \right\|_{\infty} \in [1-\mu, 1+\mu] \left(\forall p \in (0,1)\right),
    $$
    where $\mu = \frac{\beta_2 c^2_{\infty}}{c^2_{s,\infty}+\epsilon}$.
\end{Lemma}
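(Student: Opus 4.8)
This lemma is imported essentially verbatim from \citet{10480574}, so the most economical route is to cite it directly; for completeness I would give the following short, self-contained derivation, arguing coordinate by coordinate and reading the hypothesis as $c_{s,\infty}\le|\tilde g_{t,i}|\le c_\infty$ for all $i,t$. \textbf{The two norm bounds.} Unrolling the exponential moving averages of Algorithm~\ref{alg:microAdamW} from $m_0=v_0=0$ gives $m_{t,i}=(1-\beta_1)\sum_{\tau=1}^{t}\beta_1^{t-\tau}\tilde g_{\tau,i}$ and $v_{t,i}=(1-\beta_2)\sum_{\tau=1}^{t}\beta_2^{t-\tau}\tilde g_{\tau,i}^{2}$. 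Since $(1-\beta_1)\sum_{\tau=1}^{t}\beta_1^{t-\tau}=1-\beta_1^{t}\le 1$ (and the same with $\beta_2$), the bound $|\tilde g_{\tau,i}|\le c_\infty$ immediately gives $|m_{t,i}|\le c_\infty$ and $c_{s,\infty}^{2}(1-\beta_2^{t})\le v_{t,i}\le c_\infty^{2}$; maximizing over $i$ yields $\|m_t\|_\infty\le c_\infty$ and $\|v_t+\epsilon\|_\infty\le c_\infty^{2}+\epsilon$, while the same computation records $v_{t,i}+\epsilon\ge c_{s,\infty}^{2}(1-\beta_2^{t})+\epsilon$, which bounds below the denominators that appear next.

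\textbf{The ratio bound.} From $v_{t+1,i}+\epsilon=\beta_2(v_{t,i}+\epsilon)+(1-\beta_2)(\tilde g_{t+1,i}^{2}+\epsilon)$ we get $v_{t+1,i}-v_{t,i}=(1-\beta_2)(\tilde g_{t+1,i}^{2}-v_{t,i})$, so $$\left|\frac{v_{t,i}+\epsilon}{v_{t+1,i}+\epsilon}-1\right|=\frac{(1-\beta_2)\,\bigl|\tilde g_{t+1,i}^{2}-v_{t,i}\bigr|}{v_{t+1,i}+\epsilon}\;\le\;\mu,$$ where the numerator is controlled by $0\le v_{t,i}\le c_\infty^{2}$ and $c_{s,\infty}^{2}\le\tilde g_{t+1,i}^{2}\le c_\infty^{2}$ and the denominator by the lower bound just derived, $\mu$ being the constant in the statement. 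Finally, for $p\in(0,1)$ the map $x\mapsto x^{p}$ is increasing with $x^{p}\ge x$ on $(0,1]$ and $x^{p}\le x$ on $[1,\infty)$, hence $\tfrac{v_{t,i}+\epsilon}{v_{t+1,i}+\epsilon}\in[1-\mu,1+\mu]$ implies $\bigl(\tfrac{v_{t,i}+\epsilon}{v_{t+1,i}+\epsilon}\bigr)^{p}\in[1-\mu,1+\mu]$ as well; taking the maximum over $i$ gives the claimed enclosure of $\bigl\|(v_t+\epsilon)^{p}/(v_{t+1}+\epsilon)^{p}\bigr\|_\infty$.

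\textbf{Main obstacle.} The inequalities themselves are routine; the delicate point is pinning down the exact constant $\mu$. One must (i) commit to the coordinate-wise reading of the lower bound on $\tilde g_t$, since $\|\tilde g_t\|_\infty\ge c_{s,\infty}$ alone does not bound $v_{t,i}$ from below; (ii) either absorb or separately treat the transient $\beta_2^{t}$ factor in $v_{t,i}+\epsilon$ for the first few iterates; and (iii) keep track of the precise factors in the second-moment recursion to reproduce exactly the normalization $\mu=\beta_2 c_\infty^{2}/(c_{s,\infty}^{2}+\epsilon)$ recorded in \citet{10480574}. Since the subsequent weight-decay analysis only consumes the three displayed bounds — they are precisely the quantities playing the roles of $c_1$, $c_2$ and $\mu$ in the descent lemma quoted just above — importing the statement as-is is enough for what follows.
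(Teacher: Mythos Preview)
The paper does not prove this lemma at all: it is quoted verbatim from \citet{10480574} and used as a black box in the weight-decay analysis. So there is no ``paper's own proof'' to compare against; your decision to cite it and add a short self-contained derivation already goes beyond what the paper does.

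Your argument for the two $\ell_\infty$ bounds and the passage from the ratio $\tfrac{v_{t,i}+\epsilon}{v_{t+1,i}+\epsilon}$ to its $p$-th power is clean and correct. The one real issue you correctly flag is the constant: your computation produces $(1-\beta_2)$ in the numerator of the deviation bound, not $\beta_2$, so you do not actually reproduce the stated $\mu=\beta_2 c_\infty^{2}/(c_{s,\infty}^{2}+\epsilon)$. This is almost certainly a convention mismatch --- the source \citet{10480574} uses the swapped parametrization $v_t=(1-\beta_2)v_{t-1}+\beta_2\tilde g_t^2$, whereas Algorithm~\ref{alg:microAdamW} here uses $v_t=\beta_2 v_{t-1}+(1-\beta_2)\tilde g_t^2$. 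Since the downstream theorem in this appendix only consumes generic constants $c_1,c_2,\mu$, either normalization works for the argument, but if you include the derivation you should state the constant that your recursion actually yields rather than the one copied from the other paper's convention.
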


\begin{Theorem}
	Let Assumptions ~\ref{ass:quant} to~\ref{ass:var} hold. 
    Define $\Psi_t = \widetilde{f}(\theta_t) + \frac{\eta_t}{2c_1\beta_1}\mathbb{E}\left[ \| m_{t} - \nabla f(\theta_{t}) \|^2 \right].$ With $\eta_t = \eta \leq \frac{\beta_1c_1}{2L}\sqrt{\frac{c_1}{2c_2}}$, Algorithm~\ref{alg:microAdamW} satisfies
 \begin{align*}
        \frac{1}{T}\sum^{T}_{t=1}  \left\|\nabla \widetilde{f}\left(\theta_{t-1}\right)\right\|^2  &\leq \frac{2c_2}{\eta T}  \Psi^0 \\
        &+   \frac{c_2}{c_1}           \left(9\left(1 +\frac{(1+\omega)^2q^2}{\left(1-(1+\omega)q\right)^2}  \right) G^2 + 6G^2 + 3\sigma^2\right). 
    \end{align*}
\end{Theorem}
\begin{proof}
    We start from main lemma and lemma for momentum, summing inequalities together we obtain
    \begin{align*}
            \widetilde{f}\left(\theta_t\right) + V\mathbb{E}\left[ \| m_t - \nabla f(\theta_t) \|^2 \right]    &\leq \widetilde{f}\left(\theta_{t-1}\right)+\frac{\eta_t}{2 c_1}\left\|\nabla f\left(\theta_{t-1}\right)-m_{t-1}\right\|^2\\
&-\frac{\eta_t}{2 c_2}\left\|\nabla \widetilde{f}\left(\theta_{t-1}\right)\right\|^2-\frac{\eta_t}{4 c_2}\left\|u_{t-1}\right\|^2\\
&+  V\left(1-\frac{\beta_1}{2}\right) \mathbb{E}\left[ \| m_{t-1} - \nabla f(\theta_{t-1}) \|^2  \right]\\
    &+ V\frac{2}{\beta_1}L^2\mathbb{E}\|\theta_{t-1} - \theta_t\|^2+V\beta_1\mathbb{E}\left\| \nabla f(\theta_t) - \tilde{g}_t \right\|^2. 
    \end{align*}
    Using previous lemmas we have 
        \begin{align*}
            \widetilde{f}\left(\theta_t\right) + V\mathbb{E}\left[ \| m_t - \nabla f(\theta_t) \|^2 \right]    &\leq \widetilde{f}\left(\theta_{t-1}\right)+\frac{\eta_t}{2 c_1}\left\|\nabla f\left(\theta_{t-1}\right)-m_{t-1}\right\|^2\\
&-\frac{\eta_t}{2 c_2}\left\|\nabla \widetilde{f}\left(\theta_{t-1}\right)\right\|^2-\frac{\eta_t}{4 c_2}\left\|u_{t-1}\right\|^2\\
&+  V\left(1-\frac{\beta_1}{2}\right) \mathbb{E}\left[ \| m_{t-1} - \nabla f(\theta_{t-1}) \|^2  \right]\\
    &+ V\frac{2}{\beta_1}L^2\mathbb{E}\|\theta_{t-1} - \theta_t\|^2\\
    &+V\beta_1                 \left(9\left(1 +\frac{(1+\omega)^2q^2}{\left(1-(1+\omega)q\right)^2}  \right) G^2 + 6G^2 + 3\sigma^2\right). 
    \end{align*}
    Using $\theta_{t} - \theta_{t-1} = -\eta_t\frac{u_{t-1}}{\widehat{v}_{t-1}}$ we have 
            \begin{align*}
            \widetilde{f}\left(\theta_t\right) + V\mathbb{E}\left[ \| m_t - \nabla f(\theta_t) \|^2 \right]    &\leq \widetilde{f}\left(\theta_{t-1}\right)-\left(\frac{\eta_t}{4 c_2} - \frac{2VL^2\eta_t^2}{\beta_1 c_1^2} \right) \left\|u_{t-1}\right\|^2\\
&-\frac{\eta_t}{2 c_2}\left\|\nabla \widetilde{f}\left(\theta_{t-1}\right)\right\|^2\\
&+  \left(V\left(1-\frac{\beta_1}{2}\right) +\frac{\eta_t}{2 c_1} \right) \mathbb{E}\left[ \| m_{t-1} - \nabla f(\theta_{t-1}) \|^2  \right]\\
    &+V\beta_1                 \left(9\left(1 +\frac{(1+\omega)^2q^2}{\left(1-(1+\omega)q\right)^2}  \right) G^2 + 6G^2 + 3\sigma^2\right). 
    \end{align*}
    Using $V = \frac{\eta_t}{2c_1\beta_1}$ and $\Psi_t = \widetilde{f}(\theta_t) + \frac{\eta_t}{2c_1\beta_1}\mathbb{E}\left[ \| m_{t} - \nabla f(\theta_{t}) \|^2 \right]$ we have
  \begin{align*}
\Psi_t   &\leq \Psi_{t-1}-\frac{\eta_t}{4 c_2}\left(1 - \frac{4c_2L^2\eta_t^2}{\beta_1^2 c_1^3} \right) \left\|u_{t-1}\right\|^2\\
&-\frac{\eta_t}{2 c_2}\left\|\nabla \widetilde{f}\left(\theta_{t-1}\right)\right\|^2\\
    &+       \frac{\eta_t}{2c_1}           \left(9\left(1 +\frac{(1+\omega)^2q^2}{\left(1-(1+\omega)q\right)^2}  \right) G^2 + 6G^2 + 3\sigma^2\right). 
    \end{align*}
    Using $\eta_t = \eta \leq \frac{\beta_1c_1}{2L}\sqrt{\frac{c_1}{2c_2}}$ we have 
      \begin{align*}
\Psi_t   &\leq \Psi_{t-1}-\frac{\eta_t}{2 c_2}\left\|\nabla \widetilde{f}\left(\theta_{t-1}\right)\right\|^2\\
    &+       \frac{\eta}{2c_1}           \left(9\left(1 +\frac{(1+\omega)^2q^2}{\left(1-(1+\omega)q\right)^2}  \right) G^2 + 6G^2 + 3\sigma^2\right). 
    \end{align*}
    Summing from $t=1$ to $T$ we have 
    \begin{align*}
        \frac{1}{T}\sum^{T}_{t=1}  \left\|\nabla \widetilde{f}\left(\theta_{t-1}\right)\right\|^2  &\leq \frac{2c_2}{\eta T}  \Psi^0 +   \frac{c_2}{c_1}           \left(9\left(1 +\frac{(1+\omega)^2q^2}{\left(1-(1+\omega)q\right)^2}  \right) G^2 + 6G^2 + 3\sigma^2\right). 
    \end{align*}
\end{proof}
\paragraph{Discussion.} This result is similar to the one from \citet{10480574} in the non-convex case, where the decay rate for the first term is $\mathcal{O}\left( \frac{1}{T} \right)$ and the second term is a non-vanishing $\mathcal{O}\left( \beta_1\frac{c_2}{c_1} \sigma^2  \right).$ In our result, the non-vanishing term is proportional to $\mathcal{O}\left(\frac{c_2}{c_1}\left(\sigma^2+G^2\right)\right).$ 

A key difference is that our term is not proportional to $\beta_1.$ It is important to note that $\beta_1$ typically takes a value close to $1$ in practical applications, meaning its influence on the bound is minimal. Therefore, even though our result does not directly involve $\beta_1$, the impact on the overall bound is not significantly different. 

Moreover, our bound includes an additional term proportional to $G^2,$ which represents the gradient norm squared. This makes the bound slightly worse compared to the result in \citet{10480574}. However, this degradation is only by a constant factor, which means that while the theoretical bound may be worse, the practical implications are often negligible. 

In summary, our result aligns closely with previous findings, with differences primarily in the constant factors and the presence of $G^2.$ Despite these differences, the practical performance remains largely unaffected, ensuring that the bound remains robust and applicable in a variety of scenarios.

\section{Error Feedback applied to GaLore}\label{appendix:EF-GaLore}

\subsection{Behaviour of the Error Feedback Mechanism}

The GaLore low-rank updates introduced by \citep{zhao2024galore} enable the compression of optimizer states by performing learning updates on a lower-dimensional subspace. In this approach, the optimizer receives gradients projected on a defined learning subspace. Theoretical convergence guarantees are provided under a ``stable rank'' assumption, where learning subspace is fixed during training. However, in practice, convergence is attained by occasionally updating the learning subspace and allowing full space learning to better align with the gradient trajectory during training.

Here, it is useful to draw an analogy with the TopK method, as the occasional updates of the learning subspace resembles working with a fixed mask for many steps. Using a fixed mask would result in discarding the same coordinates of the gradient at each step. Similarly, in the case of low-rank updates, components orthogonal to the same learning subspace are discarded at each step.

The systematic nature of the information discarded by compression carries significant implications for error feedback behavior. Over multiple steps, the error accumulates gradient components belonging to the orthogonal space of the same learning subspace. Consequently, by linearity, the error itself resides in the orthogonal space of this learning subspace. As a result, when the error is passed to the accumulator, its projection onto the learning space is effectively disregarded until it is potentially utilized at the specific step when the learning subspace is updated. Therefore, the behavior of error feedback in the case of low-rank updates is non-standard: it accumulates gradient components over numerous steps before unloading them all at once.

For a better understanding, we derive analytical evidence for the described behaviour by induction. Let $L$ be fixed learning subspace and assume that $e_{t-1} \in L^{\perp}$. Then, gradient passed to the optimizer at step $t$ is: $\mathcal{C}_{GaLore}(a_t) = proj_L(a_t) = proj_L(e_{t-1} + g_t) = proj_L(g_t)$ where error feedback is discarded. Thus, $e_t =  a_t - \mathcal{C}_{GaLore}(a_t) = e_{t-1} + g_t - proj_L(g_t) = e_{t-1} + proj_{L^{\perp}}(g_t) \in L^{\perp}$ which completes the induction.

Assume now that learning subspace is updated every $T$ steps, and denote $L_t$ the learning subspace at step $t$. Then, a similar induction leads to:
 $$e_t = \sum_{i = 1}^{t}  \underset{j=i}{\overset{t}{\circ}} proj_{L_j^{\perp}}(g_i) = \sum_{i = 1}^{t}  \underset{j=\lfloor\frac{i}{T}\rfloor}{\overset{\lfloor\frac{t}{T}\rfloor}{\circ}} proj_{L_{jT}^{\perp}}(g_i)$$
 $$a_{kT} = proj_{L_{kT}}(g_{kT} + e_{kT-1}) = proj_{L_{kT}}(g_{kT}) + \sum_{i = 1}^{kT-1}  proj_{L_{kT}} \circ( \underset{j=i}{\overset{t}{\circ}} proj_{L_j^{\perp}})(g_i)$$

\subsection{Consequences on Training}

Such behaviour of the error feedback mechanism results in the dominance of the error norm over the gradient norm. Before learning subspace updates, the error is the sum over past gradient components that belong to the orthogonal of the current learning subspaces. Since these components represent descent directions that were not used, they are not expected to compensate each other on average. Consequently, between learning subspace updates, the error norm is expected to grow linearly. Figure~\ref{fig:GaLore_memory_metrics} provides evidence of such linear growth of the error norm during fine-tuning of RoBERTa-base model on GLUE/MNLI task.

It implies that known analysis techniques~\citep{2018-alistarh, 2019-karimireddy} of convergence for the error feedback mechanism do not apply to GaLore. Indeed, such proofs rely on the assumption that the compression operator is contractive, as it allows the error to be bounded. Given a fixed vector, low-rank compression based on its singular value decomposition is a contraction operator. However, in our case, the compression is based on a previously-computed singular value decomposition and therefore may not be a contraction operator for newly computed gradients. The extreme case being when the gradient is orthogonal to the learning subspace, in which case the compression operator returns the null vector. Figure~\ref{fig:GaLore_memory_metrics} shows that during training the error norm is not on the same order of magnitude of the gradient norm.

\begin{figure}[!h]
\centering
\caption{\label{fig:GaLore_memory_metrics}Dynamics of the norm of the error compared to norm of the gradient (of output of the 3rd attention layer) during fine-tuning of RoBERTa-base model on GLUE/MNLI from surrogate GaLore with error feedback optimizer. We used hyperparameters from \citep{zhao2024galore}, i.e. batch size 16, learning rate 0.00001, projection update gap 200, rank 4 and GaLore scale 4.}
\includegraphics[width=\textwidth]{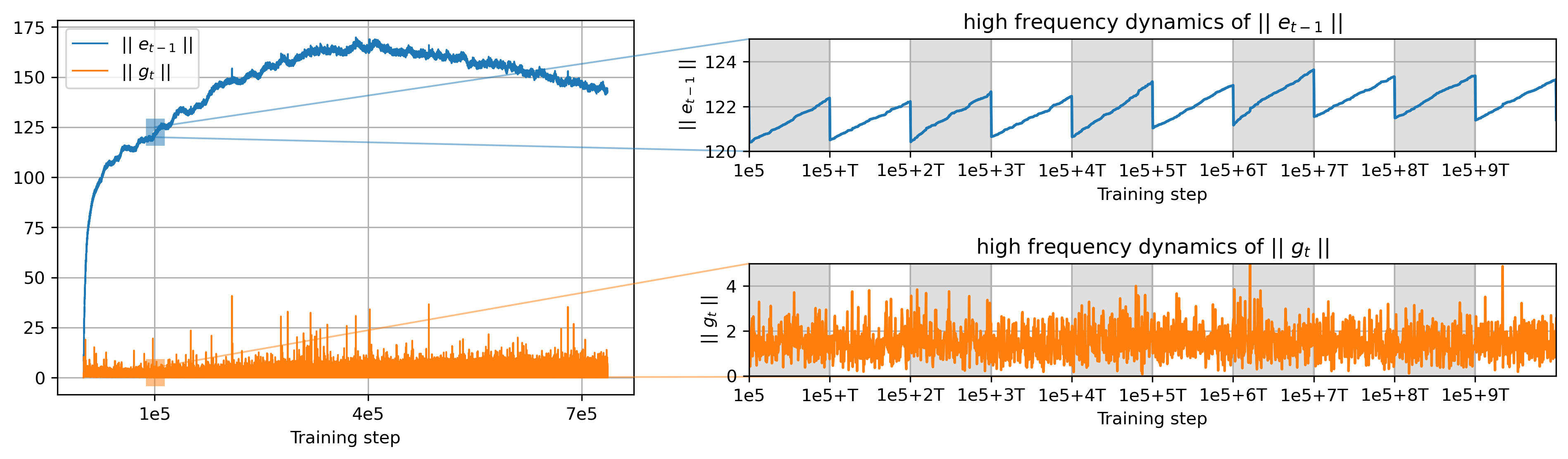}
\end{figure}

The dominance of the error over the gradient also has effects on space exploration, as the learning subspaces are computed from the singular value decomposition of the accumulator (i.e. the sum of the gradient and the error). Since the main components of the accumulator belong to the orthogonal of current learning subspaces, successive learning subspaces will tend to be orthogonal to each other. This allows errors to be effectively passed to the optimizer, but all at once which can introduce irregularities in the learning trajectory. However, it also implies that learning is performed on a learning subspace that is suboptimal in terms of the direction of the gradient, but this may help convergence by enforcing space exploration. See Figure~\ref{fig:2d-galore-rosenbrock} for examples of how induced orthogonality of successive learning subspaces affects the learning trajectory.

\begin{figure}[!h]
  \centering
  \caption{\label{fig:2d-galore-rosenbrock} Optimization trajectory for Adam, GaLore-Adam and GaLore-Adam-EF for ill-conditioned function $f(x,y)=cos(\frac{5 \pi}{4} x) + sin(\frac{7 \pi}{4} y)$ starting from $(x_0,y_0)=(-\frac{1}{4},\frac{1}{4})$ (on first row) and for Rosenbrock function starting from $(x_0,y_0)=(-\frac{1}{2},1)$ (on second row). }
  \begin{tabular}{ccc}
    \includegraphics[trim={1.7cm 0 0 0},clip,width=0.32\textwidth]{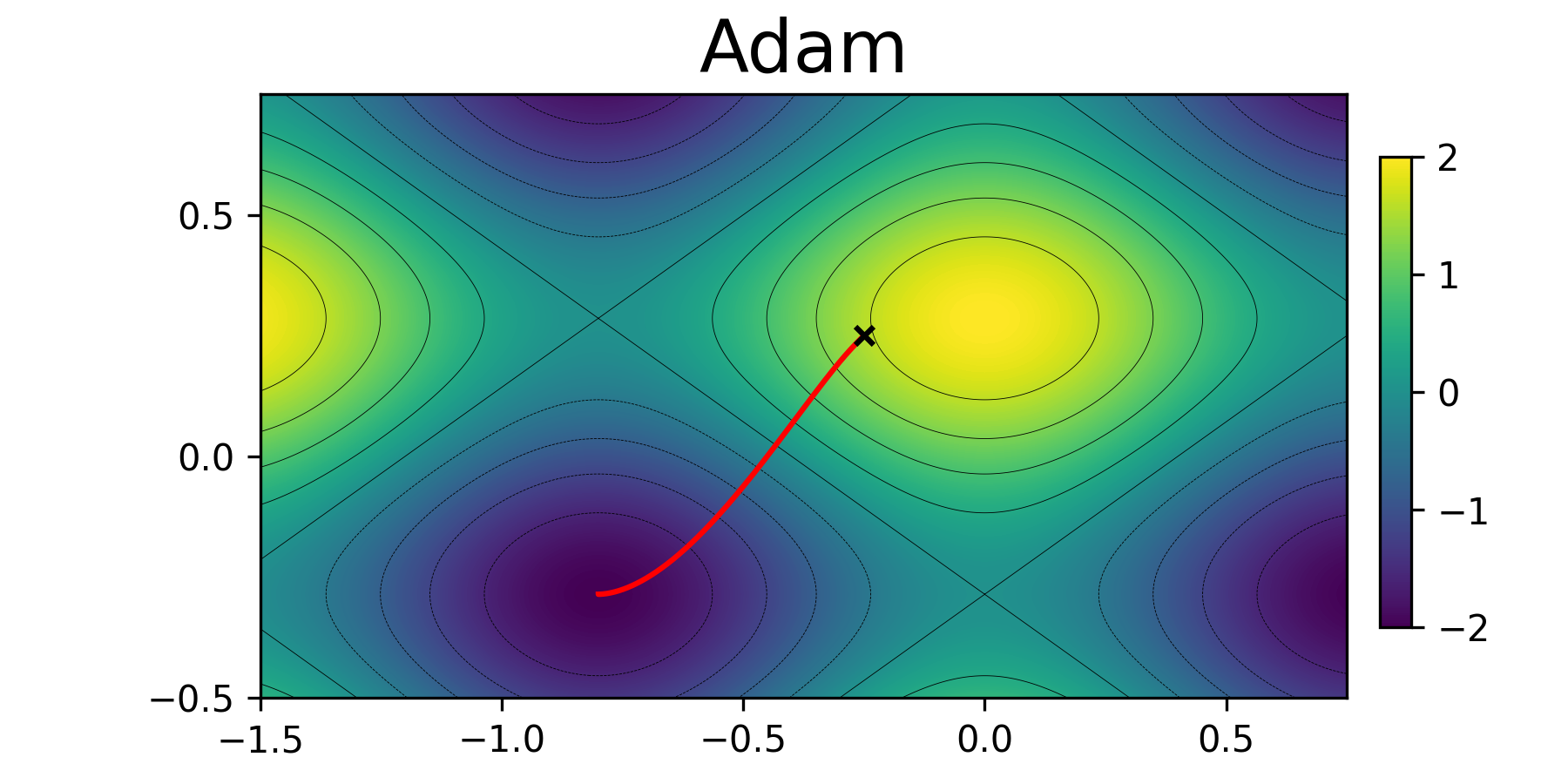} &
    \includegraphics[trim={1.7cm 0 0 0},clip,width=0.32\textwidth]{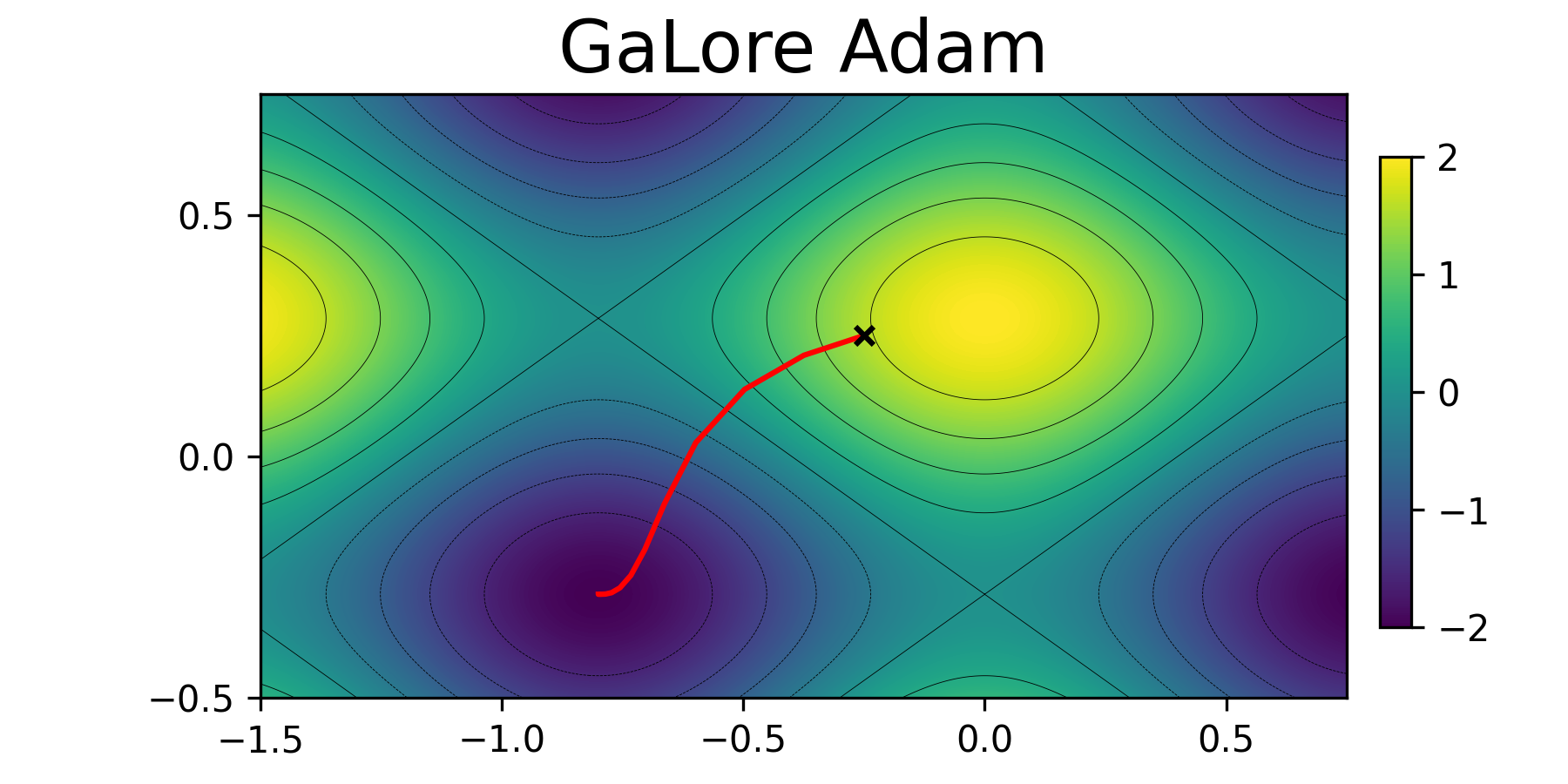} &
    \includegraphics[trim={1.7cm 0 0 0},clip,width=0.32\textwidth]{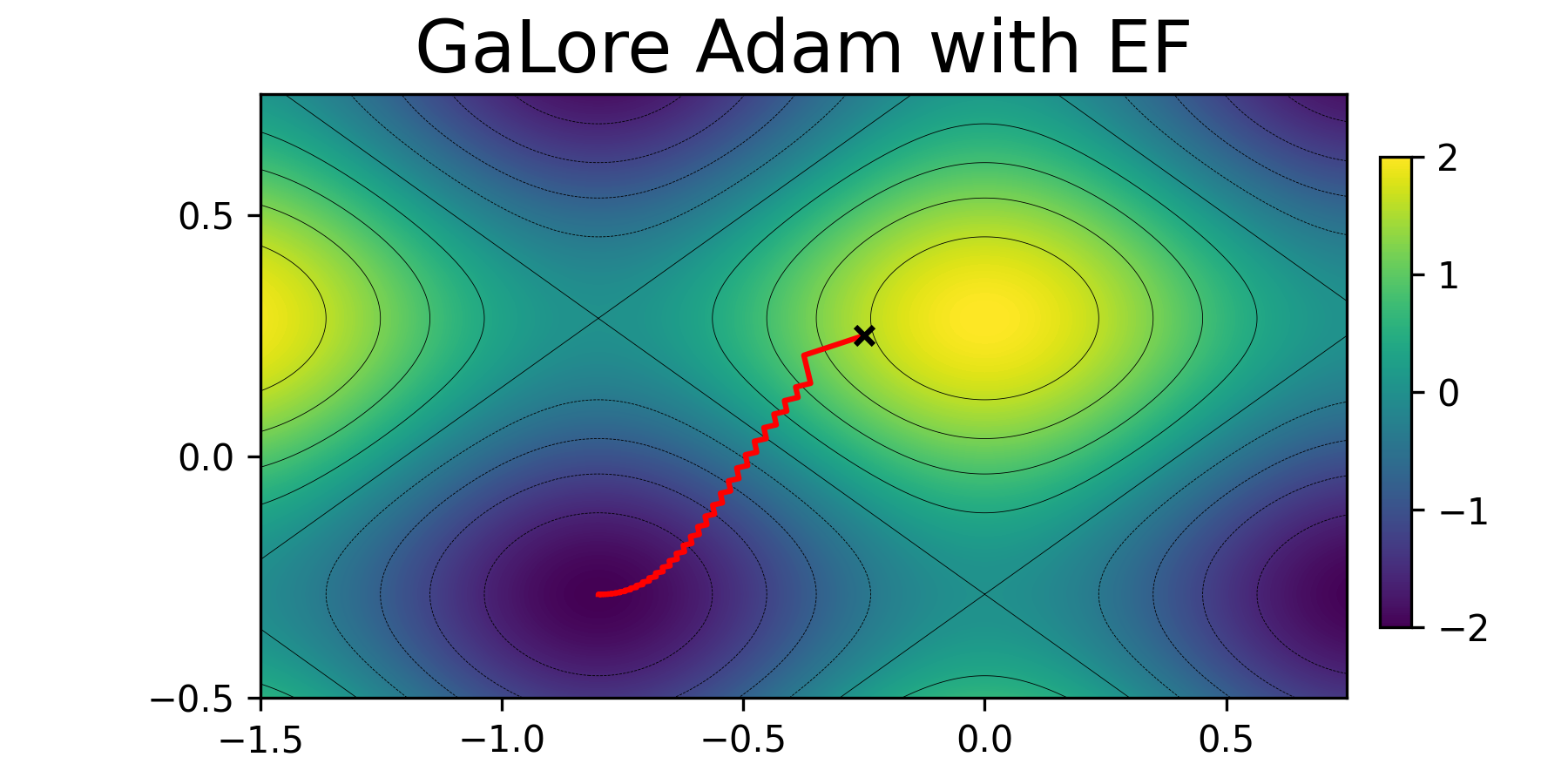} \\
    \includegraphics[trim={1.7cm 0 0 0},clip,width=0.32\textwidth]{NeurIPS24/2D_example/rosenbrock/Adam.png} &
    \includegraphics[trim={1.7cm 0 0 0},clip,width=0.32\textwidth]{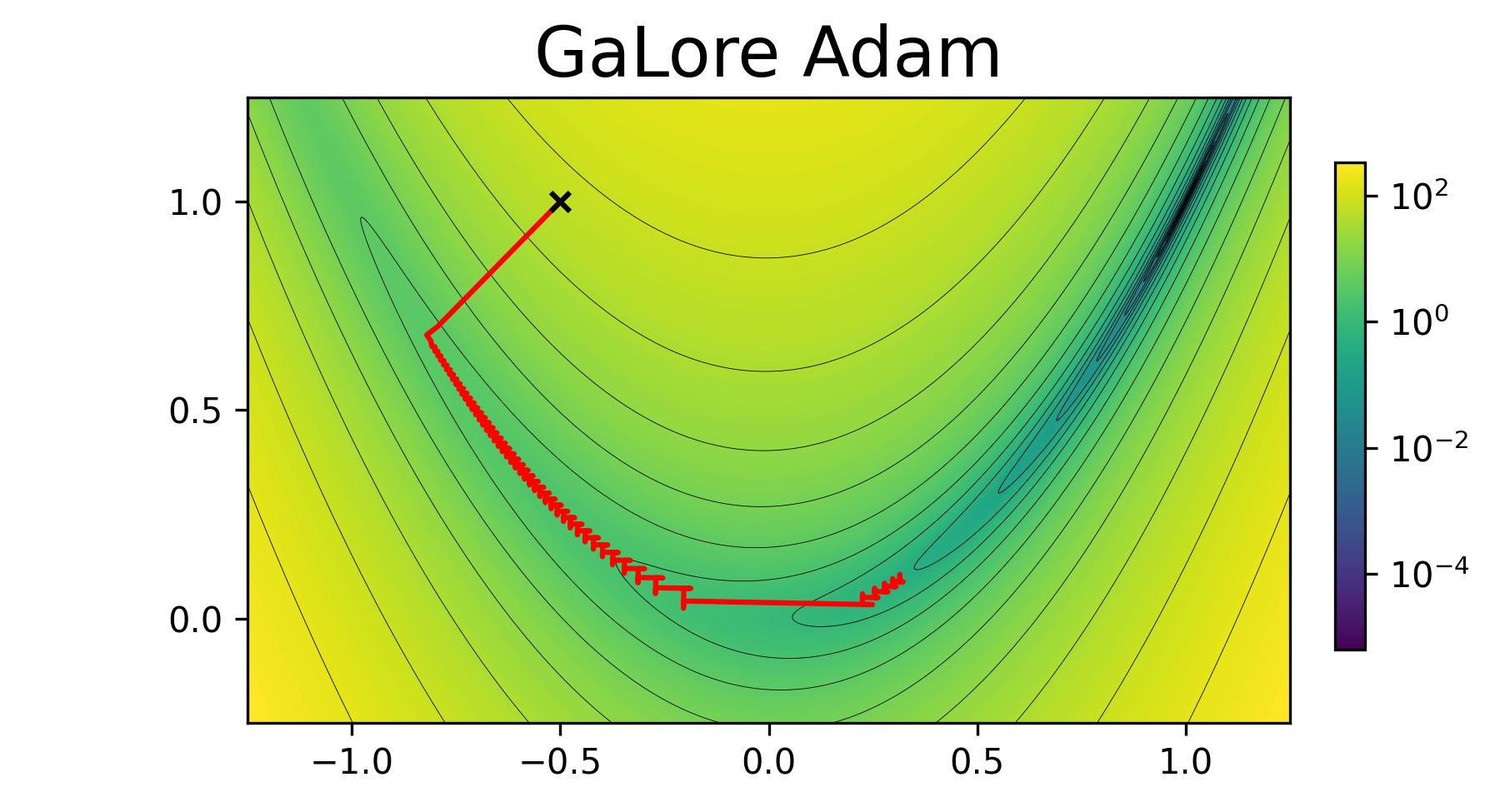} &
    \includegraphics[trim={1.7cm 0 0 0},clip,width=0.32\textwidth]{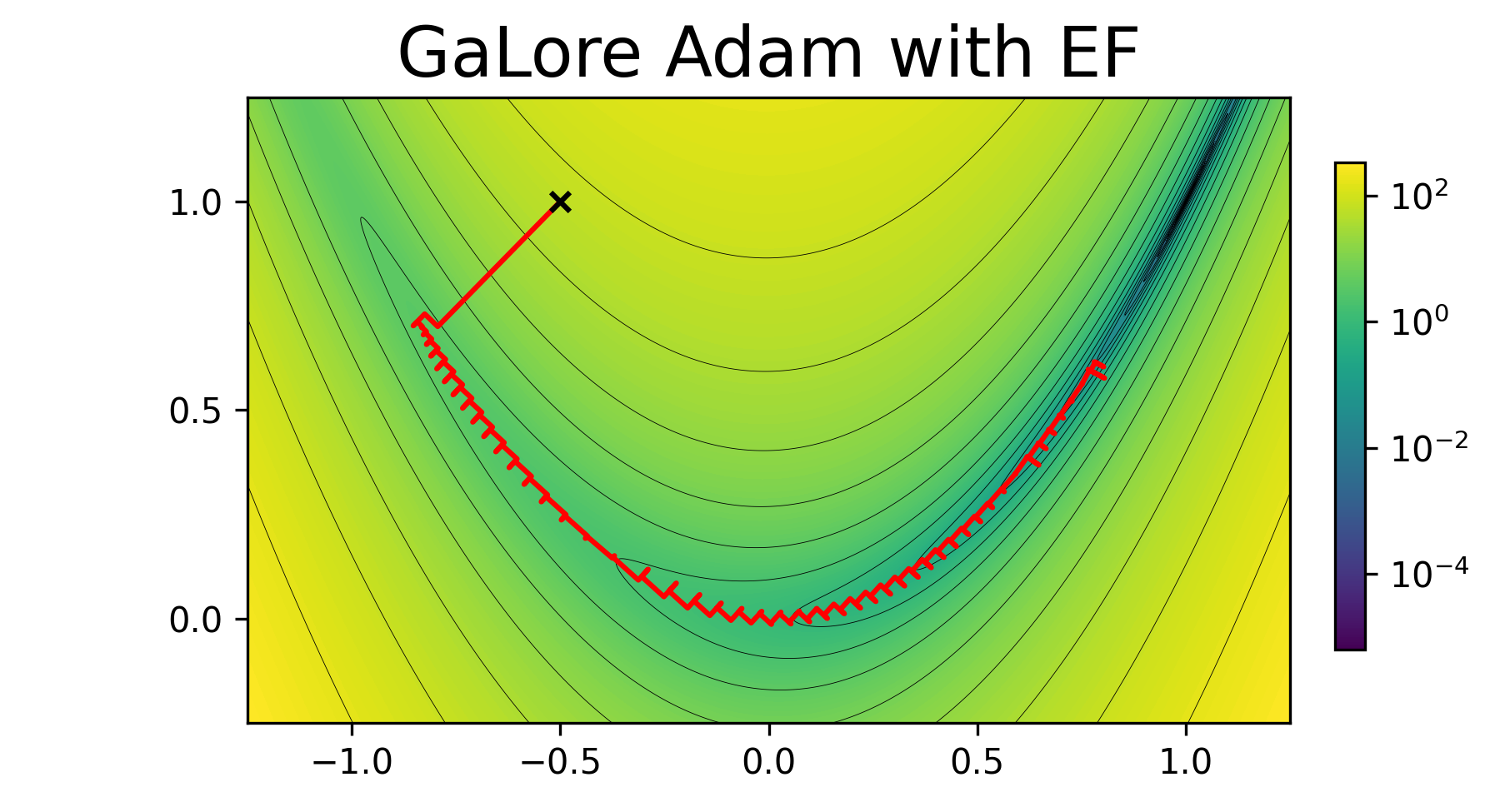} \\
  \end{tabular}
\end{figure}

\end{document}